\icmltitlerunning{Branches: Efficiently Seeking Optimal Sparse Decision Trees via AO*}
\begin{document}

\twocolumn[
\icmltitle{Branches: Efficiently Seeking Optimal Sparse Decision Trees via AO*}



\icmlsetsymbol{equal}{*}

\begin{icmlauthorlist}
\icmlauthor{Ayman Chaouki}{polytechnique}
\icmlauthor{Jesse Read}{polytechnique}
\icmlauthor{Albert Bifet}{waikato,telecom}\end{icmlauthorlist}

\icmlaffiliation{polytechnique}{LIX, Ecole Polytechnique, IP Paris}
\icmlaffiliation{waikato}{AI Institute, University of Waikato}
\icmlaffiliation{telecom}{LTCI, Télécom Paris, IP Paris}

\icmlcorrespondingauthor{Ayman Chaouki}{chaoukiayman2@gmail.com}

\icmlkeywords{Machine Learning, ICML}

\vskip 0.3in
]



\printAffiliationsAndNotice{\icmlEqualContribution} 

\begin{abstract}

Decision Tree (DT) Learning is a fundamental problem in Interpretable Machine Learning, yet it poses a formidable optimisation challenge. Practical algorithms have recently emerged, primarily leveraging Dynamic Programming and Branch \& Bound. However, most of these approaches rely on a Depth-First-Search strategy, which is inefficient when searching for DTs at high depths and requires the definition of a maximum depth hyperparameter. Best-First-Search was also employed by other methods to circumvent these issues. The downside of this strategy is its higher memory consumption, as such, it has to be designed in a fully efficient manner that takes full advantage of the problem's structure. We formulate the problem within an AND/OR graph search framework and we solve it with a novel AO*-type algorithm called \branches. We prove both optimality and complexity guarantees for \branches~and we show that it is more efficient than the state of the art theoretically and on a variety of experiments. Furthermore, \branches~supports non-binary features unlike the other methods, we show that this property can further induce larger gains in computational efficiency.

\end{abstract}

\section{Introduction}
\label{sec:introduction}

Black-box models are ill-suited for contexts where decisions carry substantial ramifications. In healthcare for example, misdiagnoses can delay crucial treatments and lead to severe outcomes. Likewise, in the criminal justice system, black-box models can obscure biases and result in discriminatory rulings. Such risks highlight the necessity of adopting interpretable models in sensitive domains.

Decision Trees (DTs) are highly interpretable due to their simple decision rules (splits). However, this interpretability weakens as the number of splits increases, which makes the joint optimisation of accuracy and sparsity (minimising the number of splits) a fundamental problem in Interpretable Machine Learning. We refer to this problem as the \textit{sparsity} problem. This optimisation task is particularly difficult due to its NP-completeness \citep{laurent1976constructing}. Consequently, greedy approaches, such as C4.5~\citep{quinlan2014c4} and CART~\citep{breiman1984classification}, have been historically favoured. While these methods are fast and scalable, their greedy nature often yields suboptimal and overly complex DTs.

This issue spurred a large research effort into investigating alternatives, mainly focusing on \textit{Mathematical Programming}~\citep{bennett1992decision, bennett1994global, bennett1996optimal, bessiere2009minimising, norouzi2015efficient, bertsimas2017optimal, narodytska2018learning, verwer2019learning, hu2020learning, blanquero2021optimal, gunluk2021optimal}. However, the number of variables in these Mathematical Programs depends strongly on the dataset size and thus induces poor scalability. Moreover, these methods often fix the DT structure and only optimise the internal splits and leaf predictions, overlooking the sparsity portion of the problem.

Recently, Dynamic Programming (DP)~\citep{bellman2015applied} and Branch \& Bound (B\&B)~\citep{lawler1966branch} led to breakthroughs in runtimes, with most approaches employing a Depth-First-Search (DFS) strategy~\cite{nijssen2007mining, nijssen2010optimal, aglin2020learning, demirovic2022murtree, van2024necessary}. While DFS is appealing from a storage economy perspective, its uninformed nature makes it inefficient for large problems~\citep[p.36]{pearl1984heuristics}. On the other hand, informed strategies were also used through Best-First-Search (BFS)~\citep{hu2019optimal, lin2020generalized}, albeit in a sub-efficient manner that does not take full advantage of the problem's AND/OR structure. 

\citet{martelli1975dynamic} showed that DP problems can be formulated within an AND/OR graph search framework, in which they can be solved efficiently with powerful heuristic search algorithms~\citep{pearl1984heuristics}. We follow this approach because the sparsity problem can be framed within DP, then upon defining an adequate heuristic, which we call the \textit{Purification Bound}, we solve the AND/OR graph search problem with the celebrated AO* approach~\citep{nilsson2014principles, martelli1978optimizing}. The induced algorithm, called \branches, is guaranteed to return an optimal solution when it terminates. In addition, \branches~also satisfies complexity guarantees in the form of an upper bound on the number of evaluated branches before termination. To the best of our knowledge, such analysis was only previously conducted in \citep[Theorem E.2]{hu2019optimal}. We show numerically that \branches' complexity bound is significantly smaller than the bound in \citep[Theorem E.2]{hu2019optimal}. Empirically, \branches~always finds an optimal solution in substantially fewer iterations than the state of the art, and despite its current Python implementation, it also displays better runtimes than its C++ competitors. Furthermore, upon reaching timeout, some methods can still propose a solution, albeit with no quality guarantees. This property is called the \emph{anytime behaviour} and it is satisfied by \branches. On this front, our experiments show that \branches~proposes better solutions than the state of the art.

\section{Related Work}
\label{sec:related-work}

In this section, we mainly survey the proposed \textbf{Depth First Search (DFS)} and \textbf{Best First Search (BFS)} algorithms.

\paragraph{DFS:} \citet{nijssen2007mining, nijssen2010optimal} formulate a search space, called the lattice of itemsets, from which DTs can be mined. This powerful idea induced the DL8 algorithm and is at the basis of many subsequent works. DL8 explores the lattice in a DFS fashion seeking an optimal DT that satisfies a certain set of constraints. Moreover, DL8 exhibits enough flexibility to solve the sparsity problem, as evident by the lexicographical objective $\mathrm{Argmin}_T\{ \mathrm{error}\left( T\right), \mathrm{size}\left( T\right)\}$ considered in \citep[Section 2.3]{nijssen2010optimal}. However, the immense size of the lattice, which grows exponentially with the number of features, renders DL8 impractical on many real-world applications. A decade later, \citet{aglin2020learning} improved DL8 with a B\&B component that prunes the lattice based on the current best found solution. The induced DL8.5 algorithm is faster than DL8 on a broader range of applications. However, it only considers constraints on the maximum depth and minimum number of data per leaf, prohibiting it from solving the sparsity problem. In a subsequent work, \citet{demirovic2022murtree} improved the computational complexity of DL8.5 through the use of a specialised technique for handling DTs of depth $2$, which resulted in the MurTree algorithm. Recently, \textsc{MurTree} was generalised by \citet{van2024necessary} to handling a wider range of objectives within the \textsc{STreeD} framework.

\paragraph{BFS:} 
\citet{hu2019optimal} introduce OSDT, which considers the objective $\mathrm{Argmin}_T\{ \mathrm{error}\left( T\right) + \lambda \mathrm{leaves}\left( T\right)\}$ where $0 < \lambda < 1$ is a soft penalty on the number of leaves. A similar objective has been considered in prior works, e.g.~\citep{bertsimas2017optimal}. Unlike the DFS methods, OSDT employs analytical bounds and a priority queue to prioritise regions with better bounds, resulting in a more aggressive pruning of the search space. On the other hand, OSDT operates on the space of DTs instead of the lattice of itemsets, which greatly slows it down. To alleviate this issue, \citet{lin2020generalized} developed GOSDT, a BFS algorithm operating on the lattice of itemsets. GOSDT can be considered the state of the art for the sparsity problem, its DP is as efficient as the DFS methods while its B\&B prunes the lattice more efficiently. Furthermore, GOSDT generalises OSDT to other objectives including weighted accuracy, balanced accuracy, F-score, AUC and partial area under the ROC convex hull. 

\paragraph{DFS vs BFS:} \citep[Chapter 2]{pearl1984heuristics} provides an excellent discussion on the differences between DFS and BFS, with a brief summary in \citep[Section 2.5]{pearl1984heuristics}. The main advantage of DFS is its storage economy. However, it necessitates a maximum depth parameter to avoid long searches in one region of the search space. Moreover, the uninformed nature of DFS makes it inefficient for large problems. On the other hand, the informed nature of BFS allows it to find solutions more quickly than DFS without the need for a maximum depth parameter. Nevertheless, this judicious paradigm comes at the cost of high memory consumption, hence the necessity to devise BFS algorithms that find an optimal solution as quickly as possible. 

To achieve this, we frame the DP problem of sparsity as an AND/OR graph search~\citep{martelli1978optimizing}. \citep[Section 3.1]{nilsson2014principles} and \citep[Section 1.2.4]{pearl1984heuristics} provide detailed overviews on AND/OR graphs. In this context, we can apply the popular and efficient AO* algorithm, which was introduced in \citep{martelli1978optimizing} and \citep[Section 3.2]{nilsson2014principles}. We note that \citet{martelli1978optimizing} employed AO* (with the early name HS) to seeking optimal DTs, but in a cost-sensitivity context~\citep{lomax2013survey} that is distinct from the sparsity problem. Another difference is that the authors sought DTs that perfectly classify a dataset while we seek DTs on the pareto front jointly maximising accuracy and minimising the number of splits. \citet{verhaeghe2020learning} also employ an AND/OR formulation, but within a Constraint-Programming (CP) paradigm. The induced CP algorithm does not solve the sparsity problem but rather a similar problem to DL8.5 where the maximum DT depth is constrained. In an empirical comparison, \citet{aglin2020learning} thoroughly showed that DL8.5 outperforms CP.

\section{Problem Formulation}
\label{sec:preliminaries}

\begin{figure*}[t]
    \centering
    \includegraphics[width=0.7\linewidth]{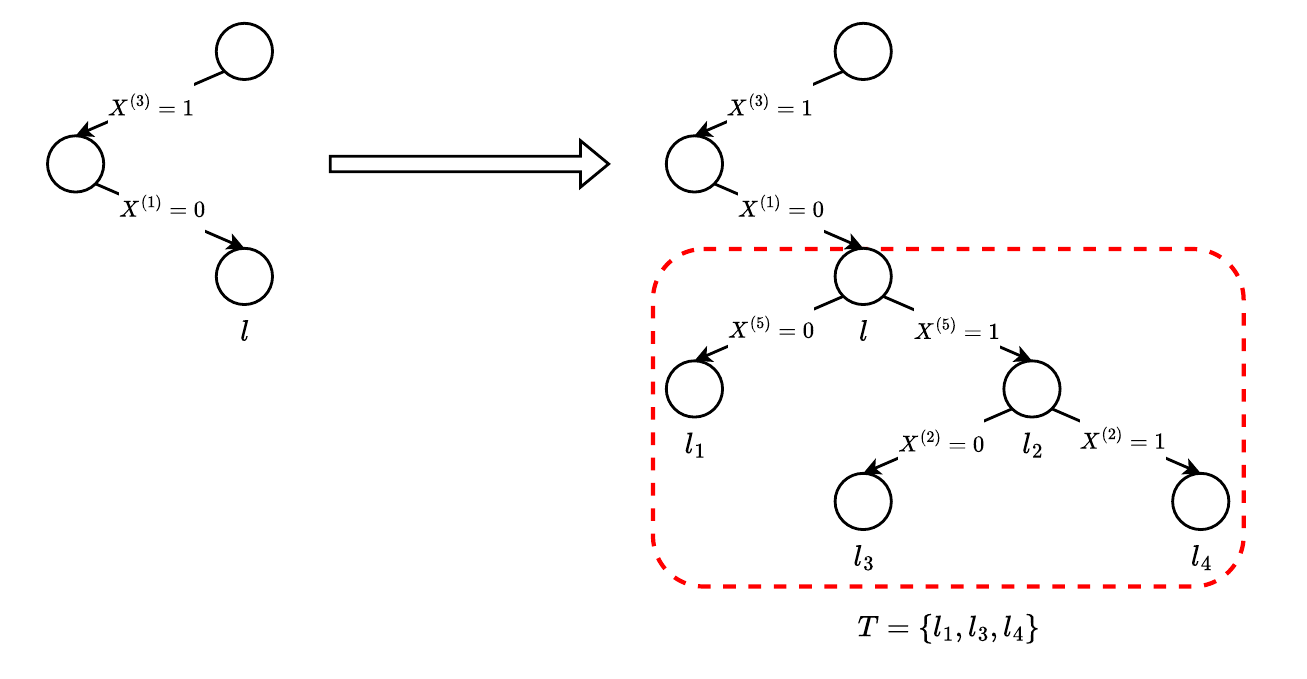}
    \caption{Consider a feature space with five binary features $X^{\left( 1\right)}, X^{\left( 2\right)}, X^{\left( 3\right)}, X^{\left( 4\right)}, X^{\left( 5\right)} \in \{ 0, 1\}$. The figure provides an example of a sub-DT $T = \{ l_1, l_3, l_4\}$ rooted in $l$ that stems from splitting branch $l$ with respect to feature $X^{\left( 5\right)}$ and splitting branch $l_2$ with respect to feature $X^{\left( 2\right)}$, the red perimeter emphasises the fact that $T$ is rooted in $l$. Here $\splits\left( T\right) = 2, l = \ind\{ X^{\left( 3\right)} = 1\} \wedge \ind\{ X^{\left( 1\right)} = 0\}, l_1 = l \wedge \ind\{ X^{\left( 5\right)} = 0\}, l_2 = l \wedge \ind\{ X^{\left( 5\right)} = 1\}, l_3 = l_2 \wedge \ind\{ X^{\left( 2\right)} = 0\}, l_4 = l_2 \wedge \ind\{ X^{\left( 2\right)} = 1\}$.}
    \label{fig:sub-DT}
\end{figure*}

We consider classification problems with categorical features $X = \left( X^{\left( 1\right)}, \ldots, X^{\left( q\right)}\right)$ and class $Y \in \{ 1, \ldots, K\}$:
\[
\forall{i \in \{ 1, \ldots q\}}: X^{\left( i\right)} \in \{ 1, \ldots, C_i\}, \; C_i \ge 2
\]
where $q \ge 2$ and $K \ge 2$. We are provided with a dataset $\mathcal{D} = \{ \left( X_m, Y_m\right)\}_{m=1}^n$ of $n \ge 1$ examples. In the following sections, we define the notions of branches and sub-DTs that are key to our formulation.

\subsection{Branches}
\label{sec:branches}

A branch $l$ is a conjunction of clauses on the features of the following form (where $\ind$ is the indicator function):
\[
l = \bigwedge_{v=1}^{\splits\left( l\right)}\ind\big\{ X^{\left( i_v\right)} = j_v\big\}
\]
such that $\forall{1 \le v \le \splits\left( l\right)}: 1 \le i_v \le q, \; 1 \le j_v \le C_{i_v}$ and:
\begin{align}
    \forall{v, v' \in \{ 1, \ldots \splits\left( l\right)\}}: v \neq v' \implies i_v \neq i_{v'} \label{eq:branch-condition}
\end{align}
This condition ensures that no feature is used in more than one clause within $l$. We refer to these clauses as \textit{rules} or \textit{splits}. $\splits\left( l\right)$ is the number of splits in $l$.

For any datum $X = \left( X^{\left( 1\right)}, \ldots, X^{\left( q\right)}\right)$, the valuation of $l$ for $X$ is denoted $l\left( X\right) \in \{ 0, 1\}$ and defined as follows:
\[
l\left( X\right) = 1 \iff \bigwedge_{v=1}^{\splits\left( l\right)}\ind\big\{ X^{\left( i_v\right)} = j_v\big\} = 1
\]
When $l\left( X\right) = 1$, we say that $X$ is in $l$ or that $l$ contains $X$. The branch containing all possible data is called the root and is denoted $\Omega$. Since the valuation of $l$ for any datum remains invariant when reordering the splits, we represent $l$ uniquely by ordering its splits from the smallest feature index to the highest, i.e. we impose $1 \le i_1 < \ldots < i_{\splits\left( l\right)} \le q$. This unique representation is at the core of our memoisation.

In the following, we define the notion of splitting a branch. Let $i \in \{ 1, \ldots, q\} \setminus \{ i_1, \ldots, i_{\splits\left( l\right)}\}$ be an unused feature in the splits of $l$. We define the children of $l$ that stem from splitting $l$ with respect to $i$ as the set $\children\left( l, i\right) = \{ l_1, \ldots, l_{C_i}\}$ where:
\begin{equation}
    \label{eq:children}
    \forall{j \in \{ 1, \ldots, C_i\}}: l_j = l \wedge \ind\big\{ X^{\left( i\right)} = j\big\}
\end{equation}
The dataset $\mathcal{D} = \{ \left( X_m, Y_m\right)\}_{m=1}^n$ provides an empirical distribution of the data. The probability that a datum is in $l$ is as follows:
\[
\prob\left[ l\left( X\right) = 1\right] = \frac{n\left( l\right)}{n}
\]
where $n\left( l\right) = \sum_{m=1}^n l\left( X_m\right)$ is the number of data in $l$ and $n$ is the total number of data. Likewise, we want to define the probability that a datum is in $l$ and correctly classified. For this purpose, we define the predicted class in $l$ as:
\[
k^*\left( l\right) = \mathrm{Argmax}_{1 \le k \le K} \{ n_k\left( l\right)\}
\]
where $n_k\left( l\right) = \sum_{m=1}^n l\left( X_m\right)\ind\{ Y_m = k\}$ is the number of data in $l$ that are of class $k$. In other words, $k^*\left( l\right)$ is the majority class in $l$. Then the probability that a datum is in $l$ and correctly classified is:
\begin{equation}
    \label{eq:accuracy-branch}
    \objective\left( l\right) = \prob\left[ l\left( X\right) = 1, k^*\left( l\right) = Y\right] = \frac{n_{k^*\left( l\right)}\left( l\right)}{n}
\end{equation}

\subsection{Sub-DTs}
\label{sec:dts}

Let $l$ be a branch, a sub-DT rooted in $l$ is a collection of branches $T = \big\{ l_1, \ldots, l_{|T|}\big\}$ that stems from successive splits starting from from $l$, we denote $\splits\left( T\right)$ the number of these splits. Intuitively, $\splits\left( T\right)$ can be see as the number of \textit{internal nodes} in $T$, and $l_1, \ldots, l_{|T|}$ as the \textit{leaves} of $T$. \cref{fig:sub-DT} provides an example of a sub-DT. $T$ partitions $l$ in the following sense:
\[
\begin{cases}
    l = \bigvee_{u=1}^{|T|}l_u\\
    \forall{u, u' \in \{ 1, \ldots, |T|\}}: u \neq u' \implies l_u \wedge l_{u'} = 0
\end{cases}
\]
For any datum $X$ in $l$, $T$ predicts the majority class of the branch $l_u \in T$ containing $X$:
\begin{equation}
    \label{eq:dt-prediction}
    T\left( X\right) = \sum_{u=1}^{|T|}l_u\left( X\right)k^*\left( l_u\right) \in \{ 1, \ldots, K\}
\end{equation}
Now we can define the probability that a datum is in $l$ and correctly classified by $T$:
\begin{align*}
    \objective\left( T\right) &= \prob\left[ l\left( X\right) = 1, T\left( X\right) = Y\right] \\
    &= \sum_{u=1}^{|T|}\prob\left[ l_u\left( X\right) = 1, k^*\left( l_u\right) = Y\right] = \sum_{u=1}^{|T|}\objective\left( l_u\right)
\end{align*}
The additivity property is due to $\{ l_1, \ldots, l_{|T|}\}$ forming a partition of $l$, then the result stems from the definitions \eqref{eq:dt-prediction} and \eqref{eq:accuracy-branch}.

We define a DT as a sub-DT that is rooted in the root $\Omega$. Let $T$ be a DT, since $\Omega\left( X\right) = 1$ for any datum $X$ then:
\[
\objective\left( T\right) = \prob\left[ \Omega\left( X\right) = 1, T\left( X\right) = Y\right] = \prob\left[ T\left( X\right) = Y\right]
\]
which is the accuracy of $T$. To solve the sparsity problem, we seek the DT $T^*$ maximising the following objective:
\begin{equation}
    \label{eq:objective}
    \objectiver\left( T\right) = -\lambda\splits\left( T\right) + \objective\left( T\right)
\end{equation}
where $0 < \lambda < 1$ is a penalty parameter penalising DTs with a large number of splits. This objective is employed by CART during the pruning phase, it was also considered by \citet{bertsimas2017optimal} and recently by \citet{chaouki24a}. \citet{hu2019optimal} and \citet{lin2020generalized} use a slightly different version, where the total number of leaves is penalised instead. In a setting with binary features, the number of leaves of a DT is always equal to the number of splits plus $1$ and the two objectives are equivalent in this case.

\subsection{AND/OR Graph Representation}
\label{sec:mdp}

\begin{figure*}
    \centering
    \includegraphics[width=1\linewidth]{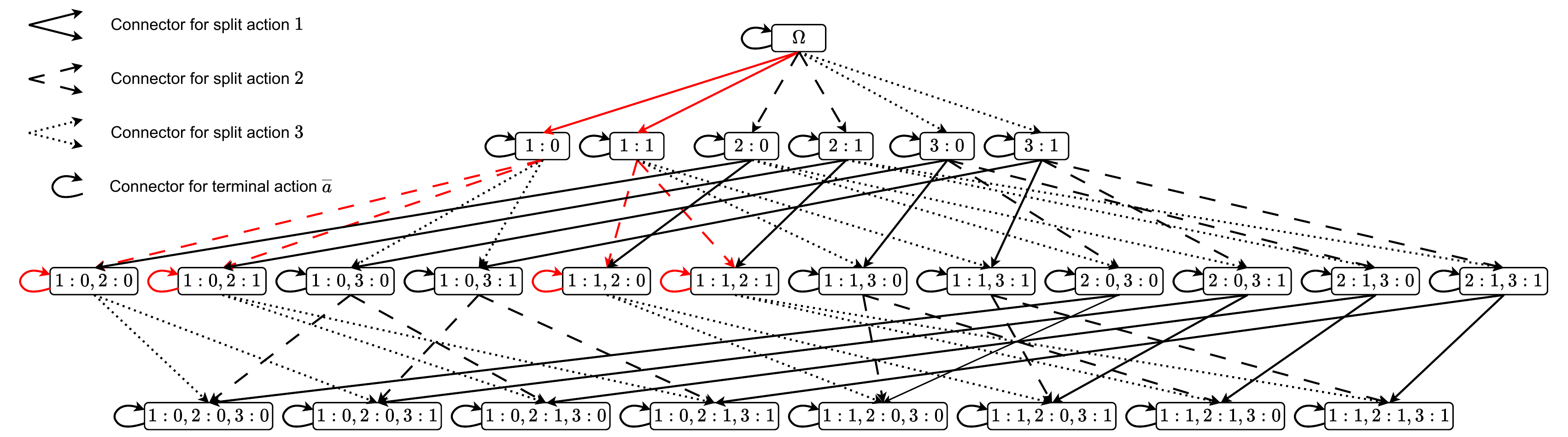}
    \caption{AND/OR graph for a classification problem with three binary features $X^{\left( 1\right)}, X^{\left( 2\right)}, X^{\left( 3\right)}$. To make the notation lighter, we represent any branch $l = \bigwedge_{v=1}^{\splits\left( l\right)}\ind\big\{ X^{\left( i_v\right)} = j_v\big\}$ with $i_1:j_1, \ldots, i_{\splits\left( l\right)}:j_{\splits\left( l\right)}$, for example $1:0, 2:1$ represents the branch $\ind\{ X^{\left( 1\right)} = 0\} \wedge \ind\{ X^{\left( 2\right)} = 1\}$. We colour in red the actions taken by the policy: $\pi\left( \Omega\right) = 1, \pi\left( 1:0\right) = \pi\left( 1:1\right) = 2, \pi\left( 1:0, 2:0\right) = \pi\left( 1:0, 2:1\right) = \pi\left( 1:1, 2:0\right) = \pi\left( 1:1, 2:1\right) = \overline{a}$, which also depicts the DT $T^\pi$ of $\pi$.
    Note that, although the curved connector associated with the terminal action $\overline{a}$ connects to the same node, it transitions to a terminal state from which no action can be taken. We represent it like this to avoid overloading the figure with additional nodes corresponding to the terminal states.
    }
    \label{fig:and-or-graph}
\end{figure*}

To formulate our AND/OR graph, we define the following state space model $\left( \splits, \terminal, \action, F, r\right)$ where $\splits$ is the set of states, $\terminal \subset \splits$ the set of terminal (or goal) states, $\action$ the set of actions with $\action\left( l\right)$ the set of permissible actions at a non-terminal state $l \in \splits \setminus \terminal$, $F : \splits \times \action \mapsto \mathcal{P}\left( \splits\right)$ the transition function (with $\mathcal{P}\left( \splits\right)$ the power set of $\splits$) and $r : \splits \times \action \mapsto \mathbb{R}$ the reward function.

\textbf{States $\bm{\splits}$:} Our state space is the set of all branches along with the set of all terminal states $\terminal$, which we define below. The root $\Omega$ is always our initial state.

\textbf{Terminal states:} For every branch $l$ we assign a unique terminal state $\overline{l} \in \terminal$, which signals that we should not consider any further splits (end of search).

\textbf{Actions $\bm{\action}$ and transitions $\bm{F}$:} Consider a non-terminal state (a branch) $l = \bigwedge_{v=1}^{\splits\left( l\right)}\ind\{ X^{\left( i_v\right)} = j_v\} \in \splits \setminus \terminal$, there are two types of actions in $\action\left( l\right)$:
\begin{itemize}
    \item The terminal action $\overline{a}$, which transitions $l$ into its corresponding terminal state $\overline{l} \in \terminal$. Thus $F\left( l, \overline{a}\right) = \{\overline{l}\}$.
    \item Split actions, which is the set of all unused features $\{ 1, \ldots, q\} \setminus \{ i_1, \ldots, i_{\splits\left( l\right)}\}$ by $l$. Let $i$ be a split action, taking $i$ transitions $l$ to the set of children $\children\left( l, i\right)$, defined in \eqref{eq:children}. Therefore $F\left( l, i\right) = \children\left( l, i\right)$.
\end{itemize}
Thus $\action\left( l\right) = \{ \overline{a}\} \cup \{ 1, \ldots, q\} \setminus \{ i_1, \ldots, i_{\splits\left( l\right)}\}$. When $\splits\left( l\right) = q$, then $\action\left( l\right) = \{ \overline{a}\}$ and we can only transition to $\overline{l}$.

\textbf{Reward function $\bm{r}$:} Let $l \in \splits \setminus \terminal$ be a non-terminal state and $a \in \action\left( l\right)$, we define the reward $r\left( l, a\right)$ as follows:
\begin{itemize}
    \item If $a$ is a split action, then $r\left( l, a\right) = -\lambda$ regardless of $l$. $\lambda$ is the penalty parameter defined in \cref{eq:objective}.
    \item If $a = \overline{a}$, then $r\left( l, \overline{a}\right) = \objective\left( l\right)$ as per \cref{eq:accuracy-branch}.
\end{itemize}
We represent our state space model as an AND/OR graph following the hypergraph convention in \citep[Section 3.1]{nilsson2014principles}. The nodes of the hypergraph represent states and its connectors represent actions. A connector is a generalisation of the notion of edge, it connects a parent node to a set of successor nodes. In our formulation, a node (state) $l$ has $|\action\left( l\right)|$ outgoing connectors, each corresponding to an action. The connector of action $a \in \action\left( l\right)$ links node $l$ to the set of successor nodes $F\left( l, a\right)$ and has a weight $r\left( l, a\right)$. See \cref{fig:and-or-graph} for an example. In the following, we introduce the notion of policy and we transform the problem into seeking an optimal policy.

A policy $\pi$ maps each non-terminal state $l \in \splits \setminus \terminal$ to an action $\pi\left( l\right) \in \action\left( l\right)$. We define the trajectory of $\pi$ from $l$ as the sequence $\left( T_{l, t}^\pi\right)_{t \ge 0}$ where $T_{l, 0}^\pi = \{ l\}$ and for all $t \ge 0$:
\[
T_{l, t+1}^\pi = \bigcup_{l_u \in T_{l, t}^\pi \setminus \terminal} F\left( l_u, \pi\left( l_u\right)\right) \bigcup_{\overline{l_u} \in T_{l, t}^\pi \cap \terminal}\{ \overline{l_u}\}, t \ge 0
\]
The term $\bigcup_{\overline{l_u} \in T_{l, t}^\pi \cap \terminal}\{ \overline{l_u}\}$ is the set of terminal states in $T_{l, t}^\pi$ while the term $\bigcup_{l_u \in T_{l, t}^\pi \setminus \terminal} F\left( l_u, \pi\left( l_u\right)\right)$ is the set of induced states from taking the actions dictated by policy $\pi$ in each non-terminal state in $T_{l, t}^\pi$. As such, the trajectory $\left( T_{l, t}^\pi\right)_{t \ge 0}$ stems from following $\pi$ from $l$ by applying it, each time, at the non-terminal states and retaining the terminal states. 
\begin{restatable}{proposition}{treepolicy}
\label{prop:tree-policy}
    Let $\pi$ be a policy and $l \in \splits \setminus \terminal$, then there exists a minimum $\tau_l^\pi \ge 1$ such that for any $t \ge \tau_l^\pi$, $T_{l, t}^\pi = \big\{ \overline{l_1}, \ldots, \overline{l_{|T_{\tau_l^\pi}|}}\big\}$ is composed of terminal states only. 
\end{restatable}
\cref{prop:tree-policy} shows that for any branch $l \in \splits \setminus \terminal$, any policy $\pi$ arrives at a final set of states $\big\{ \overline{l_1}, \ldots, \overline{l_{|T_{\tau_l^\pi}|}}\big\}$, in which case we define $T_l^\pi = \big\{ l_1, \ldots, l_{|T_{\tau_l^\pi}|}\big\}$ as the \textbf{sub-DT of $\bm{\pi}$ rooted in $\bm{l}$}. $T_l^\pi$ is indeed a sub-DT rooted in $l$ by the definition in \cref{sec:dts} because it stems from successive splits from $l$. For the root $\Omega$, we simplify the notation $T_\Omega^\pi \equiv T^\pi$ and call $T^\pi$ the \textbf{DT of $\bm{\pi}$}. \cref{fig:and-or-graph} provides an example of the DT of a policy. This definition allows us to evaluate policies as follows: we define the value of a policy $\pi$ from a non-terminal state $l \in \splits \setminus \terminal$ with:
\begin{align}
    \vf^\pi\left( l\right) &= \sum_{t = 0}^{\tau_l^\pi - 1}\sum_{l_u \in T_{l, t}^\pi \setminus \terminal} r\left( l_u, \pi\left( l_u\right)\right) \label{eq:value-policy}
\end{align}
$\vf^\pi\left( l\right)$ is the cumulative reward incurred by following policy $\pi$ from $l$ until we end up in the sub-DT $T_l^\pi$. In the example of \cref{fig:and-or-graph}, $\vf^\pi\left( \Omega\right)$ is the sum over all the weights of the red connectors. Policies are evaluated and compared with respect to their value from $\Omega$, an optimal policy being $\pi^* \in \mathrm{Argmax}_{\pi}\vf^\pi\left( \Omega\right)$. In the following, we justify why seeking $\pi^*$ is equivalent to seeking $T^* = \mathrm{Argmax}_T \objectiver\left( T\right)$.
\begin{restatable}{proposition}{returnobjective}
    \label{prop:return-objective}
    Let $\pi$ be a policy and $l \in \splits \setminus \terminal$ a non-terminal state, then $\vf^\pi\left( l\right)$ satisfies the following:
    \[
    \vf^\pi\left( l\right) = \objectiver\left( T_l^\pi\right) = -\lambda\splits\left( T_l^\pi\right) + \objective\left( T_l^\pi\right)
    \]
    Moreover, the optimal DT $T^*$ is the DT of an optimal policy $\pi^*$, in other words $T^* = T^{\pi^*}$.
\end{restatable}
\cref{prop:return-objective} implies that we can seek $\pi^*$ and then deduce $T^*$ by following $\pi^*$ from $\Omega$. In the terminology of \citep[Section 3.1]{nilsson2014principles}, $T^*$ is an optimal solution graph. Our task can be transcribed as seeking a policy that induces an optimal solution graph. This can be achieved with AO*.


\section{The Algorithm: \branches}
\label{sec:algorithm}

To describe \branches, we first derive the Bellman optimality equation satisfied by the problem in \cref{prop:bellman}. To do this conveniently, we introduce the state-action values. For any policy $\pi$, branch $l \in \splits \setminus \terminal$ and action $a \in \action\left( l\right)$, the state-action value $\qf^\pi\left( l, a\right)$ is the cumulative reward of taking action $a$ first and then following $\pi$:
\begin{equation}
    \label{eq:state-action-value}
    \qf^{\pi}\left( l, a\right) = r\left( l, a\right) + \sum_{l_u \in F\left( l, a\right) \setminus \terminal}\vf^\pi\left( l_u\right)
\end{equation}
We abbreviate the notation for all quantities that are related to $\pi^*$ with $T_{l, t}^* \equiv T_{l, t}^{\pi^*}, \tau_l^* \equiv \tau_l^{\pi^*}, \vf^* \equiv \vf^{\pi^*}, \qf^* \equiv \qf^{\pi^*}$. 

\begin{restatable}[Bellman Optimality Equations]{proposition}{bellman}
\label{prop:bellman}
    Let $\pi^*$ be an optimal policy, i.e. $\pi^* \in \mathrm{Argmax}_{\pi}\vf^\pi\left( \Omega\right)$ and consider the set of non-terminal states in its trajectory from $\Omega$:
    \[
    \splits^* = \bigcup_{t=0}^{\tau_\Omega^{*} - 1}T_{\Omega, t}^{*} \setminus \terminal
    \]
    Now consider a policy $\pi$ and suppose that for all $l \in \splits^*$:
    \begin{equation}
        \label{eq:bellman-value}
        \vf^{\pi}\left( l\right) = \max_{a \in \action\left( l\right)} \qf^{\pi}\left( l, a\right)
    \end{equation}
    Then $\pi$ is optimal and we also have:
    \begin{equation}
        \label{eq:bellman-policy}
        \pi\left( l\right) = \mathrm{Argmax}_{a \in \action\left( l\right)} \qf^{\pi}\left( l, a\right)
    \end{equation}
\end{restatable}

Note that \cref{prop:bellman} establishes that any optimal policy $\pi^* \in \mathrm{Argmax}_{\pi}\vf^\pi\left( \Omega\right)$ only has to satisfy the Bellman optimality equations \eqref{eq:bellman-value} and \eqref{eq:bellman-policy} for a subset of few \textit{relevant} states $\splits^*$ regardless of the remaining states of $\splits$. As such, it suffices to seek $\pi^*$ as a partial policy defined on these states, without the need to define it elsewhere. This implies that an efficient search strategy should focus the search effort on the relevant part of the state space to find $T^*$ quickly, this is exactly the purpose of AO* and its advantage over Dynamic Programming~\citep[p.2, 11, 12]{hansen2001lao}.

To adopt an AO* approach, we need to define adequate heuristic estimates in the form of upper bounds $\vf\left( l\right)$ and $\qf\left( l, a\right)$ on $\vf^*\left( l\right)$ and $\qf^*\left( l, a\right)$ respectively.

\subsection{Heuristic estimates $\bm{\vf\left( l\right)}$ and $\bm{\qf\left( l, a\right)}$}
\label{sec:heuristics}

Let $l \in \splits \setminus \terminal$ be a non-terminal state. For the terminal action $\overline{a}$, according to definition \eqref{eq:state-action-value}, we have direct access to $\qf^*\left( l, \overline{a}\right) = r\left( l, \overline{a}\right) = \objective\left( l\right)$ and thus we can define $\qf\left( l, \overline{a}\right)$ straightforwardly using \cref{eq:accuracy-branch} as follows:
\begin{equation}
\label{eq:state-action-heuristic-terminal}
    \qf\left( l, \overline{a}\right) = \qf^*\left( l, \overline{a}\right) = r\left( l, \overline{a}\right) = \objective\left( l\right) = \frac{n_{k^*\left( l\right)}\left( l\right)}{n}
\end{equation}
Now consider a split action $a \in \action\left( l\right) \setminus \{ \overline{a}\}$ and let us define $\qf\left( l, a\right)$ and $\vf\left( l\right)$. The definition \eqref{eq:state-action-value} and the Bellman equation \eqref{eq:bellman-value} suggest the following recursive definition:
\begin{align}
    \qf\left( l, a\right) &= -\lambda + \sum_{l_u \in F\left( l, a\right)}\vf\left( l_u\right)\label{eq:state-action-heuristic}\\
    \vf\left( l\right) &= \max_{a \in \action\left( l\right)}\qf\left( l, a\right)\label{eq:state-heuristic}
\end{align}
We note that $\qf\left( l, a\right)$ in \cref{eq:state-action-heuristic} can only be calculated if the heuristic estimates $\vf\left( l_u\right)$ are available. Thus to complete this recursive definition, we need to initialise $\vf\left( l_u\right)$ adequately.

\begin{restatable}[Purification Bound]{proposition}{branchesbound}
    \label{prop:branches-bound}
    Let $l \in \splits \setminus \terminal$, we define the Purification Bound as follows: 
    
    If $\action\left( l\right) \setminus \{ \overline{a}\} \neq \emptyset$:
    \begin{align}
        \vf\left( l\right) &= \max\{ \objective\left( l\right), -\lambda + \prob\left[ l\left( X\right) = 1\right]\} \nonumber \\
        &= \max\Big\{ \frac{n_{k^*\left( l\right)}\left( l\right)}{n}, -\lambda + \frac{n\left( l\right)}{n}\Big\} \label{eq:state-heuristic-initial}
    \end{align}
    Otherwise:
    \begin{equation}
        \label{eq:state-heuristic-initial-tip}
        \vf\left( l\right) = \vf^*\left( l\right) = \objective\left( l\right) = \frac{n_{k^*\left( l\right)}\left( l\right)}{n}
    \end{equation}
    The bounds $\vf\left( l\right)$ are initialised with \eqref{eq:state-heuristic-initial} or \eqref{eq:state-heuristic-initial-tip}, then they are recursively backpropagated to the ancestors of $l$ in the AND/OR graph through \eqref{eq:state-action-heuristic} and \eqref{eq:state-heuristic}. The resulting heuristic estimates $\qf\left( l, a\right)$ and $\vf\left( l\right)$ are upper bounds on the true optimal values $\qf^*\left( l, a\right)$ and $\vf^*\left( l\right)$ respectively.
\end{restatable}
\cref{eq:state-heuristic-initial-tip} is straightforward. Indeed, $\action\left( l\right) \setminus \{ \overline{a}\} = \emptyset$ means that no split action can be taken at $l$ because we have already exhausted them all, which happens when $\splits\left( l\right) = q$, i.e. when $l$ employs all the features in its splits. In this case, \cref{eq:bellman-value} implies that $\vf^*\left( l\right) = \qf^*\left( l, \overline{a}\right) = \objective\left( l\right)$. On the other hand, when $\action\left( l\right) \setminus \{ \overline{a}\} \neq \emptyset$, the following provides the intuitive reasoning behind the Purification Bound. We want to initialise an upper bound $\vf\left( l\right) \ge \vf^*\left( l\right)$. \cref{prop:return-objective} states that $\vf^*\left( l\right) = \objectiver\left( T_l^*\right)$ which is the objective of the best possible sub-DT rooted in $l$. Initially, the only information we have about $l$ is $\objective\left( l\right)$, which is the objective of the sub-DT $\{ l\}$. Furthermore, we know that any other sub-DT $T$ rooted in $l$ employs at least one split and has an accuracy $\objective\left( T\right)$ at most equal $\prob\left[ l\left( X\right) = 1\right]$ as shown below
\[
\objective\left( T\right) = \prob\left[ l\left( X\right) = 1, T\left( X\right) = Y\right] \le \prob\left[ l\left( X\right) = 1\right]
\]
where equality only happens if all the branches in $T$ are \emph{pure}, i.e. each branch contains only one class. As a consequence, all sub-DTs $T$ rooted in $l$, including $T_l^*$, satisfy:
\[
\objectiver\left( T\right) \le \max\big\{ \objective\left( l\right), -\lambda + \prob\left[ l\left( X\right) = 1\right] \big\}
\]
\textbf{Remark:} We emphasize that we do not initialise $\vf\left( l\right)$ for all tip nodes (nodes with no successors, also called leaves) of the AND/OR graph and then run the recursive updates \eqref{eq:state-action-heuristic} and \eqref{eq:state-heuristic} up the AND/OR graph. This would be a purely DP approach and it would indeed find $T^*$. However, such an approach defies the purpose of focused search as it would require very expensive computational resources. We need a search strategy to carefully choose the branches to evaluate in order to find $T^*$ as quickly as possible. The next section is dedicated to describing this search strategy.

\textbf{Summary:} For a non-terminal state $l \in \splits \setminus \terminal$, $\qf\left( l, \overline{a}\right)$ is known in advance and calculated with \eqref{eq:state-action-heuristic-terminal}. For any split action $a \in \action\left( l\right)\setminus \{ \overline{a}\}$, $\qf\left( l, a\right)$ is calculated with \eqref{eq:state-action-heuristic}. $\vf\left( l\right)$ are first initialised with \eqref{eq:state-heuristic-initial} (for branches $l$ that are specifically chosen by the search strategy) and later updated with \eqref{eq:state-heuristic}.

\subsection{The Search strategy}
\label{sec:strategy}

We initialise a \textit{memo} and a search graph $G$ that consist solely of the root $\Omega$. Throughout the algorithm's execution, we label a node $l$ (in $G$) as SOLVED when we know its optimal action $\pi^*\left( l\right) = \mathrm{Argmax}_{a \in \action\left( l\right)}\qf^*\left( l, a\right)$, in which case $\vf^*\left( l\right) = \vf\left( l\right)$ and $\qf^*\left( l, \pi^*\left( l\right)\right) = \qf\left( l, \pi^*\left( l\right)\right)$. Until $\Omega$ is SOLVED perform the following steps at each iteration:

\textbf{Selection:} Starting from the root $l = \Omega$, until $l$ is a leaf of $G$, descend $G$ by following the selection policy:
\begin{equation}
    \label{eq:policy-selection}
    \widetilde{\pi}\left( l\right) = \mathrm{Argmax}_{a \in \action\left( l\right)} \qf\left( l, a\right)
\end{equation}
Store the connector (action) $\widetilde{\pi}\left( l\right)$ in a list \textit{path}. If all the states in $F\left( l, \widetilde{\pi}\left( l\right)\right)$ are SOLVED, choose one of them arbitrarily, stop the Selection step and move to the Backpropagation step. Otherwise, choose an UNSOLVED state from $F\left( l, \widetilde{\pi}\left( l\right)\right)$ and make it the current state $l$, \cref{appendix:selection} provides the details of how we conduct this choice. Repeat the process until reaching a leaf $l$ of $G$.

\textbf{Expansion:} The purpose of this step is to grow $G$ with the successor nodes of $l$ and update $\vf\left( l\right)$ and $\qf\left( l, a\right)$ for all actions $a \in \action\left( l\right)$. For the terminal action, calculate $\qf\left( l, \overline{a}\right) = \objective\left( l\right)$ as per \eqref{eq:state-action-heuristic-terminal}. Generate all the successor nodes of $l$ and add them to $G$ as follows: for all split actions $a \in \action\left( l\right) \setminus \{ \overline{a}\}$ and all successors $l_u \in F\left( l, a\right)$, if $l_u \in \textrm{\textit{memo}}$ add a link between $l$ and $l_u$ as part of the connector $a$, otherwise create a node $l_u$ in $G$ and add it as a successor of $l$ stemming from the connector $a$, store $l_u$ in \textit{memo}, and initialise $\vf\left( l_u\right)$ with the Purification Bound in \cref{prop:branches-bound}. If $\vf\left( l_u\right) = \frac{n_{k^*\left( l_u\right)}\left( l_u\right)}{n}$ then label $l_u$ as SOLVED because we would know that $\overline{a}$ is optimal at $l_u$ as shown below:
\begin{align*}
    \qf^*\left( l_u, \overline{a}\right) = \qf\left( l_u, \overline{a}\right) = \vf\left( l_u\right) &= \max_{a \in \action\left( l_u\right)}\qf\left( l_u, a\right)\\
    &\ge \max_{a \in \action\left( l_u\right)}\qf^*\left( l_u, a\right)
\end{align*}
For all split actions $a \in \action\left( l\right) \setminus \{ \overline{a}\}$ calculate $\qf\left( l, a\right)$ with \eqref{eq:state-action-heuristic}, then deduce $\vf\left( l\right)$ with \eqref{eq:state-heuristic}. Update the selection policy at $l$ with \cref{eq:policy-selection}. If all the successors in $F\left( l, \widetilde{\pi}\left( l\right)\right)$ are SOLVED then label $l$ as SOLVED.

\textbf{Backpropagation:} Update the heuristic estimates upwards in $G$ through the list \textit{path} of selected connectors, which we stored back in the Selection step. For $j=length(path)-1, \ldots, 0$: $\widetilde{\pi}\left( l\right) = path\left[ j\right]$, update $\qf\left( l, \widetilde{\pi}\left( l\right)\right), \vf\left( l\right)$ and $\widetilde{\pi}\left( l\right)$ with \eqref{eq:state-action-heuristic}, \eqref{eq:state-heuristic} and \eqref{eq:policy-selection} respectively. If $\widetilde{\pi}\left( l\right) = \overline{a}$ or all the successors in $F\left( l, \widetilde{\pi}\left( l\right)\right)$ are SOLVED then label $l$ as SOLVED.

We provide implementation details and a pseudocode in \cref{appendix:implementation} and \cref{alg:branches}. \cref{thm:branches-optimality} proves the optimality of \branches.
\begin{restatable}[Optimality]{theorem}{branchesoptimality}
    \label{thm:branches-optimality}
    Upon termination, the selection policy $\widetilde{\pi}$ becomes optimal. In other words:
    \[
    \vf^{\widetilde{\pi}}\left( \Omega\right) = \vf^*\left( \Omega\right) = \max_\pi \vf^\pi\left( \Omega\right)
    \]
\end{restatable}
To accurately assess the search efficiency of \branches, we analyse in \cref{cor:branches-complexity-independent} the number of branch evaluations, i.e. calculations of $\objective\left( l\right)$, it performs before terminating.

\begin{restatable}[Complexity]{theorem}{branchescomplexityindependent}
    \label{cor:branches-complexity-independent}
    Let $\Gamma\left( q, C, \lambda\right)$ denote the total number of branch evaluations performed by \branches~for a classification problem with a number of features $q \ge 2$, a penalty parameter $0 < \lambda < 1$, and a number of categories per feature $C \ge 2$. Then, $\Gamma\left( q, C, \lambda\right)$ satisfies:
    \[
    \Gamma\left( q, C, \lambda\right) \le \sum_{h = 0}^\kappa \left( q-h\right)C^{h+1}\binom{q}{h}
    \]
    where $\kappa = \min\big\{ \floor*{\frac{1}{K\lambda}}-1, q \big\}$.
\end{restatable}

To our knowledge, only \citep[Theorem E.2]{hu2019optimal} performs a similar analysis for OSDT. Due to the difficulty of comparing the two bounds analytically, we rather compare them numerically. \cref{tab:branches-complexities} shows that our complexity bound is significantly smaller than the bound in \citep[Theorem E.2]{hu2019optimal} for different settings.

\begin{table}[t]
  \caption{Comparing orders of magnitude of the complexity bounds of \branches~and OSDT for binary features and for different values of $\lambda$ and $q$.}
  \label{tab:branches-complexities}
  \centering
  \vskip 0.1in
  \scalebox{0.7}{
  \begin{tabular}{?r?c|c?c|c?c|c?}\toprule
    & \multicolumn{2}{c?}{$q=10$} & \multicolumn{2}{c?}{$q=15$} & \multicolumn{2}{c?}{$q=20$} \\
    \multicolumn{1}{?c?}{\multirow{1}{*}{$\lambda$}} & \rotatebox{0}{\branches} & \rotatebox{0}{OSDT} & \rotatebox{0}{\branches} & \rotatebox{0}{OSDT} & \rotatebox{0}{\branches} & \rotatebox{0}{OSDT} \\ \midrule \midrule
    $0.1$ & $\bm{10^4}$ & $10^{13}$ & $\bm{10^5}$ & $10^{16}$ & $\bm{10^6}$ & $10^{18}$ \\
    $0.05$ & $\bm{10^5}$ & $10^{271}$ & $\bm{10^7}$ & $ 10^{473}$ & $\bm{10^9}$ & $10^{576}$ \\ 
    $0.01$ & $\bm{10^5}$ & $10^{392}$ & $\bm{10^8}$ & $\mathrm{INF}$ & $\bm{10^{10}}$ & $\mathrm{INF}$ \\
    \bottomrule
  \end{tabular}
  }
  \vskip -0.1in
\end{table}

\section{Experiments}
\label{sec:experiments}

\begin{table*}[t]
  \caption{Comparing \branches~with the state of the art for a large maximum depth $20$. \branches~is the only method that is applicable to ordinal encoding, hence why we put \rule{0.4cm}{0.3mm} for the remaining methods. Furthermore, we ran into memory issues with MurTree that kill the kernel, in these cases also we put \rule{0.4cm}{0.3mm}. 
  For STreeD, when it reaches timeout, it does not propose a solution due to its lack of anytime behaviour, we indicate those cases with \rule{0.4cm}{0.3mm} as well. The APIs of STreeD and MurTree do not provide the number of iterations.}
  \label{tab:branches-experiments-large-depths}
  \centering
  \scalebox{0.7}{
  \begin{tabular}{?r?c|c|c|c?c|c|c|c?c|c|c|c|c?c|c|c|c|c?}\toprule
    \multicolumn{1}{?c?}{\multirow{2}{*}{Dataset}} & \multicolumn{4}{c?}{\textbf{MurTree}} & \multicolumn{4}{c?}{\textbf{STreeD}} & \multicolumn{5}{c?}{\textbf{GOSDT}} & \multicolumn{5}{c?}{\textbf{\branches}} \\
     & \rotatebox{90}{objective} & \rotatebox{90}{accuracy} & \rotatebox{90}{splits} & \rotatebox{90}{time (s)} & \rotatebox{90}{objective} & \rotatebox{90}{accuracy} & \rotatebox{90}{splits} & \rotatebox{90}{time (s)} & \rotatebox{90}{objective} & \rotatebox{90}{accuracy} & \rotatebox{90}{splits} & \rotatebox{90}{time (s)} & \rotatebox{90}{iterations} & \rotatebox{90}{objective} & \rotatebox{90}{accuracy} & \rotatebox{90}{splits} & \rotatebox{90}{time (s)} & \rotatebox{90}{iterations} \\ \midrule \midrule 
    monk1 & $\bm{0.940}$ & $\bm{1}$ & $\bm{6}$ & $\bm{0.04}$ & $\bm{0.940}$ & $\bm{1}$ & $\bm{6}$ & $3.28$ & $\bm{0.940}$ & $\bm{1}$ & $\bm{6}$ & $3.05$ & $83928$ & $\bm{0.940}$ & $\bm{1}$ & $\bm{6}$ & $\bm{0.05}$ & $\bm{146}$ \\ \hline
    monk1-l & $\bm{0.930}$ & $\bm{1}$ & $\bm{7}$ & $\bm{0.03}$ & $\bm{0.930}$ & $\bm{1}$ & $\bm{7}$ & $2.80$ & $\bm{0.930}$ & $\bm{1}$ & $\bm{7}$ & $0.87$ & $29770$ & $\bm{0.930}$ & $\bm{1}$ & $\bm{7}$ & $\bm{0.02}$ & $\bm{117}$  \\ \hline
    monk1-f & $\bm{0.983}$ & $\bm{1}$ & $\bm{17}$ & $\bm{0.05}$ & $\bm{0.983}$ & $\bm{1}$ & $\bm{17}$ & $6.18$ & $\bm{0.983}$ & $\bm{1}$ & $\bm{17}$ & $4.09$ & $92782$ & $\bm{0.983}$ & $\bm{1}$ & $\bm{17}$ & $0.39$ & $\bm{2125}$ \\ \hline
    monk1-o & \rule{0.4cm}{0.3mm} & \rule{0.4cm}{0.3mm} & \rule{0.4cm}{0.3mm} & \rule{0.4cm}{0.3mm} & \rule{0.4cm}{0.3mm} & \rule{0.4cm}{0.3mm} & \rule{0.4cm}{0.3mm} & \rule{0.4cm}{0.3mm} & \rule{0.4cm}{0.3mm} & \rule{0.4cm}{0.3mm} & \rule{0.4cm}{0.3mm} & \rule{0.4cm}{0.3mm} & \rule{0.4cm}{0.3mm} & $\bm{0.900}$ & $\bm{1}$ & $\bm{10}$ & $\bm{0.02}$ & $\bm{64}$ \\ \hline
    monk2 & $0.967$ & $\bm{1}$ & $33$ & $0.55$ & \rule{0.4cm}{0.3mm} & \rule{0.4cm}{0.3mm} & \rule{0.4cm}{0.3mm} & $TO$ & $\bm{0.968}$ & $\bm{1}$ & $\bm{32}$ & $91.9$ & $392759$ & $\bm{0.968}$ & $\bm{1}$ & $\bm{32}$ & $\bm{14.2}$ & $\bm{60611}$ \\ \hline
    monk2-f & $0.922$ & $\bm{1}$ & $78$ & $1.07$ & \rule{0.4cm}{0.3mm} & \rule{0.4cm}{0.3mm} & \rule{0.4cm}{0.3mm} & $TO$ & $\bm{0.933}$ & $\bm{1}$ & $\bm{67}$ & $9.78$ & $149912$ & $\bm{0.933}$ & $\bm{1}$ & $\bm{67}$ & $\bm{2.94}$ & $\bm{28968}$ \\ \hline
    monk2-o & \rule{0.4cm}{0.3mm} & \rule{0.4cm}{0.3mm} & \rule{0.4cm}{0.3mm} & \rule{0.4cm}{0.3mm} & \rule{0.4cm}{0.3mm} & \rule{0.4cm}{0.3mm} & \rule{0.4cm}{0.3mm} & \rule{0.4cm}{0.3mm} & \rule{0.4cm}{0.3mm} & \rule{0.4cm}{0.3mm} & \rule{0.4cm}{0.3mm} & \rule{0.4cm}{0.3mm} & \rule{0.4cm}{0.3mm} & $\bm{0.955}$ & $\bm{1}$ & $\bm{45}$ & $\bm{0.18}$ & $\bm{1213}$ \\ \hline
    monk3 & $0.978$ & $\bm{1}$ & $25$ & $0.04$ & $\bm{0.985}$ & $\bm{1}$ & $\bm{15}$ & $7.87$ & $\bm{0.985}$ & $\bm{1}$ & $\bm{15}$ & $25.2$ & $185974$ & $\bm{0.985}$ & $\bm{1}$ & $\bm{15}$ & $\bm{4.05}$ & $\bm{14807}$ \\ \hline
    monk3-f & $0.975$ & $\bm{1}$ & $25$ & $0.04$ & $\bm{0.983}$ & $\bm{1}$ & $\bm{17}$ & $4.82$ & $\bm{0.983}$ & $\bm{1}$ & $\bm{17}$ & $2.09$ & $59151$ & $\bm{0.983}$ & $\bm{1}$ & $\bm{17}$ & $\bm{0.36}$ & $\bm{3026}$ \\ \hline
    monk3-o & \rule{0.4cm}{0.3mm} & \rule{0.4cm}{0.3mm} & \rule{0.4cm}{0.3mm} & \rule{0.4cm}{0.3mm} & \rule{0.4cm}{0.3mm} & \rule{0.4cm}{0.3mm} & \rule{0.4cm}{0.3mm} & \rule{0.4cm}{0.3mm} & \rule{0.4cm}{0.3mm} & \rule{0.4cm}{0.3mm} & \rule{0.4cm}{0.3mm} & \rule{0.4cm}{0.3mm} & \rule{0.4cm}{0.3mm} & $\bm{0.987}$ & $\bm{1}$ & $\bm{13}$ & $\bm{0.03}$ & $\bm{156}$ \\ \hline
    tic-tac-toe & \rule{0.4cm}{0.3mm} & \rule{0.4cm}{0.3mm} & \rule{0.4cm}{0.3mm} & \rule{0.4cm}{0.3mm} & \rule{0.4cm}{0.3mm} & \rule{0.4cm}{0.3mm} & \rule{0.4cm}{0.3mm} & $TO$ & $0.757$ & $0.792$ & $7$ & $TO$ & $2279999$ & $\bm{0.838}$ & $\bm{0.928}$ & $\bm{18}$ & $TO$ & $\bm{390000}$ \\ \hline
    tic-tac-toe-f & $\bm{0.850}$ & $\bm{0.945}$ & $\bm{19}$ & $\bm{16.4}$ & $\bm{0.850}$ & $\bm{0.945}$ & $\bm{19}$ & $207$ & $\bm{0.850}$ & $\bm{0.945}$ & $\bm{19}$ & $57.8$ & $1670379$ & $\bm{0.850}$ & $\bm{0.945}$ & $\bm{19}$ & $\bm{16.3}$ & $\bm{74627}$ \\ \hline
    tic-tac-toe-o & \rule{0.4cm}{0.3mm} & \rule{0.4cm}{0.3mm} & \rule{0.4cm}{0.3mm} & \rule{0.4cm}{0.3mm} & \rule{0.4cm}{0.3mm} & \rule{0.4cm}{0.3mm} & \rule{0.4cm}{0.3mm} & \rule{0.4cm}{0.3mm} & \rule{0.4cm}{0.3mm} & \rule{0.4cm}{0.3mm} & \rule{0.4cm}{0.3mm} & \rule{0.4cm}{0.3mm} & \rule{0.4cm}{0.3mm} & $\bm{0.773}$ & $\bm{0.858}$ & $\bm{17}$ & $\bm{0.68}$ & $\bm{3339}$ \\ \hline
    car-eval & $\bm{0.852}$ & $\bm{0.927}$ & $\bm{15}$ & $\bm{154}$ & \rule{0.4cm}{0.3mm} & \rule{0.4cm}{0.3mm} & \rule{0.4cm}{0.3mm} & $TO$ & $\bm{0.852}$ & $\bm{0.927}$ & $\bm{15}$ & $TO$ & $5893659$ & $\bm{0.852}$ & $\bm{0.927}$ & $\bm{15}$ & $204$ & $\bm{456452}$ \\ \hline
    car-eval-f & $\bm{0.799}$ & $\bm{0.869}$ & $\bm{14}$ & $58$ & \rule{0.4cm}{0.3mm} & \rule{0.4cm}{0.3mm} & \rule{0.4cm}{0.3mm} & $TO$ & $\bm{0.799}$ & $\bm{0.869}$ & $\bm{14}$ & $\bm{21.1}$ & $927221$ & $\bm{0.799}$ & $\bm{0.869}$ & $\bm{14}$ & $26.6$ & $\bm{108640}$ \\ \hline
    car-eval-o & \rule{0.4cm}{0.3mm} & \rule{0.4cm}{0.3mm} & \rule{0.4cm}{0.3mm} & \rule{0.4cm}{0.3mm} & \rule{0.4cm}{0.3mm} & \rule{0.4cm}{0.3mm} & \rule{0.4cm}{0.3mm} & \rule{0.4cm}{0.3mm} & \rule{0.4cm}{0.3mm} & \rule{0.4cm}{0.3mm} & \rule{0.4cm}{0.3mm} & \rule{0.4cm}{0.3mm} & \rule{0.4cm}{0.3mm} & $\bm{0.812}$ & $\bm{0.882}$ & $\bm{14}$ & $\bm{0.09}$ & $\bm{579}$ \\ \hline
    nursery & \rule{0.4cm}{0.3mm} & \rule{0.4cm}{0.3mm} & \rule{0.4cm}{0.3mm} & \rule{0.4cm}{0.3mm} & \rule{0.4cm}{0.3mm} & \rule{0.4cm}{0.3mm} & \rule{0.4cm}{0.3mm} & $TO$ & $0.810$ & $0.860$ & $5$ & $TO$ & $299999$ & $\bm{0.812}$ & $\bm{0.872}$ & $\bm{6}$ & $TO$ & $\bm{110000}$ \\ \hline
    nursery-f & $\bm{0.772}$ & $\bm{0.842}$ & $\bm{7}$ & $151$ & \rule{0.4cm}{0.3mm} & \rule{0.4cm}{0.3mm} & \rule{0.4cm}{0.3mm} & $TO$ & $0.765$ & $0.835$ & $7$ & $TO$ & $629999$ & $\bm{0.772}$ & $\bm{0.842}$ & $\bm{7}$ & $\bm{24.9}$ & $\bm{48063}$ \\ \hline
    nursery-o & \rule{0.4cm}{0.3mm} & \rule{0.4cm}{0.3mm} & \rule{0.4cm}{0.3mm} & \rule{0.4cm}{0.3mm} & \rule{0.4cm}{0.3mm} & \rule{0.4cm}{0.3mm} & \rule{0.4cm}{0.3mm} & \rule{0.4cm}{0.3mm} & \rule{0.4cm}{0.3mm} & \rule{0.4cm}{0.3mm} & \rule{0.4cm}{0.3mm} & \rule{0.4cm}{0.3mm} & \rule{0.4cm}{0.3mm} & $\bm{0.822}$ & $\bm{0.892}$ & $\bm{7}$ & $\bm{0.24}$ & $\bm{195}$ \\ \hline
    mushroom & $\bm{0.955}$ & $\bm{0.985}$ & $\bm{3}$ & $11.1$ & $\bm{0.955}$ & $\bm{0.985}$ & $\bm{3}$ & $\bm{10.8}$ & $0.925$ & $0.945$ & $2$ & $TO$ & $79999$ & $\bm{0.955}$ & $\bm{0.985}$ & $\bm{3}$ & $TO$ & $\bm{21000}$ \\ \hline
    mushroom-f & $\bm{0.945}$ & $\bm{0.985}$ & $\bm{4}$ & $143$ & $\bm{0.945}$ & $\bm{0.985}$ & $\bm{4}$ & $\bm{126}$ & $0.925$ & $0.945$ & $2$ & $TO$ & $99999$ & $\bm{0.945}$ & $\bm{0.985}$ & $\bm{4}$ & $TO$ & $\bm{24000}$ \\ \hline
    mushroom-o & \rule{0.4cm}{0.3mm} & \rule{0.4cm}{0.3mm} & \rule{0.4cm}{0.3mm} & \rule{0.4cm}{0.3mm} & \rule{0.4cm}{0.3mm} & \rule{0.4cm}{0.3mm} & \rule{0.4cm}{0.3mm} & \rule{0.4cm}{0.3mm} & \rule{0.4cm}{0.3mm} & \rule{0.4cm}{0.3mm} & \rule{0.4cm}{0.3mm} & \rule{0.4cm}{0.3mm} & \rule{0.4cm}{0.3mm} & $\bm{0.975}$ & $\bm{0.985}$ & $\bm{1}$ & $\bm{0.15}$ & $\bm{6}$ \\ \hline
    kr-vs-kp & \rule{0.4cm}{0.3mm} & \rule{0.4cm}{0.3mm} & \rule{0.4cm}{0.3mm} & \rule{0.4cm}{0.3mm} & \rule{0.4cm}{0.3mm} & \rule{0.4cm}{0.3mm} & \rule{0.4cm}{0.3mm} & $TO$ & $0.815$ & $0.845$ & $3$ & $TO$ & $159999$ & $\bm{0.900}$ & $\bm{0.940}$ & $\bm{4}$ & $TO$ & $\bm{46000}$ \\ \hline
    zoo & $0.989$ & $\bm{1}$ & $11$ & $0.04$ & $\bm{0.992}$ & $\bm{1}$ & $\bm{8}$ & $184$ & $\bm{0.992}$ & $\bm{1}$ & $\bm{8}$ & $87.3$ & $401799$ & $\bm{0.992}$ & $\bm{1}$ & $\bm{8}$ & $\bm{44.6}$ & $\bm{39199}$ \\ \hline
    zoo-f & $0.989$ & $\bm{1}$ & $11$ & $0.3$ & $\bm{0.992}$ & $\bm{1}$ & $\bm{8}$ & $23.4$ & $\bm{0.992}$ & $\bm{1}$ & $\bm{8}$ & $33.5$ & $300387$ & $\bm{0.992}$ & $\bm{1}$ & $\bm{8}$ & $\bm{2.09}$ & $\bm{4659}$ \\ \hline
    zoo-o & \rule{0.4cm}{0.3mm} & \rule{0.4cm}{0.3mm} & \rule{0.4cm}{0.3mm} & \rule{0.4cm}{0.3mm} & \rule{0.4cm}{0.3mm} & \rule{0.4cm}{0.3mm} & \rule{0.4cm}{0.3mm} & \rule{0.4cm}{0.3mm} & \rule{0.4cm}{0.3mm} & \rule{0.4cm}{0.3mm} & \rule{0.4cm}{0.3mm} & \rule{0.4cm}{0.3mm} & \rule{0.4cm}{0.3mm} & $\bm{0.993}$ & $\bm{1}$ & $\bm{7}$ & $\bm{0.87}$ & $\bm{1456}$ \\ \hline
    lymph & \rule{0.4cm}{0.3mm} & \rule{0.4cm}{0.3mm} & \rule{0.4cm}{0.3mm} & \rule{0.4cm}{0.3mm} & \rule{0.4cm}{0.3mm} & \rule{0.4cm}{0.3mm} & \rule{0.4cm}{0.3mm} & $TO$ & $0.790$ & $0.810$ & $2$ & $TO$ & $659999$ & $\bm{0.828}$ & $\bm{0.898}$ & $\bm{7}$ & $TO$ & $\bm{100000}$ \\ \hline
    lymph-f & \rule{0.4cm}{0.3mm} & \rule{0.4cm}{0.3mm} & \rule{0.4cm}{0.3mm} & \rule{0.4cm}{0.3mm} & \rule{0.4cm}{0.3mm} & \rule{0.4cm}{0.3mm} & \rule{0.4cm}{0.3mm} & $TO$ & $0.784$ & $0.804$ & $2$ & $TO$ & $1079999$ & $\bm{0.811}$ & $\bm{0.891}$ & $\bm{8}$ & $TO$ & $\bm{170000}$ \\ \hline
    lymph-o & \rule{0.4cm}{0.3mm} & \rule{0.4cm}{0.3mm} & \rule{0.4cm}{0.3mm} & \rule{0.4cm}{0.3mm} & \rule{0.4cm}{0.3mm} & \rule{0.4cm}{0.3mm} & \rule{0.4cm}{0.3mm} & \rule{0.4cm}{0.3mm} & \rule{0.4cm}{0.3mm} & \rule{0.4cm}{0.3mm} & \rule{0.4cm}{0.3mm} & \rule{0.4cm}{0.3mm} & \rule{0.4cm}{0.3mm} & $\bm{0.852}$ & $\bm{0.952}$ & $\bm{10}$ & $\bm{12.3}$ & $\bm{16154}$ \\ \hline
    balance & \rule{0.4cm}{0.3mm} & \rule{0.4cm}{0.3mm} & \rule{0.4cm}{0.3mm} & \rule{0.4cm}{0.3mm} & \rule{0.4cm}{0.3mm} & \rule{0.4cm}{0.3mm} & \rule{0.4cm}{0.3mm} & $TO$ & $0.712$ & $0.737$ & $5$ & $TO$ & $1119999$ & $\bm{0.776}$ & $\bm{0.806}$ & $\bm{8}$ & $TO$ & $\bm{640000}$ \\ \hline
    balance-f & \rule{0.4cm}{0.3mm} & \rule{0.4cm}{0.3mm} & \rule{0.4cm}{0.3mm} & \rule{0.4cm}{0.3mm} & \rule{0.4cm}{0.3mm} & \rule{0.4cm}{0.3mm} & \rule{0.4cm}{0.3mm} & $TO$ & $\bm{0.673}$ & $\bm{0.723}$ & $\bm{10}$ & $\bm{162}$ & $2292545$ & $\bm{0.673}$ & $\bm{0.723}$ & $\bm{10}$ & $190$ & $\bm{676149}$ \\ \hline
    balance-o & \rule{0.4cm}{0.3mm} & \rule{0.4cm}{0.3mm} & \rule{0.4cm}{0.3mm} & \rule{0.4cm}{0.3mm} & \rule{0.4cm}{0.3mm} & \rule{0.4cm}{0.3mm} & \rule{0.4cm}{0.3mm} & \rule{0.4cm}{0.3mm} & \rule{0.4cm}{0.3mm} & \rule{0.4cm}{0.3mm} & \rule{0.4cm}{0.3mm} & \rule{0.4cm}{0.3mm} & \rule{0.4cm}{0.3mm} & $\bm{0.713}$ & $\bm{0.763}$ & $\bm{10}$ & $\bm{0.004}$ & $\bm{178}$ \\
    \bottomrule
  \end{tabular}}
\end{table*}

We employ 11 datasets from the UCI repository. For each dataset we use different types of encodings: suffix -o indicates ordinal encoding, suffix -f indicates binary (one-hot) encoding where the first category of each feature is dropped and no suffix designates a binary encoding retaining all categories. An additional binary encoding dropping the last category is considered in monk1-l, the reason being to showcase that different dropping options can lead to widely differing solutions. Moreover, our motivation behind dropping a category in the first place pertains to reducing the number of resulting binary features, this in turn can greatly simplify the task of finding an optimal sparse DT and help the algorithms be more scalable as we observe in \cref{tab:branches-experiments-large-depths}. The downside however is that dropping a category can also yield more complex DT solutions than keeping all the categories during one-hot encoding. We note that the algorithms we compare with exclusively consider binary features, thus necessitating a preliminary binary encoding. This seemingly benign detail can significantly harm performance by introducing a large amount of unnecessary splits as we explain in \cref{sec:branches-ordinal-vs-binary}. \branches~can sidestep this issue because it supports non-binary DTs on ordinal encodings of the data, these types of DT structures are commonly known under the name multi-way splits. 

We set a time limit of 5 minutes for all algorithms and we run all the experiments on a personal Machine (Processor: 2,6 GHz 6-Core Intel Core i7; Memory: 16 GB). Moreover, since it is necessary to fix a maximum depth for DFS methods,  we set it to $20$ for MurTree and STreeD while the BFS methods GOSDT and \branches~run with infinite maximum depth. The aim being to analyse performance at a large maximum depth in these experiments. The results are summarised in \cref{tab:branches-experiments-large-depths}. For a detailed comparison across a wide range of maximum depths, refer to \cref{appendix:branches-depth-analysis}. \cref{tab:datasets} summarises the characteristics of the datasets we consider.

For MurTree, the implementation we used from \url{https://github.com/MurTree/pymurtree.git} displays a suboptimal behaviour (in terms of $\objectiver$) unlike STreeD, GOSDT and \branches. This happens for monk2, monk2-f, monk3, monk3-f, car-eval-f, nursery-f, zoo and zoo-f as shown in \cref{tab:branches-experiments-large-depths}. According to \url{https://bitbucket.org/EmirD/murtree/src/master/}, the authors of MurTree indicate the release of a \emph{a newer and more general version of the algorithm} referring to STreeD. For this reason, we will focus the remaining discussion on STreeD, GOSDT and \branches.

\paragraph{Comparing STreeD with \branches~and GOSDT:} STreeD is always optimal when it terminates similarly to GOSDT and \branches. On mushroom and mushroom-f it is clearly the superior method as it terminates in $10.8s$ and $126s$ respectively while both GOSDT and \branches~reach timeout. In \cref{appendix:branches-depth-analysis} we show that the main reason behind STreeD's success on mushroom is the depth 2 solver, a technique introduced by \citet{demirovic2022murtree}. On the other hand, despite reaching timeout, \branches~still proposes the true optimal sparse DT unlike GOSDT, thus showcasing the superiority of \branches' anytime behaviour. On the remaining datasets, STreeD's DFS strategy suffers from the large maximum depth while GOSDT and \branches~achieve optimality significantly faster even while running at an infinite maximum depth, this is especially the case for \branches~because STreeD outperforms GOSDT on monk3, monk3-f, mushroom, mushroom-f and zoo-f, while it outperforms \branches~only on mushroom and mushroom-f. Furthermore, STreeD's lack of anytime behaviour prevents it from proposing a solution when it reaches timeout. This happens for monk2, monk2-f, tic-tac-toe, car-eval, car-eval-f, nursery, nursery-f, kr-vs-kp, lymph, lymph-f, balance and balance-f.

The superiority of \branches~and GOSDT over STreeD on these experiments is mainly due to their BFS strategies, which tend to find the optimal solution quicker than DFS at the expense of more memory consumption. We note that while neither \branches~nor GOSDT ran out of memory on these experiments, such an undesirable outcome can happen in some cases. To alleviate this problem, hybrid methods, using BFS until exhausting a memory budget then switching to DFS, can be employed~\citep[Section 2.5]{pearl1984heuristics}. These extensions are outside the scope of our current work and we will consider them in a future work.

\paragraph{Comparing \branches~with GOSDT:} \branches~outperforms GOSDT on all the experiments except car-eval-f and balance-f where GOSDT terminates faster. Sometimes the difference in speed is so significant that \branches~terminates while GOSDT reaches timeout and as a consequence fails to produce the optimal solution. This happens in car-eval and nursery-f. Moreover, whenever both algorithms reach timeout, \branches~proposes a better solution than GOSDT's, hence displaying a better anytime behaviour. To see this, refer to tic-tac-toe, nursery, mushroom, kr-vs-kp, lymph, lymph-f and balance. Another important point is that the experiments on ordinal encoded datasets terminate significantly faster than their binary encoded counterparts. Indeed, no experiment with ordinal encoding reaches timeout, and the slowest one is lymph-o with~$12.3s$. In \cref{sec:branches-ordinal-vs-binary} we discuss the reasons behind this phenomenon. Furthermore, on many occasions, the non-binary optimal sparse DTs obtained with ordinal encoding are of better quality (in terms of $\objectiver$) than those obtained through a preliminary binary encoding. This happens for monk3-o, nursery-o, mushroom-o, zoo-o and lymph-o, but this is not the case for the remaining datasets. In fact, we shall see in \cref{sec:branches-ordinal-vs-binary} that the obtained non-binary sparse DTs can themselves be \emph{collapsed} into sparser solutions with fewer splits. Knowing that scalability is a major issue in the literature of optimal sparse DTs, the ability to support non-binary DTs is very helpful in this regard. \branches~being the only method that displays this property has a clear advantage from a scalability standpoint.

Additional experiments are provided in \cref{appendix:experiments}:
\begin{itemize}
    \item \cref{appendix:branches-vs-python} compares \branches~with Python implemented algorithms OSDT and PYGOSDT.
    \item \cref{appendix:pareto} compares \branches, CART and DL8.5 in terms of their Pareto fronts with respect to the joint optimisation of accuracy and number of splits.
    \item \cref{appendix:suboptimality-dl85} investigates the suboptimality of DL8.5 with respect to the joint optimisation of accuracy and number of splits.
    \item \cref{appendix:branches-depth-analysis} extends the comparison between \branches, GOSDT and STreeD to a wide range of values for the maximum depth.
\end{itemize}

\section{Conclusion and Future Work}

In this work, we have developed \branches, a novel algorithm seeking optimal sparse DTs. \branches~leverages an AO* approach to solve the problem within an AND/OR graph search framework and employs a custom heuristic called the Purification bound. We have shown that \branches~is optimal and satisfies better theoretical complexity guarantees than those derived in the literature. Furthermore, we illustrated through multiple experiments that \branches~outperforms the state of the art in terms of runtime, number of iterations and anytime behaviour. It is especially worth noting that \branches~outperforms its C++ competitors in runtime despite being currently implemented in Python, which indicates that a future C++ implementation of \branches~promises to widen the performance gap further with the other methods.

There are several ideas to extend this work in the future. A straightforward one is to implement \branches~in C++ and incorporate multi-threading as follows: During the Selection step, whenever we take a split action, we could choose a subset of the children branches and keep running Selection from each one of them in parallel. As a consequence, the Selection step would return a subset of branches, for which we run Expansion in parallel then Backpropagation in a synchronous parallel fashion. This would yield a notably better pruning of the search graph and as a result a significantly faster optimal convergence. Another promising idea is to incorporate the depth-2 solver~\cite{demirovic2022murtree}, we saw in \cref{appendix:branches-depth-analysis} how it is crucial to STreeD's performance. Moreover, to alleviate high memory consumption issues we could design a hybrid strategy between BFS and DFS as described in \citep[Section 2.5]{pearl1984heuristics}. Lastly, we could improve the Purification Bound by using a strong auxiliary classifier $f$ such as a Multi-Layer Perceptron. Given a branch $l$ for which we want to initialise the heuristic $\vf\left( l\right)$, if we can derive the number $m$ of employed variables by $f$ in $l$ then we can tighten the Purification Bound with:
\[
\vf\left( l\right) = \max\Big\{ \objective\left( l\right), -m\lambda + \prob\left[ l\left( X\right)=1, f\left( X\right) = Y\right]\Big\}
\]

\section*{Acknowledgements}



We would like to thank Mathijs M. de Weerdt and Pierre Schaus for their helpful discussions, and the anonymous reviewers for their valuable feedback.

\section*{Impact Statement}

This paper presents work whose goal is to advance the field
of Machine Learning. There are many potential societal
consequences of our work, none which we feel must be
specifically highlighted here.


\bibliography{example_paper}
\bibliographystyle{icml2025}

\newpage
\appendix
\onecolumn
\section{Notation}
\label{appendix:notation}

\begin{table}[htbp]\caption{Table of Notation}
\begin{center}
\scalebox{1}{
\begin{tabular}{r c p{10cm} }
\toprule
$X$ & $=$ & $\left( X^{\left( 1\right)}, \ldots, X^{\left( q\right)}\right)$, a datum.\\
$X^{\left( i\right)}$ & $\in$ & $\{1, \ldots, C_i\}$, a feature.\\
$Y$ & $\in$ & $\{ 1, \ldots, K\}$, a class.\\
$\mathcal{D}$ & $=$ & $\{\left( X_m, Y_m\right)\}_{m=1}^n$, dataset of examples.\\
$l$ & $=$ & $\bigwedge_{v=1}^{\splits\left( l\right)}\ind\Big\{ X^{\left( i_v\right)} = j_v\Big\}$ a branch. Also, a non-terminal state.\\
$\splits\left( l\right)$ & $\triangleq$ & The size of branch $l$, the number of splits in $l$, the number of clauses in $l$.\\
$l\left( X\right)$ & $\triangleq$ & Valuation of $l$ for input $X$. When $l\left( X\right) = 1$, we say that $X$ is in $l$.\\
$\Omega$ & $\triangleq$ & The root. Branch that valuates to $1$ for all possible inputs.\\
$\children\left( l, i\right)$ & $\triangleq$ & Children of $l$ when splitting it with respect to feature $i$.\\
$\children\left( l, i\right)$ & $=$ & $\{ l_1, \ldots, l_{C_i}\}$, $l_j = l \wedge \ind\big\{ X^{\left( i\right)} = j\big\}$\\
$n\left( l\right)$ & $=$ & $\sum_{m=1}^n l\left( X_m\right)$ Number of examples in $l$.\\
$n_k\left( l\right)$ & $=$ & $\sum_{m=1}^n l\left( X_m\right)\ind\{ Y_m = k\}$ Number of examples in $l$ of class $k$.\\
$k^*\left( l\right)$ & $=$ & $ \mathrm{Argmax}_{1 \le k \le K} \{ n_k\left( l\right)\}$, majority class in $l$\\
$\objective\left( l\right)$ & $=$ & $\prob\left[ l\left( X\right) = 1, Y = k^*\left( l\right)\right]$, probability that an example is in $l$ and is correctly classified.\\
sub-DT rooted in $l$ & $\triangleq$ & Collection of branches tha stem from a series of splits from $l$.\\
DT & $\triangleq$ & A sub-DT rooted in the root $\Omega$.\\
$T\left( X\right)$ & $\triangleq$ & Predicted class of $X$ by $T$. Majority class of the branch containing $X$.\\
$\objective\left( T\right)$ & $=$ & $\prob\left[ T\left( X\right) = Y\right]$, accuracy of DT $T$.\\
$\objectiver\left( T\right)$ & $=$ & $\prob\left[ T\left( X\right) = Y\right] - \lambda\splits\left( T\right)$, the objective evaluating $T$.\\
$\splits\left( T\right)$ & $\triangleq$ & Number of splits to construct $T$ from the branch where it is rooted.\\
$\lambda$ & $\in$ & $\left] 0, 1\right]$, penalty parameter.\\
$T^*$ & $=$ & $\mathrm{Argmax}_T\{ \objectiver\left( T\right)\}$, optimal DT.\\
$\overline{a}$ & $\triangleq$ & Terminal action.\\
$\overline{l}$ & $\triangleq$ & Terminal state that stems from taking $\overline{a}$ in $l$.\\
$\vf^\pi\left( l\right)$ & $\triangleq$ & Value of policy $\pi$ starting at non-terminal state $l$.\\
$\qf^\pi\left( T, a\right)$ & $\triangleq$ & State-action value of policy $\pi$ at non-terminal state $l$ and action $a$.\\
$T^\pi_l$ & $\triangleq$ & Sub-DT of $\pi$ rooted in $l$. See \cref{prop:tree-policy}.\\
$\tau^\pi_l$ & $\triangleq$ & Minimum time for policy $\pi$ to get to $T_l^\pi$.\\
$T^\pi$ & $\equiv$ & $T^\pi_\Omega$\\
$\pi^*$ & $\in$ & $\mathrm{Argmax}_\pi\vf^\pi\left( \Omega\right)$, an optimal policy.\\
$T^*$ & $=$ & $T^{\pi^*}$\\
$\vf^*$ & $\equiv$ & $\vf^{\pi^*}$\\
$\qf^*$ & $\equiv$ & $\qf^{\pi^*}$\\
$\tau_l^*$ & $\equiv$ & $\tau_l^{\pi^*}$\\
$\vf\left( l\right)$ & $\triangleq$ & Heuristic estimate of $\vf^*\left( l\right)$.\\
$\qf\left( l, a\right)$ & $\triangleq$ & Heuristic estimate of $\qf^*\left( l, a\right)$\\
$\widetilde{\pi}$ & $\triangleq$ & Selection policy $\widetilde{\pi}\left( l\right) = \mathrm{Argmax}_{a \in \action\left( l\right)} \qf\left( l, a\right)$.\\
\bottomrule
\end{tabular}}
\end{center}
\label{tab:branches-notation}
\end{table}


\section{Choosing an UNSOLVED branch during Selection}
\label{appendix:selection}

Regarding the choice of an UNSOLVED branch introduced in the Selection process in \cref{sec:strategy}, we choose branch $l$ of lowest $\objectiver\left( \widehat{T}_l\right)$ where $\widehat{T}_l$ is the best sub-DT rooted in $l$ found so far in terms of the objective $\objectiver$. The reason is to quickly prune branches that contribute low $\objectiver\left( \widehat{T}_l\right)$ to the solution. Indeed, this approach either yields better subtrees rooted in these branches, or bad subtrees that indicate this region to be unpromising, hence incentivizing the algorithm to search elsewhere. This selection approach is likely to induce a good anytime behaviour. We note that a similar idea was proposed in \citep[p.107]{nilsson2014principles}:
\begin{center}
    \textit{Possibly the expansion of that leaf node having the highest h value would most likely result in a changed estimate.}
\end{center}
The $h$ value in \citep[p.107]{nilsson2014principles} refers to the heuristic value, which is akin to our $\vf$ value albeit for a minimisation context instead of a maximisation one. As such, a direct application of \citep[p.107]{nilsson2014principles} to our setting would yield choosing the UNSOLVED branch $l$ with lowest $\vf\left( l\right)$. We noticed in practice that employing $\objectiver\left( \widehat{T}_l\right)$ instead yields better anytime behaviour for the reasons explained above. Currently, we have not investigated other strategies of this type, and it is a promising idea for future work.

\section{Anytime behaviour}
\label{appendix:anytime}

There are settings where the search takes a substantial or even unfeasible amount of time to terminate. In these situations, we set a time limit on \branches' execution and even if it does not terminate within this time, we expect it to still return a good solution. This property is called the anytime behaviour. For BFS methods, it is straightforward to implement while it is less trivial for DFS algorithms. STreeD for example, does not enjoy the anytime property.

We formulate \branches' anytime behaviour as follows. For each branch $l$ that is currently in the memo, we define its greedy value $\widehat{\vf}\left( l\right)$ recursively as:
\begin{equation}
    \label{eq:value-greedy}
    \widehat{\vf}\left( l\right) =
    \begin{cases}
         \objective\left( l\right) \; \textrm{if $l$ is a tip node (leaf) of the current search graph $G$.}\\
        \max_{a \in \action\left( l\right)\setminus \{ \overline{a}\}}\Big\{ \objective\left( l\right), -\lambda + \sum_{l' \in \children\left( l, a\right)}\widehat{\vf}\left( l'\right)\Big\}
    \end{cases}
\end{equation}
Basically $\widehat{\vf}\left( l\right)$ is the best value of the regularised objective found so far for all sub-DTs rooted in $l$. Based on this definition, we also introduce the greedy state-action values and the greedy policy:
\begin{align*}
    \widehat{\qf}\left( l, a\right) &= 
    \begin{cases}
        \objective\left( l\right) \; \textrm{if $a = \overline{a}$}\\
        -\lambda + \sum_{l' \in \children\left( l, a\right)}\widehat{\vf}\left( l'\right) \; \textrm{if $a \in \action\left( l\right) \setminus \{ \overline{a}\}$}
    \end{cases}\\
    \widehat{\pi}\left( l\right) &= \mathrm{Argmax}_{a \in \action\left( l\right)}\widehat{\qf}\left( l, a\right)
\end{align*}
The estimates $\widehat{\vf}\left( l\right)$ and $\widehat{\qf}\left( l, a\right)$ are updated in similar fashion to $\vf\left( l\right)$ and $\qf\left( l, a\right)$ during the Backpropagation step, i.e. via \eqref{eq:state-heuristic} and \eqref{eq:state-action-heuristic}. When \branches~reaches timeout we unroll $\widehat{\pi}$ from $\Omega$ to get $T_\Omega^{\widehat{\pi}}$ the best DT found so far in terms of the objective $\objectiver$. $T_\Omega^{\widehat{\pi}}$ is the proposed solution by \branches~when it runs out of time.

Unfortunately, unlike settings where \branches~terminates, the anytime behaviour does not provide any theoretical guarantees with respect to the quality of the proposed solution. Such guarantees necessitate the assumption of some smoothness properties of $\objectiver$ on the state space $\splits$. Nevertheless, the use of selection strategies such as the one we propose in \cref{appendix:selection} can lead to empirically satisfying results for \branches' anytime behaviour.

\section{Implementation Details}
\label{appendix:implementation}

The search strategy we introduced in \cref{sec:strategy} is an abstract description of \branches. In this section, we provide concrete elements for the implementation of the algorithm, along with micro-optimisation techniques that substantially improve its computational efficiency.

\subsection{Branch objects}
\label{appendix:branches-object}

For each branch $l = \bigwedge_{v=1}^{\splits\left( l\right)} \ind \{ X^{\left( i_v\right)} = j_v\}$, we define an object with the following elements:
\begin{itemize}
    \item \verb|id_branch|: $l$ is identified with the unique string $"(i_1, j_1)(i_1, j_2) \ldots (i_{\splits\left( l\right)}, j_{\splits\left( l\right)})"$. We recall that this string is unique because we impose the condition $i_1 < i_2 < \ldots < i_{\splits\left( l\right)}$. We store each encountered branch in a memo dictionary using its identifier.
    \item \verb|attributes_categories|: Dictionary containing the number of categories per unused feature in $l$. We recall that the set of unused features is the set of split actions.
    \item \verb|bit_vector|: Vector of the indices of the data contained in $l$. This vector allows quick access to the data in $l$.
    \item \verb|children|: Dictionary containing the children of $l$, i.e. the set $\children\left( l, i\right)$ for each unused feature $i$ in $l$. Initialised with an empty dictionary.
    \item \verb|attribute_opt|: The greedy action $\widehat{\pi}\left( l\right)$ as per the definition in \cref{appendix:anytime}. If $\widehat{\pi}\left( l\right) = \overline{a}$ then we set \verb|attribute_opt| to \verb|None|.
    \item \verb|terminal|: Boolean describing whether $l$ is terminal or not. 
    \item \verb|complete|: Boolean describing whether $l$ is SOLVED or not.
    \item \verb|value|: The estimated value $\vf\left( l\right)$.
    \item \verb|value_terminal|: The value of the terminal action at $l$.
    \[
    \qf\left( l, \overline{a}\right) = \qf^*\left( l, \overline{a}\right) = \objective\left( l\right) = \prob\left[ l\left( X\right) = 1, k^*\left( l\right) = Y\right] = \frac{n_{k^*\left( l\right)}\left( l\right)}{n}
    \]
    \item \verb|value_greedy|: The greedy value $\widehat{\vf}\left( l\right)$ as per the definition in \cref{appendix:anytime}.
    \item \verb|freq|: Proportion of examples in $l$:
    \[
    \verb|freq| = \prob\left[ l\left( X\right) = 1\right] = \frac{n\left( l\right)}{n} = \frac{1}{n}\sum_{m=1}^n l\left( X_m\right)
    \]
    \item \verb|pred|: Majority class at $l$:
    \[
    \verb|pred| = k^*\left( l\right) = \mathrm{Argmax}_{1 \le k \le K}n_k\left( l\right)
    \]
    \item \verb|queue|: Heap queue containing \verb|(-value, -value_complete, attribute, children)| tuples. For each unused feature (split action) \verb|attribute|: \verb|value| is the estimate:
    \[
    \verb|value| = \qf\left( l, \verb|attribute|\right) = -\lambda + \sum_{l' \in \children\left( l, \verb|attribute|\right)}\vf\left( l'\right)
    \] 
    On the other hand, \verb|value_complete| is the sum of the estimated values $\vf\left( l'\right)$ of the children $l' \in \children\left( l, \verb|attribute|\right)$ that are SOLVED. By definition, the SOLVED children $l'$ satisfy $\vf\left( l'\right) = \vf^*\left( l'\right)$, we store the sum of their values in \verb|value_complete|, which serves to efficiently update $\qf\left( l, \verb|attribute|\right)$ during the Backpropagation step. \verb|children| is a dictionary containing the UNSOLVED children, it is from this dictionary that we choose the next branch to visit during the Selection step. During Backpropagation, If an UNSOLVED branch $l'$ in \verb|children| becomes SOLVED, it is discarded from \verb|children|. We note that these tuples are stored in \verb|queue|, thus the first element of \verb|queue| is always the tuple with the highest \verb|value|, i.e. \verb|queue[0][2]| is the split action maximising $\qf\left( l, a\right)$. This is the reason we store \verb|-value| in the tuple, Python implements min heaps instead of max heaps. We do not need to sort all actions by their values, but rather to just keep track of the action with the highest value. As a result, $l$ becomes SOLVED if and only if one of the following holds:
    \begin{itemize}
        \item The terminal action is the current best action:
        \[
        \qf\left( l, \overline{a}\right) = \mathrm{Argmax}_{a \in \action\left( l\right)}\qf\left( l, a\right)
        \]
        which happens if:
        \[
        -\verb|queue[0][0]| \le \verb|value_terminal|
        \]
        \item The dictionary of UNSOLVED children that result from taking the current best split action in $l$ \verb|queue[0][3]| is empty.
    \end{itemize}
    There is an additional benefit to storing \verb|-value_complete|. When there are multiple split actions \verb|attribute| that maximise \verb|value|, then we prioritise the one maximising \verb|value_complete|. This further serves as a heuristic allowed by the flexibility of \branches' selection step compared to GOSDT's. To see the benefit of this scheme, consider such situation and suppose that one of these split actions, \verb|attribue|, maximising the estimate is such that $\children\left( l, \verb|attribute|\right)$ are all SOLVED. Ideally we want to prioritise this action, i.e. we want \verb|queue[0][2] = attribute|. The reason behind this is that it allows us to immediately conclude that $l$ is SOLVED and that $\verb|attribute| = \mathrm{Argmax}_{a \in \action\left( l\right) \setminus \{\overline{a}\}}\qf^*\left( l, a\right)$. This is exactly what happens by using a priority based on \verb|value_complete| as well. Indeed, if $\children\left( l, \verb|attribute|\right)$ are all SOLVED, then by definition \verb|attribute| maximises \verb|value_complete| among all the split actions maximising \verb|value|.
\end{itemize}

\subsection{The Algorithm}
\label{appendix:implementation-algorithm}

In this section, we go over \branches' search strategy, introduced in \cref{sec:strategy}, and we outline it from an implementation perspective. We initialise the root $\Omega$, then we apply the search steps at each iteration as follows:

\paragraph{Selection:} Initialise the current branch $l = \Omega$ and the path list to \verb|path = [l]|. While $l$ is UNSOLVED and $l$.\verb|children| is not empty, i.e. $l$ has been expanded. Consider the tuple:
\[
\verb|(-value, -value_complete, attribute, children)| = l.\verb|queue[0]|
\]
As we have seen in \cref{appendix:branches-object}, \verb|attribute| is the optimal split action with respect to the current estimates $\qf\left( l, a\right)$ and \verb|children| is the subset of UNSOLVED children in $\children\left( l, \verb|attribute|\right)$. Therefore, we choose the next branch $l$ from the dictionary \verb|children|. As explained in \cref{appendix:selection}, here choosing the UNSOLVED child of lowest \verb|value_greedy| is our practical choice. Append $l$ to \verb|path|.

\paragraph{Expansion:} Let $l$ be the Selected branch. If $l.$\verb|complete|, we go to the Backpropagation step. Otherwise, for each (unused) feature-category pair $\left( i, j\right) \in l.$\verb|attributes_categories| let $l_{ij} = l \wedge \ind\{ X^{\left( i\right)} = j\}$ be the child branch of $l$ that corresponds to feature $i$ taking the value $j$. Our objective is to calculate $\vf\left( l_{ij}\right)$. We first check whether $l_{ij}.$\verb|id_branch| is in the memo, if it is, then we can directly access $\vf\left( l_{ij}\right)$. Otherwise, we initialise $\vf\left( l_{ij}\right)$ according to \cref{prop:branches-bound}. To do this efficiently, consider a fixed feature $i$ and let us go over its categories $j \in \{ 1, \ldots, C_i\}$ one by one. For $l_{i1}$, we first extract the data in $l$ using $l.$\verb|bit_vector|:
\[
\mathcal{D}_l = \{ X_m \in \mathcal{D}: l\left( X_m\right) = 1\} = \mathcal{D}\verb|[|l.\verb|bit_vector]|
\]
Since $l_{i1}\left( X\right) = 1 \implies l\left( X\right) = 1$, we can extract the data in $l_{i1}$ directly from the smaller set $\mathcal{D}_l$ instead of $\mathcal{D}$:
\[
\mathcal{D}_{l_{i1}} = \{ X_m \in \mathcal{D} : l_{i1}\left( X_m\right)=1\} = \{ X_m \in \mathcal{D}_l : l_{i1}\left( X_m\right)=1\}
\]
The indices of the data in $\mathcal{D}_{l_{i1}}$ form the vector $l_{i1}.$\verb|bit_vector|. Now we can initialise $\vf\left( l_{i1}\right)$ with \cref{prop:branches-bound} using $\mathcal{D}_{l_{i1}}$. For $l_{i2}$, if $l_{i2}$.\verb|id_branch| is not in the memo, then to initialise $\vf\left( l_{i2}\right)$, instead of extracting $\mathcal{D}_{l_{i2}}$ from $\mathcal{D}_l$ via:
\[
\mathcal{D}_{l_{i2}} = \{ X_m \in \mathcal{D}_l : l_{i2}\left( X_m\right)=1\}
\]
We rather use the fact that $l_{i1}$ and $l_{i2}$ are mutually exclusive, in the sense that:
\[
\forall{X \in \mathcal{X}}: l_{i2}\left( X\right) = 1 \implies l_{i1}\left( X\right) = 0
\]
Which means that we can extract $\mathcal{D}_{l_{i2}}$ from the smaller set $\mathcal{D}_l\setminus \mathcal{D}_{l_{i1}}$ instead of $\mathcal{D}_l$ and then initialise $\vf\left( l_{i2}\right)$. We repeat this process for all categories $j \in \{ 1, \ldots, C_i\}$ and then we do the same thing for the remaining unused features in $l$.\verb|attributes_categories|. These micro-optimisations we perform allow for a substantial computational gain.

\paragraph{Backpropagation:} For $j=length\left( \verb|path|\right)-1, \ldots, 1$ let \verb|parent = path[j-1]| and \verb|child = path[j]|, then we pop the heap queue \verb|parent.queue|:
\[
\verb|(-value, -value_complete, attribute, children)| = \verb|parent.queue.pop()|
\]
During the Selection step, \verb|attribute| was the action taken at the branch \verb|parent| to transition to the branch \verb|child|. Now during Backpropagation, we need to update the estimates $\qf\left( \verb|parent, attribute|\right)$ and $\vf\left( \verb|parent|\right)$, hence why we pop the corresponding tuple from \verb|parent.queue|, and once we update its values, we will push the tuple back in the heap queue. If \verb|child.complete| then we add its value to \verb|value_complete|:
\[
\verb|value_complete| \gets \verb|value_complete + child.value|
\]
and we pop \verb|child| from the dictionary of UNSOLVED children \verb|children.pop(child)|. Now we push the tuple back in \verb|parent.queue|, this rearranges the tuples is such a way that the most promising one (with maximum \verb|value|) is at \verb|parent.queue[0]|. The next step is to check: 
\[
\verb|(-value, -value_complete, attribute, children)| = \verb|parent.queue[0]|
\]
if \verb|attribute| is the same attribute we have just treated with the last tuple, then we stop the update of \verb|parent.queue|. Otherwise, we pop the queue again and we update this tuple in similar fashion to what we did with the previous tuple. The reason we check this is that an update could already be made here because the UNSOLVED children in dictionary \verb|children| might have been updated during other iterations of \branches. We continue this process until we get an \verb|attribute| that has already been treated in this process, in which case there is no need for further updates. Now \verb|parent.queue[0]| is the tuple corresponding to the best split action:
\[
\verb|(-value, -value_complete, attribute, children)| = \verb|parent.queue[0]|
\]
Therefore, the value of \verb|parent| is equal to the maximum between the value of taking this best split action and the value of taking the terminal action:
\[
\vf\left( \verb|parent|\right) = \max\Big\{ \qf\left( \verb|parent|, \overline{a}\right), \qf\left( \verb|parent|, \verb|attribute|\right)\Big\}
\]
Which, in our implementation translates to:
\[
\verb|parent.value| \gets \max\Big\{ \verb|parent.value_terminal|, \verb|value|\Big\}
\]
Moreover, if $\vf\left( \verb|parent|\right) = \qf\left( \verb|parent|, \overline{a}\right)$, then $\overline{a} = \mathrm{Argmax}_{a \in \action\left( \verb|parent|\right)}\qf\left( \verb|parent|, a\right)$, and since we know that $\qf^*\left( \verb|parent|, \overline{a}\right)=\qf\left( \verb|parent|, \overline{a}\right)$ (according to \cref{eq:state-action-heuristic-terminal}), then we deduce that \verb|parent| is SOLVED and $\vf^*\left( \verb|parent|\right) = \qf^*\left( \verb|parent|, \overline{a}\right)$. Therefore we update:
\[
\verb|parent.complete| \gets \verb|True|
\]
This is not the only condition that makes \verb|parent| SOLVED. Indeed, \verb|parent| can also be SOLVED if $\children\left( \verb|parent, attribute|\right)$ are all SOLVED, which happens when the dictionary \verb|children| is empty.

\begin{algorithm}[H]
\caption{\branches}\label{alg:branches}

\begin{algorithmic}[1]
\State \textbf{Input:} Dataset $\mathcal{D}=\{\left( X_m, Y_m\right)\}_{m=1}^n$, penalty parameter $\lambda \ge 0$.
\State $\mathrm{memo} \gets \{ \}$ \Comment{Initialise an empty memo}
\State $\Call{Initialise}{\Omega, \mathcal{D}}$
\While{not $\Omega.\mathrm{complete}$} \Comment{While the root $\Omega$ is not SOLVED}
    \State $\left( l, \mathrm{path}\right) \gets \Call{Select}{\null}$
    \If{$l.\mathrm{complete}$} \Comment{If $l$ is SOLVED}
        \State $\Call{Backpropagate}{\mathrm{path}}$
    \Else
        \State $\Call{Expand}{l, \mathcal{D}}$
        \State $\Call{Backpropagate}{\mathrm{path}}$
    \EndIf
\EndWhile
\State \Return $\Call{Infer}{\null}$
\Procedure{Select}{\null}
    \State $l \gets \Omega$
    \State $\mathrm{path} \gets \left[ l\right]$
    \While{$l.\mathrm{expanded}$ and (not $l.\mathrm{complete}$)}
        \State $\left( \qf\left( l, i\right), \mathrm{return\_complete}, i, \mathrm{children\_incomplete}\right) \gets l.\mathrm{queue}[0]$
        \State $l \gets \mathrm{children\_incomplete}\left[ 0\right]$ 
        \State $\mathrm{path}.\mathrm{append}\left( l\right)$
    \EndWhile
    \State \Return $\left( l, \mathrm{path}\right)$
\EndProcedure
\Procedure{Expand}{$l, \mathcal{D}$}
    \State $l.\mathrm{expanded} \gets True$
    \For{$i \in \action\left( l\right)\setminus \{ \overline{a}\}$}
        \State $\Call{Split}{l, i, \mathcal{D}}$
        \State $\vf\left( l\right) \gets \max\big\{ \qf\left( l, \overline{a}\right), l.\mathrm{queue}\left[ 0\right][0]\big\}$ \Comment{This update comes from \cref{eq:state-heuristic}}
    \EndFor
    \If{$\vf\left( l\right) = \qf\left( l, \overline{a}\right)$} \Comment{In this case $\vf^*\left( l\right) = \qf^*\left( l, \overline{a}\right) = \objective\left( l\right)$}
        \State $l.\mathrm{complete} \gets True$ \Comment{$\vf^*\left( l\right)$ is known}
        \State $l.\mathrm{terminal} \gets True$ \Comment{Label $l$ terminal if the optimal action at $l$ is $\pi^*\left( l\right) = \overline{a}$}
    \EndIf
\EndProcedure
\end{algorithmic}
\end{algorithm}

\begin{algorithm}[H]
\begin{algorithmic}[1]
\setcounter{ALG@line}{34}
\Procedure{Backpropagate}{$\mathrm{path}$}
    \State $N \gets \mathrm{length}\left( \mathrm{path}\right)$
    \For{$t = N - 2$ to $0$}
        \State $l \gets \mathrm{path}\left[ t\right]$
        \State $\left( \qf\left( l, i\right), \mathrm{return\_complete}, i, \mathrm{children\_incomplete}\right) \gets l.\mathrm{queue}.\mathrm{pop}\left( \right)$
        \State $\qf\left( l, i\right) \gets \mathrm{return\_complete}$ \Comment{Initialise $\qf\left( l, i\right)$}
        \For{$l' \in \mathrm{children\_incomplete}$}
            \State $\qf\left( l, i\right) \gets \qf\left( l, i\right) + \vf\left( l'\right)$
            \If{$l'.\mathrm{complete}$} \Comment{Check if $l'$ is SOLVED now}
                \State $\mathrm{children\_incomplete}.\mathrm{discard}\left( l'\right)$ \Comment{Delete $l'$ from $\mathrm{children\_incomplete}$}
            \EndIf
        \EndFor
        \State $l.\mathrm{queue}.\mathrm{push}\left( \left( \qf\left( l, i\right), \mathrm{return\_complete}, i, \mathrm{children\_incomplete}\right)\right)$
        \State $\left( \qf\left( l, i^*\right), \mathrm{return\_complete}, i^*, \mathrm{children\_incomplete}\right) \gets l.\mathrm{queue}\left[ 0\right]\left[ 0\right]$
        \State $\vf\left( l\right) \gets \qf\left( l, i^*\right)$
        \If{($\vf\left( l\right) = \qf\left( l, \overline{a}\right)$) or ($\mathrm{children\_incomplete}$ is empty)}
            \State $l.\mathrm{complete} \gets True$
            \State $l.\mathrm{terminal} \gets True$ \Comment{Label $l$ terminal if the optimal action at $l$ is $\pi^*\left( l\right) = \overline{a}$}
        \EndIf
    \EndFor
\EndProcedure
\Procedure{Infer}{\null}
    \State $T \gets \left[ \right]$
    \State $Q \gets \mathrm{queue}\left( \right)$
    \State $Q.\mathrm{put}\left( \Omega\right)$
    \While{$Q$ is not empty}
        \State $l \gets Q.\mathrm{pop}\left( \right)$
        \If{$l.\mathrm{terminal}$}
            \State $T.\mathrm{append}\left( l\right)$
        \Else
            \State $\left( \qf\left( l, i\right), \mathrm{return\_complete}, i, \mathrm{children\_incomplete}\right) \gets l.\mathrm{queue}[0]$
            \For{$l' \in l.\mathrm{children}\left[ i\right]$}
                \State $Q.\mathrm{put}\left( l'\right)$
            \EndFor
        \EndIf
    \EndWhile
    \State \Return $T$
\EndProcedure
\end{algorithmic}
\end{algorithm}

\begin{algorithm}[H]
\begin{algorithmic}[1]
\setcounter{ALG@line}{72}
\Procedure{Initialise}{$l, \mathcal{D}$}
    \State $l.\mathrm{expanded} \gets False$ \Comment{Label $l$ as not expanded yet}
    \State $l.\mathrm{children} \gets \mathrm{dict}\left( \right)$ \Comment{Initialise the dictionary of children}
    \State $l.\mathrm{queue} \gets \mathrm{queue}\left( \left[ \right]\right)$ \Comment{Initialise the priority queue of $l$}
    \State $\qf\left( l, \overline{a}\right) \gets \objective\left( l\right)$ \Comment{$\objective\left( l\right)$ is calculated with $\mathcal{D}$}
    \If{$\action\left( l\right) = \{ \overline{a}\}$}
        \State $l.\mathrm{terminal} \gets True$ \Comment{Label $l$ as terminal if it cannot be split}
        \State $l.\mathrm{complete} \gets True$ \Comment{$l$ is SOLVED, $\vf^*\left( l\right)$ is known}
        \State $\vf\left( l\right) \gets \qf\left( l, \overline{a}\right)$ \Comment{In this case $\vf^*\left( l\right) = \qf^*\left( l, \overline{a}\right) = \objective\left( l\right)$}
    \Else
        \State $l.\mathrm{terminal} \gets False$
        \State Initialise $\vf\left( l\right)$ according to \cref{eq:state-heuristic} and \cref{eq:state-action-heuristic}
        \If{$\vf\left( l\right) = \qf\left( l, \overline{a}\right)$}
            \State $l.\mathrm{complete} \gets True$ \Comment{$\vf^*\left( l\right)$ is known, $\vf^*\left( l\right) = \qf^*\left( l, \overline{a}\right) = \objective\left( l\right)$}
            \State $l.\mathrm{terminal} \gets True$ \Comment{Label $l$ terminal if the optimal action at $l$ is $\pi^*\left( l\right) = \overline{a}$}
        \Else
            \State $l.\mathrm{complete} \gets False$ \Comment{$\vf^*\left( l\right)$ is still unknown}
        \EndIf
    \EndIf
    \State $\mathrm{memo}.\mathrm{add}(l)$ \Comment{Add the initialised branch to the memo}
\EndProcedure
\Procedure{Split}{$l, i, \mathcal{D}$}
    \State $l.\mathrm{children}[i] \gets \left[ \right]$ \Comment{Initialise the list of children that stem taking split action $i$ in $l$}
    \State $\qf\left( l, i\right) \gets -\lambda$ \Comment{Initialise the Upper Bound $\qf\left( l, i\right)$}
    \State $\mathrm{return\_complete} \gets -\lambda$ \Comment{Initialise the return due to SOLVED children}
    \State $\mathrm{children\_incomplete} \gets \left[ \right]$ \Comment{Initialise the list of UNSOLVED children}
    \For{$j \in \{ 1, \ldots, C_i\}$}
        \State $l_{ij} \gets l \wedge \ind\{ X^{\left( i\right)} = j\}$
        \If{$l_{ij} \notin \mathrm{memo}$} \Comment{Only initialise the branches that are not in the memo}
            \State $\Call{Initialise}{l_{ij}, \mathcal{D}}$
        \EndIf
        \State $l.\mathrm{children}[i].\mathrm{append}\left( l_{ij}\right)$
        \State $\qf\left( l, i\right) \gets \qf\left( l, i\right) + \vf\left( l_{ij}\right)$ \Comment{Update the Upper Bound $\qf\left( l, i\right)$}
        \If{$l_{ij}.\mathrm{complete}$}
            \State $\mathrm{return\_complete} \gets \mathrm{return\_complete} + \vf\left( l_{ij}\right)$
        \Else
            \State  $\mathrm{children\_incomplete}.\mathrm{append}\left( l_{ij}\right)$
        \EndIf
    \EndFor
    \State $l.\mathrm{queue}.\mathrm{push}\left( \left( \qf\left( l, i\right), \mathrm{return\_complete}, i, \mathrm{children\_incomplete}\right)\right)$
\EndProcedure
\end{algorithmic}
\end{algorithm}

\section{Drawbacks of Binary Encoding}
\label{sec:branches-ordinal-vs-binary}

\begin{figure}[t]
    \centering
    \includegraphics[width=0.5\textwidth]{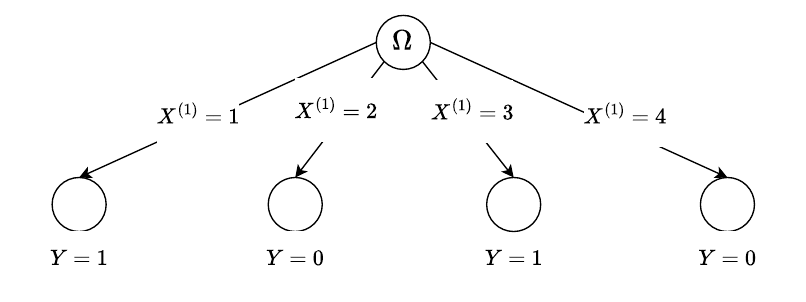}
    \caption{Optimal DT depicting the class variable that satisfies $Y=1$ if and only if $X^{\left( 1\right)} = 1$ or $X^{\left( 1\right)} = 3$ on the space $\mathcal{X} = \{1, 2, 3, 4\}$.}
    \label{fig:example-ordinal}
\end{figure}

In \cref{tab:branches-experiments-large-depths}, we have shown that optimal sparse DTs are always found significantly faster in the context of ordinal encoding compared to binary encoding. In this section, we investigate the reason behind this phenomenon.

To answer this question, let us consider the following simple binary classification problem. Suppose there is only one feature $X^{\left( 1\right)}$ with $4$ categories, i.e. the feature space is $\mathcal{X} = \{1, 2, 3, 4\}$ and that class $Y$ satisfies $Y=1$ if and only if $X^{\left( 1\right)} = 1$ or $X^{\left( 1\right)} = 3$. The optimal sparse DT in this case consists of only one split splitting the root $\Omega$ with respect to feature $X^{\left( 1\right)}$, as shown in \cref{fig:example-ordinal}. In this setting, \branches~only needs one iteration to terminate. Indeed, on its first iteration, it expands $\Omega$, estimates $\qf\left( \Omega, \overline{a}\right)$ and $\qf\left( \Omega, a\right)$ where $a$ is the split action with respect to $X^{\left( 1\right)}$. In this case, \branches~can already deduce that:
\[
\qf^*\left( \Omega, a\right) = \qf\left( \Omega, a\right) = -\lambda + \underbrace{\prob\left[ \Omega\left( X\right) = 1\right]}_{=1} > \prob\left[ \Omega\left( X\right) = 1, k^*\left( \Omega\right) = Y\right] = \qf^*\left( \Omega, \overline{a}\right)
\]
and therefore that $\Omega$ is SOLVED and $a = \mathrm{Argmax}_{a' \in \action\left( \Omega\right)}\qf^*\left( \Omega, a'\right)$.

Consider now the binary encoding of feature space $\mathcal{X}$, this yields a new feature space $\mathcal{X}' = \{0, 1\}\times \{0, 1\} \times \{0, 1\} \times \{0, 1\}$ where the new features $X'^{\left( 1\right)}, X'^{\left( 2\right)}, X'^{\left( 3\right)}, X'^{\left( 4\right)}$ describe the existence of a category or its absence:
\[
\forall{i \in \{ 1, 2, 3, 4\}}: X'^{\left( i\right)} = \ind\{ X^{\left( 1\right)} = i\}
\]

\begin{figure}[t]
    \centering
    \includegraphics[width=0.4\textwidth]{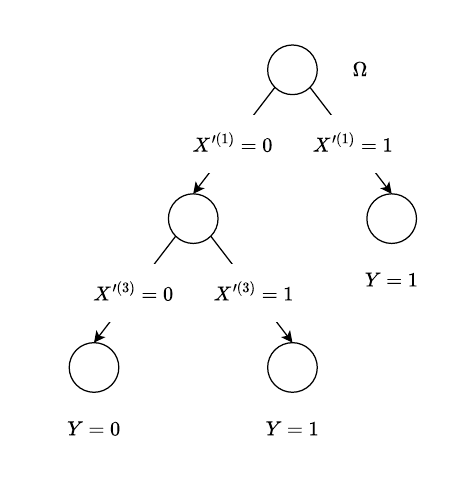}
    \caption{The new optimal sparse DT on the new feature space $\mathcal{X}'$.}
    \label{fig:example-binary}
\end{figure}

\cref{fig:example-binary} depicts the new optimal sparse DT on $\mathcal{X}'$. In this setting, \branches~can no longer achieve optimality from the first iteration, because the first iteration only explores branches of size $1$ and the optimal solution includes also branches of sizes 2. Moreover, Binary encoding introduces unnecessary branches that make the search space larger than necessary, thereby wasting some of the search time. To see this, consider the branch:
\[
l' = \ind\{ X'^{\left( 1\right)} = 1\}\wedge \ind\{ X'^{\left( 2\right)} = 1\}
\]
This branch exists in the new search space (search graph) constructed on $\mathcal{X}'$ and it could be explored at some point by the search algorithm. However, this would be a waste of time because $l'$ does not describe a possible subset of $\mathcal{X}$. Indeed, translating $l'$ to its corresponding branch on $\mathcal{X}$ yields:
\[
l = \ind\{ X^{\left( 1\right)} = 1\}\wedge \ind\{ X^{\left( 1\right)} = 2\}
\]
which always valuates to $0$ for any datum $X \in \mathcal{X}$. As a consequence, $l$ can never be part of the optimal solution, in fact, it can never be part of any Decision Tree on $\mathcal{X}$, $l$ is not even a proper branch by virtue of the definition in \cref{sec:branches} as it violates condition \eqref{eq:branch-condition}. While it is true that the algorithm will prune this branch because its support set $\{ X \in \mathcal{D}: l\left( X\right) = 1\}$ is null, it still has to waste time calculating this support set. To properly evaluate the computational inefficiency induced by binary encoding, we analyse the number of these introduced unnecessary branches. \cref{thm:branches-ordinal-vs-binary} provides this analysis for the case where all features have an equal number categories.

\begin{figure}[t]
    \centering
    \includegraphics[width = 0.63\textwidth]{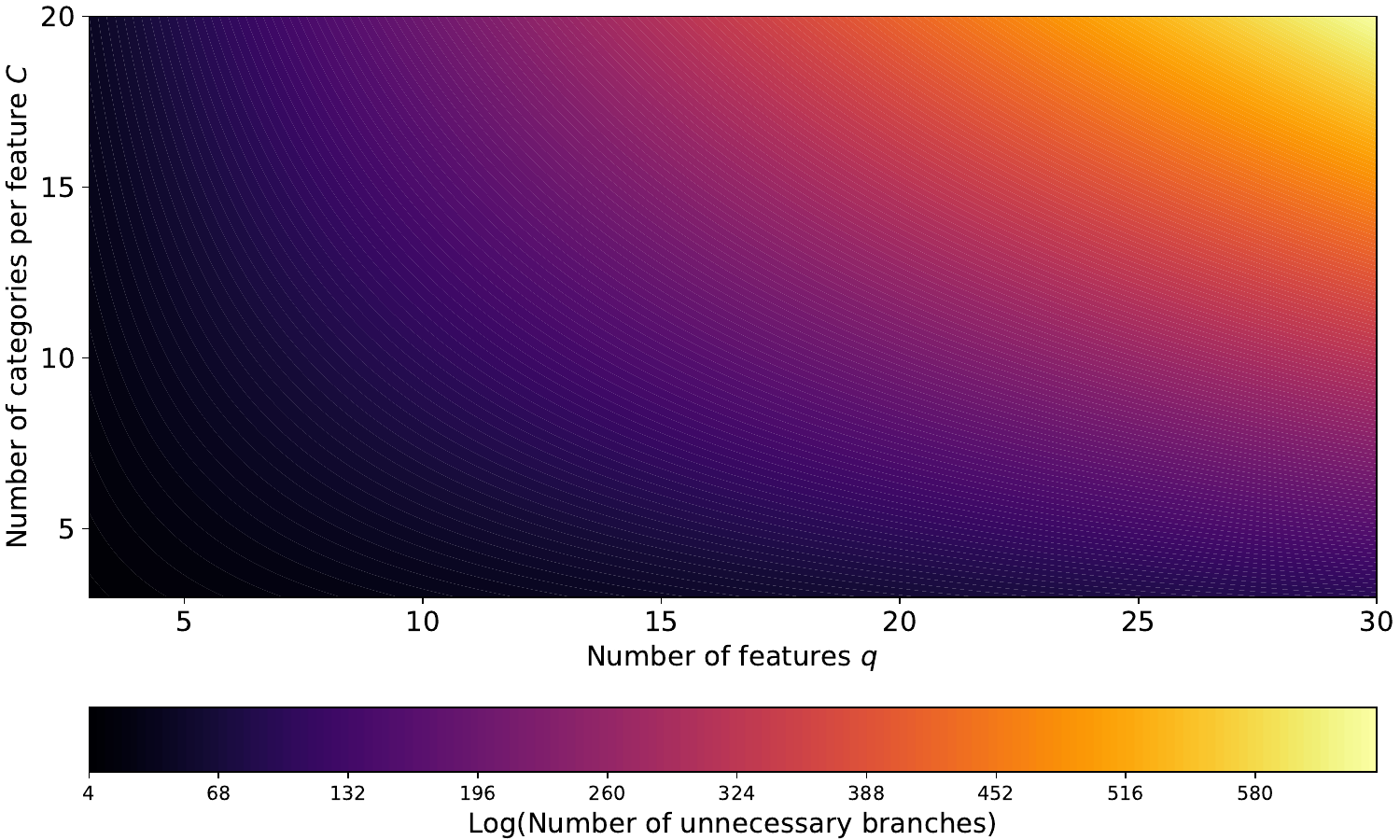}
    \caption{The number of unnecessary branches introduced by Binary Encoding.}
    \label{fig:branches-unnecessary-branches}
\end{figure}

\begin{restatable}{theorem}{branchesordinalvsbinary}
    \label{thm:branches-ordinal-vs-binary}
    Consider a classification problem where all features share the same number of categories $C$, i.e. $\mathcal{X} = \{ 1, \ldots, C\}^q$ is the feature space. Performing a binary encoding on $\mathcal{X}$ yields the new feature space $\mathcal{X}' = \{ 0, 1\}^{qC}$. We define an unnecessary branch $l$ on $\mathcal{X}'$ as a branch that valuates to $0$ for any input vector $X \in \mathcal{X}$:
    \[
    \forall{X \in \mathcal{X}}: l\left( X\right) = 0
    \]
    The number of unnecessary branches introduced by binary encoding is:
    \[
    \mathcal{U}\left( q, C\right) = \mathcal{A}\left( q, C\right) - \mathcal{B}\left( q, C\right) = 3^{qC} - \left[ 2C + 1\right]^q
    \]
\end{restatable}

\begin{proof}
    The proof of this Theorem proceeds by counting the total number of branches possible on $\mathcal{X}'$ and subtracting the total number of branches that are not unnecessary.

    Let us start with the total number of branches on $\mathcal{X}'$. Any branch on $\mathcal{X}'$ has the form:
    \[
    l = \bigwedge_{v=1}^w \ind\{ X'^{\left( i_v\right)} = z_v\}
    \]
    Where $X'^{\left( i_v\right)}$ are the features on the space $\mathcal{X}'$, $w \in \{ 0, \ldots, qC\}, i_v \in \{ 1, \ldots, qC\}, z_v \in \{ 0, 1\}$. We note that $w=0$ corresponds to $l = \Omega$ by definition.
    \begin{itemize}
        \item There are $qC$ possibilities for choosing $w$.
        \item For each possible value $w$, there are $\binom{qC}{w}$ possible combinations $\{ i_1, \ldots, i_w\}$.
        \item For each combination $\{ i_1, \ldots, i_w\}$, there are $2^w$ possible assignments $\left( z_1, \ldots, z_w\right)$
    \end{itemize}
    Therefore the total number of branches on $\mathcal{X}'$ is:
    \begin{equation}
    \label{eq:total-branches}
        \mathcal{A}\left( q, C\right) = \sum_{w=0}^{qC}\binom{qC}{w}2^w = 3^{qC}
    \end{equation}
    Let us now count the number of non-unnecessary branches. To do this, we consider a slightly different notation of the features on $\mathcal{X}'$.
    \[
    \forall{i \in \{ 1, \ldots, q\}}, \forall{j \in \{ 1, \ldots, C\}}: X'^{\left( i, j\right)} = \ind\{ X^{\left( i\right)} = j\}
    \]
    A branch $l = \bigwedge_{v=1}^w \ind\{ X'^{\left( i_v, j_v\right)} = z_v\}$ is not unnecessary if and only if $w \in \{ 0, \ldots, q\}, i_v \in \{ 1, \ldots, q\}, j_v \in \{ 1, \ldots, C\}, z_v \in \{ 0, 1\}$.
    \begin{itemize}
        \item For each possible value $w \in \{ 0, \ldots, q\}$, there are $\binom{q}{w}$ possible combinations $\{ i_1, \ldots, i_w\}$.
        \item For each combination $\{ i_1, \ldots, i_w\}$, there are $C^w$ possible assignments $\left( j_1, \ldots, j_w\right)$.
        \item For each assignment $\left( j_1, \ldots, j_w\right)$, there are $2^w$ possible assignments $\left( z_1, \ldots, z_w\right)$.
    \end{itemize}
    The total number of branches that are not unnecessary is therefore:
    \begin{equation}
    \label{eq:total-branches-non-unnecessary}
        \mathcal{B}\left( q, C\right) = \sum_{w=0}^q \binom{q}{w}2^w C^w = \left[ 2C + 1\right]^q
    \end{equation}
    From \cref{eq:total-branches} and \cref{eq:total-branches-non-unnecessary} we deduce that the total number of unnecessary branches is:
    \[
    \mathcal{U}\left( q, C\right) = \mathcal{A}\left( q, C\right) - \mathcal{B}\left( q, C\right) = 3^{qC} - \left[ 2C + 1\right]^q
    \]
\end{proof}

There is a subtlety here. We define $l$ on $\mathcal{X}'$, which means that it involves clauses defined with the features of $\mathcal{X}'$, and yet the definition in \cref{thm:branches-ordinal-vs-binary} pertains to valuating $l$ on inputs from the feature space $\mathcal{X}$. There is no mistake or lack of rigour in this definition, we are allowed to do this because Binary Encoding is an injective map from $\mathcal{X}$ to $\mathcal{X}'$, thus implicitly, valuating $l$ on an input $X \in \mathcal{X}$ is defined as valuating $l$ on the image of $X$ in $\mathcal{X}'$ with this map.

\begin{figure}[t]
    \centering
    \includegraphics[width=0.6\textwidth]{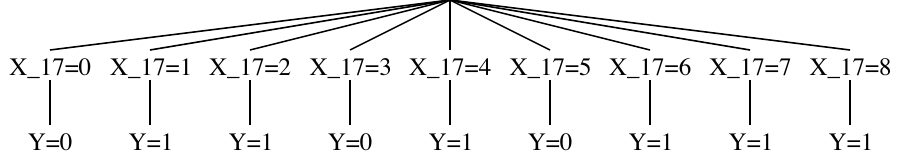}
    \caption{Optimal sparse DT for mushroom-o.}
    \label{fig:mushroom-o}
\end{figure}

\begin{figure}[t]
    \centering
    \includegraphics[width=0.3\textwidth]{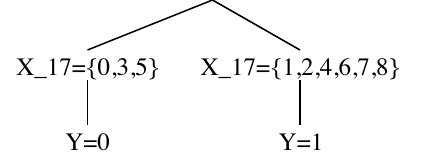}
    \caption{Collapsed optimal sparse DT for mushroom-o.}
    \label{fig:mushroom-compact}
\end{figure}

\begin{figure}[t]
    \centering
    \includegraphics[width=0.3\textwidth]{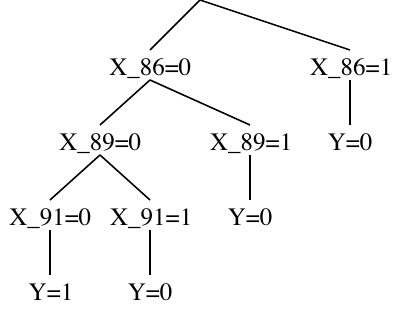}
    \caption{Optimal sparse DT for mushroom.}
    \label{fig:mushroom}
\end{figure}

\cref{fig:branches-unnecessary-branches} is a contour plot of the number of unnecessary branches as a function of $q$ and $C$ in Logarithmic scale. It shows how immense this number becomes as $q$ and $C$ increase. We should note that, not all of these unnecessary branches will be explored by \branches, in fact many of them (depending on the problem) will be pruned pre-emptively. Nevertheless, many will inevitably be visited, which hinders the search efficiency as we clearly demonstrated in \cref{tab:branches-experiments-large-depths}. This inefficiency is most apparent on the mushroom dataset, where GOSDT and \branches~reach timeout on the binary encoding. However, in contrast, when applied to the ordinal encoding of mushroom, \branches~achieves an extremely fast optimal convergence in only $0.16s$ and $6$ iterations. Moreover, in the mushroom dataset case, the optimal sparse DT only involves one split as depicted in \cref{fig:mushroom-o}. This solution can further be collapsed into a single split DT with only two leaves as shown in \cref{fig:mushroom-compact}, which is a highly interpretable solution. On the other hand, binary encoding does not allow for such flexibility and its solution, depicted in \cref{fig:mushroom}, is clearly less interpretable than the one in \cref{fig:mushroom-compact}.

When using binary encoding, it is usually a good idea to drop one category per feature. The reason pertains to reducing the number of the resulting binary features, which in turn makes it easier for the algorithm to quickly find the optimal sparse DT. Notice in \cref{tab:branches-experiments-large-depths} the difference in execution times between dropping and not dropping a category. In some cases like tic-tac-toe/tic-tac-toe-f, nursery/nursery-f and balance/balance-f, just dropping the first category pushes \branches~to terminate in time while it reaches timeout otherwise. However, choosing the adequate category to drop is not trivial and can lead to widely varying solutions and difficulties. We illustrate this point with monk1-l and monk1-f. \cref{fig:monk1-l} and \cref{fig:monk1-f} show very different optimal sparse DTs, with the option of dropping the last category leading to a solution with only $7$ splits while dropping the first category induces a solution with $17$ splits. Obviously in this case we prefer dropping the last category, but there is no trivial way of knowing this beforehand. On the other hand, employing a direct ordinal encoding yields the optimal solution in \cref{fig:monk1-o} with $10$ splits. By noticing sibling branches that share the same sub-DTs rooted in them, this solution can be collapsed into the DT in \cref{fig:monk1-o-compact} reducing its number of splits to $7$. In this case, ordinal encoding allowed us to achieve a solution of similar quality to the one induced by monk1-l without the need to guess an adequate category to drop.

To conclude, we demonstrated in this section the reason why finding an optimal sparse DT is significantly faster via ordinal encoding than binary encoding. This discrepancy stems from the large amount of unnecessary branches that binary encoding introduces, as explained in \cref{thm:branches-ordinal-vs-binary}. Furthermore, we showed that these faster to get solutions can be even more desirable from an interpretability standpoint, we showcased this for the mushroom dataset. Using the monk1 dataset example, we explained the notion of collapsing the non-binary DT solution to yield better interpretable alternatives. This notion of collapse should be further investigated in the future. It could be of great use from a scalability standpoint. Indeed, for large datasets, using a preliminary binary encoding can greatly amplify the difficulty of the already challenging optimisation task. The idea is to rather utilise ordinal encoding to quickly find a non-binary DT solution, and then transform this solution, through some form of collapse, into a more interpretable version, maybe even a binary DT depending on how we conduct the collapsing process.

\begin{figure}[t]
    \centering
    \includegraphics[width=0.45\textwidth]{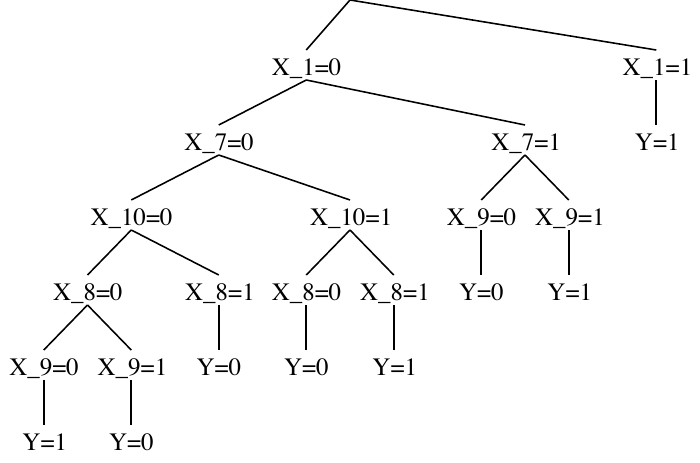}
    \caption{Optimal sparse DT for monk1-l, it has $7$ splits.}
    \label{fig:monk1-l}
\end{figure}

\begin{figure}[t]
    \centering
    \includegraphics[width=1\textwidth]{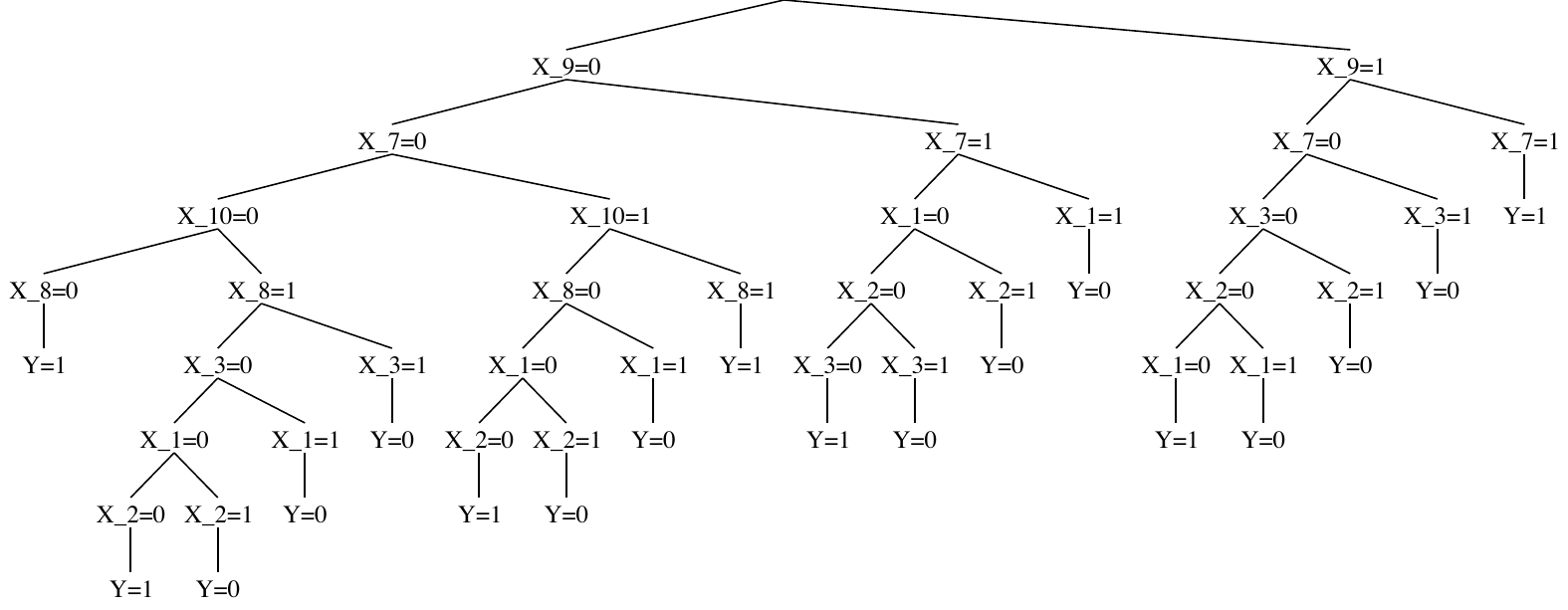}
    \caption{Optimal sparse DT for monk1-f, it has $17$ splits.}
    \label{fig:monk1-f}
\end{figure}

\begin{figure}[t]
    \centering
    \includegraphics[width=1\textwidth]{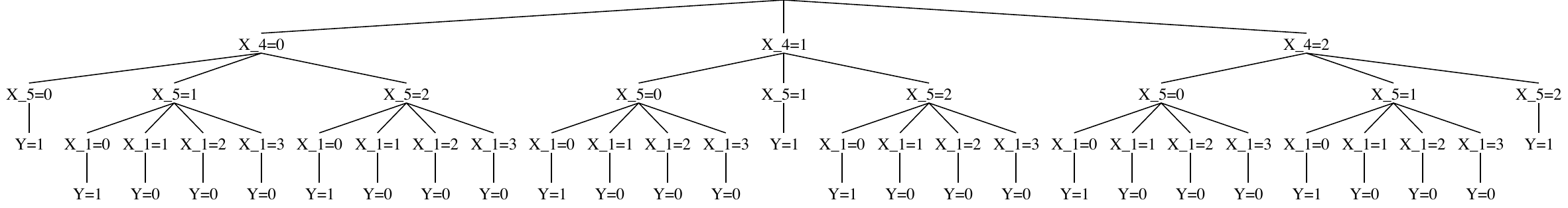}
    \caption{Optimal sparse DT for monk1-o, it has $10$ splits.}
    \label{fig:monk1-o}
\end{figure}

\begin{figure}[t]
    \centering
    \includegraphics[width=0.8\textwidth]{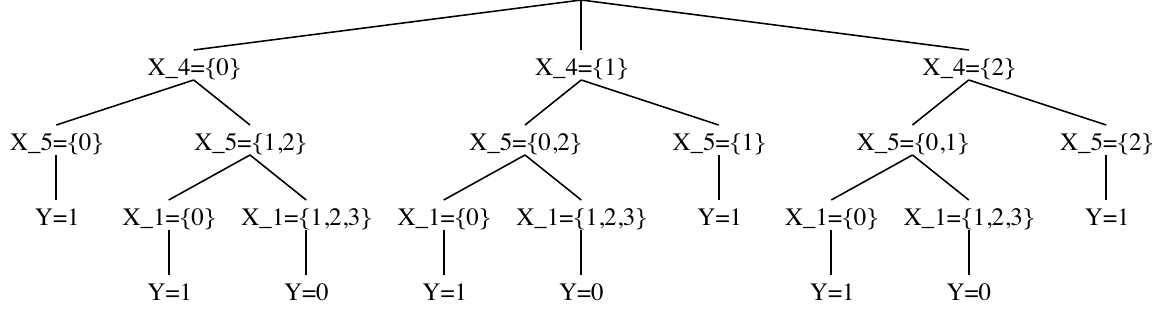}
    \caption{Collapsed optimal sparse DT for monk1-o, it has $7$ splits similarly to the solution induced for monk1-l.}
    \label{fig:monk1-o-compact}
\end{figure}

\section{\branches~vs GOSDT}
\label{appendix:branches-vs-gosdt}

Among the state of the art algorithms, GOSDT is the closest to \branches. Indeed, MurTree and STreeD are DFS methods unlike OSDT, GOSDT and \branches. Moreover OSDT operates on the space of DTs instead of the space of branches unlike GOSDT and \branches. For this reason, we dedicate this section to discussing the differences between \branches~and GOSDT.

\paragraph{Support for ordinal encoding:} The first straightforward difference is \branches' support for non-binary DTs that stem from ordinal encoding. In contrast, GOSDT (as well as all the algorithms we compare with) is only applicable to binary features, which necessitates a preliminary binary encoding. As we see in \cref{sec:experiments} and \cref{appendix:branches-depth-analysis}, this detail is crucial, it confers \branches' more scalability potential as these non-binary DT solutions are always found significantly faster (in the span of few seconds). In \cref{sec:branches-ordinal-vs-binary} we explained the drawbacks of binary encoding from a theoretical standpoint.

\paragraph{The Purification Bound:} It is true that \citep[bounds (9) and (10)]{lin2020generalized} and the purification bound stem from the same reasoning, which we call purification when explaining the intuition behind \cref{prop:branches-bound}. Furthermore \citep[bounds (9) and (10)]{lin2020generalized} support equivalent points, a situation that arises when the dataset includes duplicate instances with different classes, while our bound does not currently support this. However, this is straightforward to incorporate in the purification bound and it will be in a future version of \branches. Moreover, we emphasize that while \citep[bounds (9) and (10)]{lin2020generalized} and the purification bound stem from a similar reasoning, they are different. The purification bound pertains to penalising the number of splits while \citep[bounds (9) and (10)]{lin2020generalized} rather correspond to penalising the number of leaves. This could lead to differences when considering non-binary features. Moreover, \citep[bounds (9) and (10)]{lin2020generalized} are formulated for a binary classification setting with binary features. On the other hand, the purification bound is formulated for the general case of multiclass classification with categorical features that are not necessarily binary. We further prove in \cref{prop:branches-bound} that the purification bound upper bounds the true optimal values $\qf^*\left( l, a\right)$ and $\vf^*\left( l\right)$ when it is initialised and also when it is updated during Backpropagation.

\paragraph{The Selection process:} By Selection process, we refer to the strategy of choosing the branch to explore, by expanding it into children branches and then updating its value estimate (or bound). For \branches, this is performed within our AO* framework where we start from $\Omega$ and follow the action with highest value estimate $\mathrm{Argmax}_{a \in \action\left( l\right)}\qf\left( l, a\right)$ until reaching a branch that is either SOLVED or that has not been expanded yet. This process is achieved through the use of multiple small priority queues at the level of each branch to keep track of $\mathrm{Argmax}_{a \in \action\left( l\right)}\qf\left( l, a\right)$. In contrast, GOSDT employs one global priority queue where all the encountered branches are stored. Using this queue, GOSDT selects the most promising branch, i.e. the branch with the best (lowest in GOSDT's setting) bound. However, this branch would not necessarily be the most promising one for \branches~because it does not necessarily lie in \branches' selection path. Indeed, the branch with highest $\vf\left( l\right)$ does not necessarily lie in the path following $\mathrm{Argmax}_{a \in \action\left( l\right)}\qf\left( l, a\right)$ from $\Omega$, in the AO* terms of \citep[Chapter 3]{nilsson2014principles}, the branch chosen by GOSDT does not necessarily even lie in \branches' current best partial solution graph. Due to the use of this one global priority queue, GOSDT resembles more an OR search than an AND/OR search algorithm. This difference in the selection strategies explain the large difference in the number of iterations of \branches~and GOSDT, which is in favour of \branches. In addition to this fundamental difference, our Selection process allows us to incorporate heuristics for adequately choosing an UNSOLVED child, as explained in \cref{appendix:selection}. This flexibility has the potential of improving the anytime behaviour as explained in \cref{appendix:anytime}, and indeed our experiments in \cref{sec:experiments} and \cref{appendix:branches-depth-analysis} show \branches' anytime behaviour to always outperform GOSDT's. We emphasize here that these properties are provided due to the AND/OR nature of \branches.

\paragraph{The Backpropagation process:} \branches~updates value estimates along the selection path only. It is true that multiple paths lead to the same selected branch $l$, and as such the branches along these paths could also be updated in principle. However, the number of these paths is equal to the number of permutations of the clauses of $l$, which quickly becomes computationally costly. Moreover, many of these branches will be pre-emptively pruned and thus updating them would be a waste of time. Those that are not pruned will inevitably be visited by \branches, and only then do we update them. In contrast, GOSDT updates the values along all the ancestors of the selected branch. This idea of backpropagating along the selection path only is discussed in \citep[p. 106]{nilsson2014principles}:
\begin{center}
    \textit{Therefore, not all ancestors need have cost revisions, but only those ancestors having best partial solution graphs containing descendants with revised costs (hence step 12).}
\end{center}
In addition to this difference, we explained in \cref{appendix:implementation} that \branches~further sorts split actions via their \verb|value_complete|. As a consequence, if many split actions share the value $\mathrm{Argmax}_{a \in \action\left( l\right)\setminus \{ \overline{a}\}}\qf\left( l, a\right)$, then the action $a$ for which $\children\left( l, a\right)$ are all SOLVED (when it exists) will always be prioritised, thus deducing immediately that $l$ is SOLVED and avoiding wasting time exploring it again in the future.

\paragraph{Caching procedure:} \branches~uses branch caching while GOSDT uses dataset caching or support set caching. We could implement this caching procedure in the future for \branches, however, we have some doubts about its soundness when it comes to the sparsity problem, or even for optimisation with a hard constraint on depth. Indeed, two branches $l_1$ and $l_2$ could share the same support set (subset of the dataset that they contain) but differ in their number of splits $\splits\left( l_1\right) < \splits\left( l_2\right)$. In this case, the support set is insufficient to fully grasp the optimisation problem from each branch. From $l_1$ we have more splits to use before reaching the maximum depth than from $l_2$. Using dataset caching in this situation might lead to a suboptimal solution. This warrants further investigation in the future, in the absence of a proof of optimality (to our best knowledge) specific to dataset caching, we chose the conservative option of branch caching for \branches.

\paragraph{Computational complexity analysis:} In \cref{sec:strategy} we analysed the computational complexity of \branches~by bounding the number of branch evaluations it performs before terminating. Such analysis was not conducted for GOSDT. \citep[Theorem H.1]{lin2020generalized} provides a big-O bound on the total number of binary DTs that can be constructed from a set of $M$ binary features, but it does not specifically analyse GOSDT's computational complexity. To our knowledge, only \citep[Theorem E.2]{hu2019optimal} provides such analysis for OSDT, hence why we compared with it in \cref{tab:branches-complexities}.

\paragraph{Empirical results:} The experiments we conduct in \cref{sec:experiments} and \cref{appendix:branches-depth-analysis} show that \branches~dominates GOSDT on the majority of experiments and for different metrics. From a lower execution time and significantly fewer iterations to a better anytime behaviour. This further supports the more efficient AND/OR search strategy of \branches.

\section{Proofs and Additional Theoretical Results}
\label{appendix:proofs}

\begin{proposition}
    \label{proposition:pareto}
    Let $0 < \lambda < 1$ and consider $T^* = \mathrm{Argmax}_T \objectiver\left( T\right)$, then for any DT $T$ we have the following:
    \[
    \splits\left( T\right) \le \splits\left( T^*\right) \implies \objective\left( T\right) \le \objective\left( T^*\right)
    \]
\end{proposition}
\begin{proof}
    Let $T$ be a DT and suppose that $\splits\left( T\right) \le \splits\left( T^*\right)$. Since $T^* = \mathrm{Argmax}_T \objectiver\left( T\right)$ we have:
    \begin{align*}
        \objectiver\left( T\right) &\le \objectiver\left( T^*\right)\\
        \implies -\lambda\splits\left( T\right) + \objective\left( T\right) &\le -\lambda\splits\left( T^*\right) + \objective\left( T^*\right)\\
        \implies 0 \le \lambda\left( \splits\left( T^*\right) - \splits\left( T\right)\right) &\le \objective\left( T^*\right) - \objective\left( T\right)
    \end{align*}
\end{proof}

\begin{lemma}
    \label{lemma:trejactory}
    Let $\pi$ be a policy and $l \in \splits \setminus \terminal$ a non-terminal state. If $\pi\left( l\right) = \overline{a}$ then $T_{l, 0}^\pi = \{ l\}$ and $\forall{t \ge 1}: T_{l, 0}^\pi = \{ \overline{l}\}$, otherwise we can define the trajectory $\left( T_{l, t}^\pi\right)_{t \ge 0}$ recursively as follows:
    \[
    \begin{cases}
        T_{l, 0}^\pi &= \{ l\}\\
        T_{l, t}^\pi &= \bigcup_{l_u \in T_{l, 1}^\pi}T_{l_u, t-1}^\pi \; \forall{t \ge 1}
    \end{cases}
    \]
\end{lemma}

\begin{proof}
    The case $\pi\left( l\right) = \overline{a}$ is trivial by the definition of the trajectory, thus we focus on the case $\pi\left( l\right) \neq \overline{a}$. By induction on $t \ge 1$, for $t = 1$ we have by definition $\forall{l_u \in T_{l, 1}^\pi}: T_{l_u, 0}^\pi = \{ l_u\}$ and thus $T_{l, 1}^\pi = \bigcup_{l_u \in T_{l, 1}^\pi} T_{l_u, 0}^\pi$.

    Now suppose that the inductive hypothesis is true for some $t \ge 1$ and let us prove it for $t+1$. We have:
    \begin{align*}
        T_{l, t+1}^\pi = \bigcup_{l' \in T_{l, t}^\pi \setminus \terminal}F\left( l', \pi\left( l'\right)\right) \bigcup_{\overline{l'} \in T_{l, t}^\pi \cap \terminal}\{ \overline{l'}\}
    \end{align*}
    On the other hand, the inductive hypothesis states that:
    \[
    T_{l, t}^\pi = \bigcup_{l_u \in T_{l, 1}^\pi}T_{l_u, t-1}^\pi
    \]
    Moreover, we know that $\left( T_{l_u, t-1}^\pi \right)_{l_u \in T_{l, 1}^\pi}$ are mutually exclusive because they are sub-DTs rooted in mutually exclusive roots $l_u \in T_{l, 1}^\pi$, hence $\left( T_{l_u, t-1}^\pi \right)_{l_u \in T_{l, 1}^\pi}$ forms a partition of $T_{l, t}^\pi$ and we can rewrite $T_{l, t+1}^\pi$ as:
    \begin{align*}
        T_{l, t+1}^\pi &= \bigcup_{l_u \in T_{l, 1}^\pi} \Bigg\{ \underbrace{\bigcup_{l' \in T_{l_u, t-1}^\pi}F\left( l', \pi\left( l'\right)\right) \bigcup_{\overline{l'} \in T_{l_u, t-1}^\pi \cap \terminal}\{ \overline{l'}\}}_{= T_{l_u, t}^\pi}\Bigg\} \\
        &= \bigcup_{l_u \in T_{l, 1}^\pi} T_{l_u, t}^\pi
    \end{align*}
    Thus concluding the induction proof.
\end{proof}

\treepolicy*

\begin{proof}
    We follow a proof by induction on $\splits\left( l\right) \in \{ 0, \ldots, q\}$.

    If $\splits\left( l\right) = q:$\\
    Then $\action\left( l\right) = \{ \overline{a}\}$ and $\pi\left( l\right) = \overline{a}$. Therefore $T_{l, 0}^\pi = \{ l\}$ and $T_{l, 1}^\pi = F\left( l, \pi\left( l\right)\right) = F\left( l, \overline{a}\right) = \{ \overline{l}\}$, and thus for $\tau_{l}^\pi = 1$ we have:
    \[
    \forall{t \ge \tau_l^\pi}: T_{l, t}^\pi = \{ \overline{l}\}
    \]

    Now assume the inductive hypothesis to hold for some $1 \le \splits\left( l\right) = p \le q$ and let us prove it for $\splits\left( l\right) = p-1$:\\
    If $\pi\left( l\right) = \overline{a}$ then $T_{l, 1}^\pi = \{ \overline{l}\}$ and the result holds for $\tau_l^\pi = 1$. Otherwise when $\pi\left( l\right) \neq \overline{a}$, \cref{lemma:trejactory} states that:
    \[
    \forall{t \ge 1}: T_{l, t}^\pi = \bigcup_{l_u \in T_{l, 1}^\pi}T_{l_u, t-1}^\pi
    \]
    On the other hand we know that $\forall{l_u \in T_{l, 1}^\pi}: \splits\left( l_u\right) = \splits\left( l\right) + 1 = p$, thus the inductive hypothesis implies that:
    \[
    \forall{l_u \in T_{l, 1}^\pi}\; \exists{\tau_{l_u}^\pi \ge 1} \; \forall{t \ge \tau_{l_u}^\pi}: T_{l_u, t}^\pi = \Big\{ \overline{l_1}, \ldots, \overline{l_{\big|T_{\tau_{l_u}^\pi}^\pi\big|}}\Big\} = T_{l_u, \tau_{l_u}^\pi}^\pi
    \]
    Take $\tau_l^\pi = \max_{l_u \in T_{l, 1}^\pi}\big\{ \tau_{l_u}^\pi\big\} + 1$, then we get:
    \[
    \forall{t \ge \tau_l^\pi}: T_{l, t}^\pi = \bigcup_{l_u \in T_{l, 1}^\pi}T_{l_u, t-1}^\pi = \bigcup_{l_u \in T_{l, 1}^\pi}T_{l_u, \tau_{l_u}^\pi}^\pi
    \]
    Since $\forall{l' \in T_{l_u, \tau_{l_u}^\pi}^\pi}: l' \in \terminal$ then:
    \[
    \forall{t \ge \tau_l^\pi}\; \forall{l' \in T_{l, t}^\pi}: l' \in \terminal
    \]
    Thus concluding the induction proof.
\end{proof}

\returnobjective*

\begin{proof}
    We proceed by induction on $\splits\left( l\right) \in \{ 0, \ldots, q\}$.

    If $\splits\left( l\right) = q$:\\
    Then we have $\pi\left( l\right) = \overline{a}$ and thus $\tau_l^\pi = 1$ and $T_l^\pi = \{ l\}$. Therefore:
    \[
    \vf^\pi\left( l\right) = r\left( l, \pi\left( l\right)\right) = r\left( l, \overline{a}\right) = \objective\left( l\right) = \objectiver\left( T_l^\pi\right)
    \]
    The equality $\objective\left( l\right) = \objectiver\left( T_l^\pi\right)$ stems from the fact that $T_l^\pi = \{ l\}$ is the sub-DT rooted in $l$ that requires no splits of $l$.

    Now let us assume the inductive hypothesis to hold true for some $1 \le \splits\left( l\right) = p \le q$ and let us prove the result for $\splits\left( l\right) = p-1$:\\
    If $\pi\left( l\right) = \overline{a}$, the result holds trivially with the same reasoning as the one we used for the case $\splits\left( l\right) = q$.\\
    Now consider the case $\pi\left( l\right) \neq \overline{a}$, we have the following:
    \begin{align*}
        \vf^\pi\left( l\right) &= \sum_{t = 0}^{\tau_l^\pi - 1}\sum_{l_u \in T_{l, t}^\pi \setminus \terminal} r\left( l_u, \pi\left( l_u\right)\right)\\
        &= r\left( l, \pi\left( l\right)\right) + \sum_{t = 1}^{\tau_l^\pi - 1}\sum_{l_u \in T_{l, t}^\pi \setminus \terminal} r\left( l_u, \pi\left( l_u\right)\right)\\
        &= -\lambda + \sum_{t = 1}^{\tau_l^\pi - 1}\sum_{l_u \in T_{l, t}^\pi \setminus \terminal} r\left( l_u, \pi\left( l_u\right)\right)
    \end{align*}
    The last equality stems from the fact that $r\left( l, \pi\left( l\right)\right) = -\lambda$ because $\pi\left( l\right) \neq \overline{a}$ is a split action. \cref{lemma:trejactory} states that $\left( T_{l_u, t-1}^\pi\right)_{l_u \in F\left( l, \pi\left( l\right)\right)}$ is a partition of $T_{l, t}^\pi$ and therefore we can write:
    \begin{align*}
        \vf^\pi\left( l\right) &= -\lambda + \sum_{t = 1}^{\tau_l^\pi - 1}\sum_{l' \in F\left( l, \pi\left( l\right)\right)}\sum_{l_u \in T_{l', t-1}^\pi \setminus \terminal} r\left( l_u, \pi\left( l_u\right)\right)\\
        &= -\lambda + \sum_{l' \in F\left( l, \pi\left( l\right)\right)}\sum_{t = 1}^{\tau_l^\pi - 1}\sum_{l_u \in T_{l', t-1}^\pi \setminus \terminal} r\left( l_u, \pi\left( l_u\right)\right)
    \end{align*}
    We know that $\forall{l' \in T_{l, 1}^\pi}: \tau_{l'}^\pi \le \tau_{l}^\pi - 1$ and thus:
    \begin{align*}
        \vf^\pi\left( l\right) &= -\lambda + \sum_{l' \in F\left( l, \pi\left( l\right)\right)}\sum_{t = 0}^{\tau_{l'}^\pi-1}\sum_{l_u \in T_{l', t}^\pi \setminus \terminal} r\left( l_u, \pi\left( l_u\right)\right)\\
        &= -\lambda + \sum_{l' \in F\left( l, \pi\left( l\right)\right)}\vf^\pi\left( l'\right)
    \end{align*}
    On the other hand we know that $\forall{l' \in F\left( l, \pi\left( l\right)\right)}: \splits\left( l'\right) = \splits\left( l\right) + 1 = p$, therefore the inductive hypothesis implies:
    \begin{align*}
        \forall{l' \in F\left( l, \pi\left( l\right)\right)}: \vf^\pi\left( l'\right) = \objectiver\left( T_{l'}^\pi\right)
    \end{align*}
    Going back to $\vf^\pi\left( l\right)$ induces:
    \begin{align*}
        \vf^\pi\left( l\right) &= -\lambda + \sum_{l' \in F\left( l, \pi\left( l\right)\right)}\objectiver\left( T_{l'}^\pi\right)\\
        &= -\lambda + \sum_{l' \in F\left( l, \pi\left( l\right)\right)}\big\{ -\lambda\splits\left( T_{l'}^\pi\right)\big\} + \sum_{l' \in F\left( l, \pi\left( l\right)\right)}\objective\left( T_{l'}^\pi\right)\\
        &= -\lambda\Bigg\{ 1 + \sum_{l' \in F\left( l, \pi\left( l\right)\right)}\splits\left( T_{l'}^\pi\right)\Bigg\} + \sum_{l' \in F\left( l, \pi\left( l\right)\right)}\objective\left( T_{l'}^\pi\right)
    \end{align*}
    The sub-DT $T_l^\pi$ is constituted of the sub-DTs $T_{l'}^\pi$ that are each rooted in $l' \in F\left( l, \pi\left( l\right)\right)$, thus the number of splits of $T_{l}^\pi$ is equal to $1$ (the first splits at $l$) plus the sum of the numbers of splits of $T_{l'}^\pi$ for $l' \in F\left( l, \pi\left( l\right)\right)$, i.e.
    \[
    \splits\left( T_l^\pi\right) = 1 + \sum_{l' \in F\left( l, \pi\left( l\right)\right)}\splits\left( T_{l'}^\pi\right)
    \]
    Moreover $\left( T_{l'}^\pi\right)_{l' \in F\left( l, \pi\left( l\right)\right)}$ is a partition of $T_l^\pi$ and thus:
    \[
    \objective\left( T_l^\pi\right) = \sum_{l' \in F\left( l, \pi\left( l\right)\right)}\objective\left( T_{l'}^\pi\right)
    \]
    Going back to $\vf^\pi\left( l\right)$ we get:
    \[
    \vf^\pi\left( l\right) = -\lambda\splits\left( T_l^\pi\right) + \objective\left( T_l^\pi\right) = \objectiver\left( T_l^\pi\right)
    \]
    This concludes the induction proof.
\end{proof}

\begin{lemma}
    \label{lemma:policy-optimal}
    Let $\pi^*$ be an optimal policy, i.e. $\pi^* \in \mathrm{Argmax}_{\pi}\vf^\pi\left( \Omega\right)$ and consider the set of non-terminal states in its trajectory from $\Omega$:
    \[
    \splits^* = \bigcup_{t=0}^{\tau_\Omega^{*} - 1}T_{\Omega, t}^{*} \setminus \terminal
    \]
    Then for any policy $\pi$ we have the following:
    \[
    \forall{l \in \splits^*}: \vf^\pi\left( l\right) = \vf^*\left( l\right)
    \]
\end{lemma}

\begin{proof}
    We follow a proof by contradiction procedure. Suppose that there exists a policy $\pi$ such that:
    \[
    \exists{l \in \splits^*}: \vf^\pi\left( l\right) > \vf^*\left( l\right)
    \]
    Since $l \in \splits^*$, there exists $1 \le t_l \le \tau_l^* - 1$ such that $l \in T_{\Omega, t_l}^*$. We write:
    \begin{equation}
        \label{eq:21}
        \vf^*\left( \Omega\right) = \sum_{t=0}^{t_l - 1}\sum_{l_u \in T_{\Omega, t}^* \setminus \terminal}r\left( l_u, \pi^*\left( l_u\right)\right) + \sum_{l' \in T_{\Omega, t_l}^*\setminus \terminal \setminus \{ l\}}\vf^*\left( l'\right) + \vf^*\left( l\right)
    \end{equation}
    Now we define a new policy $\pi'$ as follows:
    \[
    \begin{cases}
        \pi' = \pi \; \textrm{on $l$ and all of its descendents.}\\
        \pi' = \pi^* \; \textrm{elsewhere.}
    \end{cases}
    \]
    The first term in \eqref{eq:21} is therefore equal to:
    \[
    \sum_{t=0}^{t_l - 1}\sum_{l_u \in T_{\Omega, t}^* \setminus \terminal}r\left( l_u, \pi^*\left( l_u\right)\right) = \sum_{t=0}^{t_l - 1}\sum_{l_u \in T_{\Omega, t}^* \setminus \terminal}r\left( l_u, \pi'\left( l_u\right)\right)
    \]
    Let us now analyse the second term $\sum_{l' \in T_{\Omega, t_l}^*\setminus \terminal \setminus \{ l\}}\vf^*\left( l'\right)$. Since $l \in T_{\Omega, t_l}^*$, then for all $l' \in T_{\Omega, t_l}^*\setminus \terminal \setminus \{ l\}$, $l$ and $l'$ share no descendents. Therefore, for any descendent $l"$ of $l'$ we have $\pi'\left( l'\right) = \pi^*\left( l'\right)$ and therefore:
    \begin{align*}
        &\forall{l' \in T_{\Omega, t_l}^* \setminus \terminal \setminus \{ l\}}: \vf^*\left( l'\right) = \vf^{\pi'}\left( l'\right)\\
        \implies & \sum_{l' \in T_{\Omega, t_l}^*\setminus \terminal \setminus \{ l\}}\vf^*\left( l'\right) = \sum_{l' \in T_{\Omega, t_l}^*\setminus \terminal \setminus \{ l\}}\vf^{\pi'}\left( l'\right)
    \end{align*}
    For the third term in \eqref{eq:21}, since $\pi' = \pi$ on $l$ and all of its descendents, then we have $\vf^{\pi'}\left( l\right) = \vf^{\pi}\left( l\right) > \vf^*\left( l\right)$. We can now rewrite \eqref{eq:21} as follows:
    \begin{align*}
        \vf^*\left( \Omega\right) &< \underbrace{\sum_{t=0}^{t_l - 1}\sum_{l_u \in T_{\Omega, t}^* \setminus \terminal}r\left( l_u, \pi'\left( l_u\right)\right) + \sum_{l' \in T_{\Omega, t_l}^*\setminus \terminal \setminus \{ l\}}\vf^{\pi'}\left( l'\right) + \vf^{\pi'}\left( l'\right)}_{= \vf^{\pi'}\left( \Omega\right)}\\
        &< \vf^{\pi'}\left( \Omega\right)
    \end{align*}
    This contradicts the fact that $\pi^*$ is optimal, which concludes our proof.
\end{proof}

\bellman*

\begin{proof}
    We know that $\forall{l \in \splits^*}\; \exists{0 \le t_l \le \tau_{\Omega}^{*}}: l \in T_{\Omega, t_l}^{*}\setminus \terminal$. We will now show that $\forall{l \in \splits^*}: \vf^\pi\left( l\right) = \vf^{*}\left( l\right)$ which would include $\vf^\pi\left( \Omega\right) = \vf^{*}\left( \Omega\right)$. The proof proceeds by backward induction on $t_l$.

    For $t_l = \tau_\Omega^{*} - 1$, by definition we have:
    \begin{equation}
        \label{eq:14}
        \vf^\pi\left( l\right) = \max_{a \in \action\left( l\right)} \qf^\pi\left( l, a\right) \ge \qf^\pi\left( l, \overline{a}\right) = \objective\left( l\right) = \qf^{*}\left( l, \overline{a}\right)
    \end{equation}
    On the other hand, we know that $\pi^*\left( l\right) = \overline{a}$. Indeed, this is true because otherwise $\pi^*\left( l\right)$ would be a split action leading to non-terminal states, which means that $T_{\Omega, t_l + 1}^{*} = T_{\Omega, \tau_\Omega^{*}}^{\pi^*}$ includes non-terminal states, and this would contradict the definition of $\tau_\Omega^{*}$. Therefore:
    \begin{equation}
        \label{eq:15}
        \vf^{*}\left( l\right) = \qf^{*}\left( l, \pi\left( l\right)\right) = \qf^{*}\left( l, \overline{a}\right)
    \end{equation}
    From \cref{eq:14} and \cref{eq:15} we get:
    \begin{equation}
        \vf^\pi\left( l\right) \ge \vf^{*}\left( l\right)
    \end{equation}
    On the other hand, since $\pi^*$ is optimal and $l \in \splits^*$ we have $\vf^\pi\left( l\right) \le \vf^{\pi^*}\left( l\right)$ according to \cref{lemma:policy-optimal}, and we deduce that:
    \[
    \vf^\pi\left( l\right) = \vf^{*}\left( l\right)
    \]
    Now suppose that $\vf^\pi\left( l\right) = \vf^{*}\left( l\right)$ for all $l \in \splits^*$ such that $t_l = t$ for some $ 1 \le t \le \tau_{\Omega}^{*} - 1$ and let us show that the result still holds for $t-1$. Let $l \in \splits^*$ such that $t_l = t-1$, we have $\vf^\pi\left( l\right) = \max_{a \in \action\left( l\right)}\qf^\pi\left( l, a\right)$. On the other hand:
    \begin{align}
        \label{eq:17}
        \qf^\pi\left( l, \pi^*\left( l\right)\right) = r\left( l, \pi^*\left( l\right)\right) + \sum_{l_u \in F\left( l, \pi^*\left( l\right)\right)\setminus \terminal}\vf^\pi\left( l_u\right)
    \end{align}
    Moreover $\forall{l_u \in F\left( l, \pi^*\left( l\right)\right)\setminus \terminal}: l_u \in \splits^*, t_{l_u} = t$. The induction hypothesis then implies that:
    \[
    \forall{l_u \in F\left( l, \pi^*\left( l\right)\right)\setminus \terminal}: \vf^\pi\left( l_u\right) = \vf^{*}\left( l_u\right)
    \]
    Going back to \cref{eq:17}, we get:
    \[
    \qf^\pi\left( l, \pi^*\left( l\right)\right) = r\left( l, \pi^*\left( l\right)\right) + \sum_{l_u \in F\left( l, \pi^*\left( l\right)\right)\setminus \terminal}\vf^{*}\left( l_u\right) = \qf^{*}\left( l, \pi^*\left( l\right)\right) = \vf^{*}\left( l\right)
    \]
    Now we have:
    \[
    \vf^\pi\left( l\right) = \max_{a \in \action\left( l\right)}\qf^\pi\left( l, a\right) \ge \qf^\pi\left( l, \pi^*\left( l\right)\right) = \vf^{*}\left( l\right)
    \]
    On the other hand $\vf^{\pi}\left( l\right) \le \vf^{*}\left( l\right)$ because $\pi^*$ is optimal. Therefore we deduce that:
    \[
    \vf^{\pi}\left( l\right) = \vf^{*}\left( l\right)
    \]
    which concludes the inductive proof.
\end{proof}

\branchesbound*

\begin{proof}
    Let us first show that the initialisations in \cref{eq:state-heuristic-initial} and \cref{eq:state-heuristic-initial-tip} are upper bounds on the true optimal values $\vf^*\left( l\right)$:

    The case $\action\left( l\right) = \{ \overline{a}\}$ is trivial because the only action that can be taken at $l$ is $\overline{a}$, which means that all policies $\pi$ map $l$ to $\pi\left( l\right) = \overline{a}$, therefore:
    \[
    \forall{\pi}: \vf^\pi\left( l\right) = r\left( l, \overline{a}\right) = \objective\left( l\right)
    \]
    Now consider the case $\action\left( l\right) \setminus \{ \overline{a}\} \neq \emptyset$, we have the following:
    \begin{align*}
        \vf\left( l\right) &= \max\{ \objective\left( l\right), -\lambda + \prob\left[ l\left( X\right) = 1\right]\}\\
        &= \max\{ \qf^*\left( l, \overline{a}\right), -\lambda + \prob\left[ l\left( X\right) = 1\right]\}
    \end{align*}
    It suffices to show now that:
    \[
    \forall{a \in \action\left( l\right) \setminus \{ \overline{a}\}}: \qf^*\left( l, a\right) \le -\lambda + \prob\left[ l\left( X\right) = 1\right]
    \]
    Let $a \in \action\left( l\right) \setminus \{ \overline{a}\}$ be a split action, we have the following:
    \begin{align}
        \qf^*\left( l, a\right) &= r\left( l, a\right) + \sum_{l_u \in F\left( l, a\right) \setminus \terminal}\vf^*\left( l_u\right) \nonumber\\
        &= -\lambda + \sum_{l_u \in F\left( l, a\right)}\vf^*\left( l_u\right) \label{eq:18}
    \end{align}
    On the other hand, \cref{prop:return-objective} implies that:
    \begin{align*}
        \forall{l_u \in F\left( l, a\right)}: \vf^*\left( l_u\right) = \objectiver\left( T_{l_u}^*\right) &= -\lambda\splits\left( T_{l_u}^*\right) + \objective\left( T_{l_u}^*\right)\\
        &\le \prob\left[ l_u\left( X\right) = 1, T_{l_u}^*\left( X\right) = Y\right]\\
        &\le \prob\left[ l_u\left( X\right) = 1\right]
    \end{align*}
    The second line stems from the definition of $\objective\left( T_{l_u}^*\right)$ and the fact that $\splits\left( T_{l_u}^*\right) \ge 0$. Going back to \cref{eq:18} yields:
    \[
    \qf^*\left( l, a\right) \le -\lambda + \sum_{l_u \in F\left( l, a\right)}\prob\left[ l_u\left( X\right) = 1\right]
    \]
    Since $F\left( l, a\right)$ is a partition of $l$ we know that $\prob\left[ l\left( X\right) = 1\right] = \sum_{l_u \in F\left( l, a\right)}\prob\left[ l_u\left( X\right) = 1\right]$ and thus:
    \[
    \qf^*\left( l, a\right) \le -\lambda + \prob\left[ l\left( X\right) = 1\right]
    \]
    We deduce that:
    \begin{align*}
        \vf\left( l\right) &= \max\{ \qf^*\left( l, \overline{a}\right), -\lambda + \prob\left[ l\left( X\right) = 1\right]\}\\
        &\ge \max_{a \in \action\left( l\right)}\qf^*\left( l, a\right) \ge \qf^*\left( l, \pi^*\left( l\right)\right) = \vf^*\left( l\right)
    \end{align*}

    Now let $l \in \splits \setminus \terminal$ and suppose that for all its children, the upper bounds $\vf\left( l_u\right)$ are available. For all split actions $a \in \action\left( l\right) \setminus \{ \overline{a}\}$, the definition \eqref{eq:state-action-heuristic} implies that:
    \[
    \qf\left( l, a\right) = -\lambda + \sum_{l_u \in F\left( l, a\right)}\vf\left( l_u\right)
    \]
    Since $\forall{l_u \in F\left( l, a\right)}: \vf\left( l_u\right) \ge \vf^*\left( l_u\right)$, we get:
    \[
    \qf\left( l, a\right) \ge -\lambda + \sum_{l_u \in F\left( l, a\right)}\vf^*\left( l_u\right) = \qf^*\left( l, a\right)
    \]
    On the other hand, the definition \eqref{eq:state-heuristic} implies that:
    \begin{align*}
        \vf\left( l\right) &= \max_{a \in \action\left( l\right)}\qf\left( l, a\right)\\
        &\ge \max_{a \in \action\left( l\right)}\qf^*\left( l, a\right)\\
        &\ge \qf^*\left( l, \pi^*\left( l\right)\right)\\
        &\ge \vf^*\left( l\right)
    \end{align*}
    which concludes our proof.
\end{proof}

\branchesoptimality*

\begin{proof}
    To prove this we show that for any $l \in \splits \setminus \terminal$, $l$ is SOLVED if and only if:
    \[
    \vf^{\widetilde{\pi}}\left( l\right) = \vf\left( l\right) \ge \vf^*\left( l\right)
    \]
    where $\vf^*$ is the value function of an optimal policy $\pi^*$. We proceed by induction on $ 1 \le \tau_l^{\widetilde{\pi}} \le q$:

    If $\tau_l^{\widetilde{\pi}} = 1$, then we have:
    \begin{equation}
        \label{eq:19}
        \widetilde{\pi}\left( l\right) = \overline{a} = \mathrm{Argmax}_{a \in \action\left( l\right)}\qf\left( l, a\right)
    \end{equation}
    On the other hand:
    \begin{equation}
        \label{eq:20}
        \vf^{\widetilde{\pi}}\left( l\right) = \qf^{\widetilde{\pi}}\left( l, \widetilde{\pi}\left( l\right)\right) = \qf^{\widetilde{\pi}}\left( l, \overline{a}\right) = \qf\left( l, \overline{a}\right)
    \end{equation}
    From \cref{eq:19} and \cref{eq:20} we deduce that:
    \[
    \vf^{\widetilde{\pi}}\left( l\right) = \max_{a \in \action\left( l\right)}\qf\left( l, a\right) = \vf\left( l\right) \ge \vf^*\left( l\right)
    \]
    The last inequality is due to \cref{prop:branches-bound}. Note that we do not necessarily have $\vf^{\widetilde{\pi}}\left( l\right) \le \vf^*\left( l\right)$ even though $\pi^*$ is an optimal policy. Indeed this would only be necessarily satisfied if $l \in \splits^*$ as per \cref{lemma:policy-optimal}.

    Now suppose that the result is true for $\tau_l^{\widetilde{\pi}} = t$ for some $1 \le t \le q-1$ and let us prove it for any $l \in \splits \setminus \terminal$ such that $\tau_l^{\widetilde{\pi}} = t+1$. We have:
    \begin{align*}
        \vf^{\widetilde{\pi}}\left( l\right) &= \qf^{\widetilde{\pi}}\left( l, \widetilde{\pi}\left( l\right)\right)\\
        &= r\left( l, \widetilde{\pi}\left( l\right)\right) + \sum_{l_u \in F\left( l, \widetilde{\pi}\left( l\right)\right)}\vf^{\widetilde{\pi}}\left( l_u\right)
    \end{align*}
    Since $\tau_l^{\widetilde{\pi}} = t+1 \ge 2$ then we necessarily have $\widetilde{\pi}\left( l\right) \in \action\left( l\right) \setminus \{ \overline{a}\}$, i.e. $\widetilde{\pi}\left( l\right)$ is a split action. Indeed, this is true because otherwise we would have $\tau_l^{\widetilde{\pi}} = 1$. Therefore we get:
    \begin{align*}
        \vf^{\widetilde{\pi}}\left( l\right) &= -\lambda + \sum_{l_u \in F\left( l, \widetilde{\pi}\left( l\right)\right)}\vf^{\widetilde{\pi}}\left( l_u\right)
    \end{align*}
    On the other hand $\forall{l_u \in F\left( l, \widetilde{\pi}\left( l\right)\right)}: \tau_{l_u}^{\widetilde{\pi}} = t$ and thus the inductive hypothesis implies that $\vf^{\widetilde{\pi}}\left( l_u\right) = \vf\left( l_u\right) \ge \vf^*\left( l_u\right)$, which means that:
    \begin{align*}
        \vf^{\widetilde{\pi}}\left( l\right) = -\lambda + \sum_{l_u \in F\left( l, \widetilde{\pi}\left( l\right)\right)}\vf\left( l_u\right) = \qf\left( l, \widetilde{\pi}\left( l\right)\right) &= \max_{a \in \action\left( l\right)}\qf\left( l, a\right) = \vf\left( l\right)\\
        &\ge \qf\left( l, \pi^*\left( l\right)\right)\\
        &\ge \qf^*\left( l, \pi^*\left( l\right)\right)\\
        &\ge \vf^*\left( l\right)
    \end{align*}
    which concludes the inductive proof.

    Now, since \branches~terminates when $\Omega$ becomes SOLVED, then we have:
    \[
    \vf^{\widetilde{\pi}}\left( \Omega\right) \ge \vf^*\left( \Omega\right)
    \]
    On the other hand, since $\Omega \in \splits^*$, then by \cref{lemma:policy-optimal}, we necessarily have:
    \[
    \vf^{\widetilde{\pi}}\left( \Omega\right) \le \vf^*\left( \Omega\right)
    \]
    Hence deducing that:
    \[
    \vf^{\widetilde{\pi}}\left( \Omega\right) = \vf^*\left( \Omega\right)
    \]
\end{proof}

\begin{lemma}
    \label{lemma:branch-expansion}
    A branch $l \in \splits \setminus \terminal$ can be chosen for Expansion only if there exists a DT $T$ such that:
    \[
    \begin{cases}
        l \in T\setminus L\\
        -\lambda\splits\left( T\right) + \sum_{l' \in L}\objective\left( l'\right) + \sum_{l' \in T\setminus L}\big\{ -\lambda + \prob\left[ l'\left( X\right) = 1\right]\big\} \ge -\lambda\splits\left( T^*\right) + \objective\left( T^*\right)
    \end{cases}
    \]
    Where $L = \{ l' \in T: \objective\left( l'\right) \ge -\lambda + \prob\left[ l'\left( X\right) = 1\right]\}$.
\end{lemma}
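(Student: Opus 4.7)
The plan is to construct the candidate DT $T$ directly from the Selection path that produced $l$, and then derive the inequality by unrolling the recursion $\rf(\sigma_i) = \qf(\sigma_i, a_i)$ along that path and substituting the Purification Bounds at the leaves.

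First I would record the Selection path $(\sigma_0, a_0), (\sigma_1, a_1), \ldots, (\sigma_{k-1}, a_{k-1})$ stored in \textit{path}, with $\sigma_0 = \Omega$, $\sigma_k = l$, and each $a_i$ a split action maximising $\qf(\sigma_i, \cdot)$ at the current iteration (so that $\rf(\sigma_i) = \qf(\sigma_i, a_i)$). I would define $T$ as the sub-DT rooted in $\Omega$ obtained by performing exactly these $k$ splits; its leaves are $l$ together with every sibling unit-state $l'$, i.e.\ every child of some $\sigma_i$ under $a_i$ that is not $\sigma_{i+1}$. By construction $\splits(T) = k$ and $l \in T$.

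The main step is a telescoping identity. By \cref{lemma:bound}, $\qf(\Omega, a_0) = \rf(\Omega) \ge \qf^*(\Omega, \pi^*(\Omega)) = \rf^*(\Omega) = \objectiver(T^*)$. Using $\rf(\sigma_i) = \qf(\sigma_i, a_i) = -\lambda + \rf(\sigma_{i+1}) + \sum_{l' \text{ sibling at level } i}\rf(l')$ and iterating from $i=0$ to $i=k-1$ gives
\begin{equation*}
\qf(\Omega, a_0) \;=\; -k\lambda + \rf(l) + \sum_{l' \in T \setminus \{l\}} \rf(l').
\end{equation*}
A short induction on the iteration count shows that for every unit-state $l''$, the stored $\rf(l'')$ never exceeds its Purification Bound $\max\{\objective(l''), -\lambda + \prob[l''(X) = 1]\}$: the base case is the initialization in \cref{eq:upper-bound-state-return}, and the inductive step uses $\sum_v \rf(c_v) \le \sum_v \prob[c_v(X)=1] = \prob[l''(X)=1]$ to cap each updated $\qf(l'', a)$ for a split action $a$. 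Replacing every $\rf(l')$ by its Purification Bound, and using $l \in T \setminus L$ to replace $\rf(l)$ by $-\lambda + \prob[l(X)=1]$, then splitting the leaf sum according to the definition of $L$ yields exactly
\begin{equation*}
\objectiver(T^*) \;\le\; -\lambda\splits(T) + \sum_{l' \in L}\objective(l') + \sum_{l' \in T\setminus L}\{-\lambda + \prob[l'(X)=1]\},
\end{equation*}
which is the claimed bound.

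The hardest part will be the remaining claim $l \in T \setminus L$, i.e.\ $\objective(l) < -\lambda + \prob[l(X) = 1]$ whenever $l$ is selected for Expansion. This does not follow from the high-level Selection description alone: it relies on the implementation detail in \cref{alg:branches} that a child $l''$ whose Purification Bound is already attained at its terminal action is completed at its parent's Expansion and never pushed onto the Selection priority queue. I would formalise this as a short preliminary lemma on the queue's contents before appealing to it, since once it is in place the remainder is the clean telescoping/substitution calculation above.
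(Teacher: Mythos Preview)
Your argument is correct and follows essentially the same route as the paper: both start from $\rf(\Omega)\ge\rf^*(\Omega)=\objectiver(T^*)$ via \cref{lemma:bound}, unfold $\rf(\Omega)$ through the greedy Selection choices to a sum of leaf estimates, and then cap those estimates by the Purification Bounds of \cref{prop:bound}. The only real difference is the witness $T$. The paper takes $T=T^{\tilde\pi}$, the full DT of the Selection policy $\tilde\pi$ (argmax on visited unit-states, $\overline a$ on unvisited ones), so every leaf is either unvisited or already has $\overline a$ as its argmax and the Purification Bound holds at the leaves without further work. You instead take the smaller ``path DT'' with exactly the $k$ splits along the Selection path; its sibling leaves may have been visited and tightened, which is why you need the extra monotonicity induction $\rf(l'')\le\max\{\objective(l''),-\lambda+\prob[l''(X)=1]\}$. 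Both $T$'s are valid witnesses for the existential statement; yours is more explicit, the paper's slightly terser. For $l\in T\setminus L$ the paper gives exactly the argument you anticipate, but phrased abstractly rather than via the queue: any $l'\in L$ satisfies $\qf^*(l',\overline a)=\objective(l')\ge-\lambda+\prob[l'(X)=1]\ge\qf^*(l',a)$ for every split $a$ by \cref{prop:bound}, so $l'$ is complete on initialization and cannot be selected for Expansion.
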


\begin{proof}
    A branch $l \in \splits \setminus \terminal$ can only be chosen for Expansion if it is a tip node (leaf) of the search graph $G$ and is selected by the selection policy $\widetilde{\pi}$, i.e.
    \[
    \exists{t_l \ge 0}: l \in T_{\Omega, t_l}^{\widetilde{\pi}}
    \]
    We know that there exists $\tau \ge 0$ such that for all $l \in T_{\Omega, \tau}^{\widetilde{\pi}}$, $l$ is a tip node of $G$. This is true because otherwise $G$ would be bottomless, which is false because it has a maximum depth of $q$ (the total number of features). On the other hand, the recursive definition of $\vf$ yields:
    \[
    \vf\left( \Omega\right) = -\lambda\splits\left( T_{\Omega, \tau}^{\widetilde{\pi}}\right) + \sum_{l' \in T_{l, \tau}^{\widetilde{\pi}}}\vf\left( l'\right)
    \]
    Since all branches $l' \in T_{\Omega, \tau}^{\widetilde{\pi}}$ are tip nodes of $G$, then $\vf\left( l'\right) = \objective\left( l'\right)$ if $\objective\left( l'\right) \ge -\lambda + \prob\left[ l'\left( X\right) = 1\right]$ and $\vf\left( l'\right) = -\lambda + \prob\left[ l'\left( X\right) = 1\right]$ otherwise. Define $L = \{ l' \in T_{\Omega, \tau}^{\widetilde{\pi}}: \objective\left( l'\right) \ge -\lambda + \prob\left[ l'\left( X\right) = 1\right]\}$, we have:
    \[
    \vf\left( \Omega\right) = -\lambda\splits\left( T_{\Omega, \tau}^{\widetilde{\pi}}\right) + \sum_{l' \in L}\objective\left( l'\right) + \sum_{l' \in T_{\Omega, \tau}^{\widetilde{\pi}} \setminus L}\big\{ -\lambda + \prob\left[ l'\left( X\right) = 1\right]\big\}
    \]
    On the other hand, according to \cref{prop:branches-bound} and \cref{prop:return-objective} we have the following:
    \[
    \vf\left( \Omega\right) \ge \vf^*\left( \Omega\right) = -\lambda\splits\left( T^*\right) + \objective\left( T^*\right)
    \]
    Thus:
    \[
    -\lambda\splits\left( T_{\Omega, \tau}^{\widetilde{\pi}}\right) + \sum_{l' \in L}\objective\left( l'\right) + \sum_{l' \in T_{\Omega, \tau}^{\widetilde{\pi}} \setminus L}\big\{ -\lambda + \prob\left[ l'\left( X\right) = 1\right]\big\} \ge \lambda\splits\left( T^*\right) + \objective\left( T^*\right)
    \]
    Moreover, $l$ can only be expanded if $l \in T_{\Omega, \tau}^{\widetilde{\pi}}$ because all the branches in $L$ are SOLVED. This concludes our proof.
\end{proof}

\begin{theorem}[Problem-dependent complexity of \branches]
    \label{thm:branches-complexity-dependent}
    Let $\Gamma\left( q, C, \lambda\right)$ denote the total number of branch evaluations performed by \branches~for an instance of the classification problem with $q \ge 2$ features, $0 < \lambda \le 1$ the penalty parameter, and $C \ge 2$ the number of categories per feature. Then, $\Gamma\left( q, C, \lambda\right)$ satisfies the following bound:
    \[
    \Gamma\left( q, C, \lambda\right) \le \sum_{h = 0}^\kappa \left( q - h\right)C^{h+1}\binom{q}{h}; \; \kappa = \min\Bigg\{ \floor*{\splits\left( T^*\right) - 1 + \frac{1 - \objective\left( T^*\right)}{\lambda}}, q \Bigg\}
    \]
\end{theorem}

\begin{proof}
    Let $l$ be a branch. According to \cref{lemma:branch-expansion}, for $l$ to be considered for Expansion, there has to exist a DT $T$ such that:
    \[
    \begin{cases}
        l \in T\setminus L\\
        -\lambda\splits\left( T\right) + \sum_{l' \in L}\objective\left( l'\right) + \sum_{l' \in T\setminus L}\big\{ -\lambda + \prob\left[ l'\left( X\right) = 1\right]\big\} \ge -\lambda\splits\left( T^*\right) + \objective\left( T^*\right)
    \end{cases}
    \]
    where $L = \{ l' \in T: \objective\left( l'\right) \ge -\lambda + \prob\left[ l'\left( X\right) = 1\right]\}$. Suppose $l$ is such a branch, then we have:
    \begin{align*}
        -\lambda\splits\left( T\right) + \sum_{l' \in L}\underbrace{\objective\left( l'\right)}_{\le \prob\left[ l'\left( X\right) = 1\right]} + \sum_{l' \in T\setminus L}\big\{ -\lambda + \prob\left[ l'\left( X\right) = 1\right]\big\} &\ge -\lambda\splits\left( T^*\right) + \objective\left( T^*\right)\\
        \implies -\lambda\Big\{ \splits\left( T\right) + |T \setminus L|\Big\} + \sum_{l' \in T}\prob\left[ l'\left( X\right)=1\right] &\ge -\lambda\splits\left( T^*\right) + \objective\left( T^*\right)
    \end{align*}
    Since $l \in T \setminus L$, then $|T \setminus L| \ge 1$ and we get:
    \begin{align*}
        -\lambda\Big\{ \splits\left( T\right) + 1\Big\} + 1 &\ge -\lambda\splits\left( T^*\right) + \objective\left( T^*\right)\\
        \implies \splits\left( T\right) &\le \splits\left( T^*\right) - 1 + \frac{1 - \objective\left( T^*\right)}{\lambda}\\
        \implies \splits\left( l\right) &\le \splits\left( T^*\right) - 1 + \frac{1 - \objective\left( T^*\right)}{\lambda}
    \end{align*}
    Let $\mathcal{C} = \Big\{ l \; \textrm{branch}: \splits\left( l\right) \le \splits\left( T^*\right) - 1 + \frac{1 - \objective\left( T^*\right)}{\lambda}\Big\}$. Then the number of branches that are expanded is upper bounded by $|\mathcal{C}|$.
    
    We recall that we rather seek to upper bound the number of branches that are evaluated, i.e. for which we calculate $\objective\left( l\right)$. These evaluations happen during the Expansion step of \branches. When a branch $l$ is expanded, we evaluate all of its children. There are $q-\splits\left( l\right)$ features left to use for splitting $l$, and for each split, $C$ children branches are created. Thus, there are $\left( q-\splits\left( l\right)\right)C$ children of $l$, hence $\left( q-\splits\left( l\right)\right)C$ evaluations happen during the expansion of $l$. Let us now upper bound $\Gamma\left( q, C, \lambda\right)$.

    For each branch $l \in \mathcal{C}$:
    \begin{itemize}
        \item We choose $\splits\left( l\right) \in \Bigg\{ 0, \ldots, \min\Bigg\{ \floor*{\splits\left( T^*\right) - 1 + \frac{1 - \objective\left( T^*\right)}{\lambda}}, q\Bigg\}\Bigg\}$. The minimum comes from the fact that $l \in \mathcal{C}$ and $\splits\left( l\right) \le q$.
        \item For each $h = \splits\left( l\right)$, we construct $l$ by choosing $h$ features among the total $q$ features, there are $\binom{q}{h}$ such choices.
        \item For each choice among the $\binom{q}{h}$ choices, for each feature among the $h$ features, there are $C$ choices of values, therefore there are $C^h\binom{q}{h}$ branches with depth $h$.
        \item For each branch of depth $h$, when it is expanded, $\left( q-h\right)C$ evaluations occur.
    \end{itemize}
    With these considerations, we deduce that:
    \[
    \Gamma\left( q, C, \lambda\right) \le \sum_{h = 0}^\kappa \left( q - h\right)C^{h+1}\binom{q}{h}; \; \kappa = \min\Bigg\{ \floor*{\splits\left( T^*\right) - 1 + \frac{1 - \objective\left( T^*\right)}{\lambda}}, q\Bigg\}
    \]
\end{proof}

\branchescomplexityindependent*

\begin{proof}
    This is a corollary of \cref{thm:branches-complexity-dependent}. To make the bound problem-independent, let us upper bound $\kappa$ and make it independent of $T^*$. We know that:
    \begin{align*}
        \objectiver\left( T^*\right) = -\lambda\splits\left( T^*\right) + \objective\left( T^*\right) &\ge \objectiver\left( \Omega\right) = \objective\left( \Omega\right) = \prob\left[ Y = k^*\left( \Omega\right)\right] \ge \frac{1}{K}\\
        \implies \splits\left( T^*\right) - 1 + \frac{1 - \objective\left( T^*\right)}{\lambda} &\le \frac{K-1}{K\lambda} - 1
    \end{align*}
    Which concludes the proof.
\end{proof}

\section{Additional Experiments}
\label{appendix:experiments}

\begin{table}[t]
  \caption{Number of examples $n$, number of features $q$, number of classes $K$ and penalty parameter $\lambda$ for the different datasets used in our experiments. $d$ is the depth of optimal solution, which we employ in \cref{appendix:suboptimality-dl85}.}
  \label{tab:datasets}
  \centering
  \begin{tabular}{?r?c?c?c?c?c?}\toprule
    Dataset & $n$ & $q$ & $K$ & $\lambda$ & $d$\\ \midrule \midrule
    monk1   & 124 & 17 & 2 & $0.01$ & 4\\ \hline
    monk1-l & 124 & 11 & 2 & $0.01$ & 5\\ \hline
    monk1-f & 124 & 11 & 2 & $0.001$ & 7\\ \hline
    monk1-o & 124 & 6 & 2 & $0.01$ & 3\\ \hline
    monk2   & 169 & 17 & 2 & $0.001$ & 6\\ \hline
    monk2-f & 169 & 11 & 2 & $0.001$ & 9\\ \hline
    monk2-o & 169 & 6 & 2 & $0.001$ & 3\\ \hline
    monk3   & 122 & 17 & 2 & $0.001$ & 8\\ \hline
    monk3-f & 122 & 11 & 2 & $0.001$ & 7\\ \hline
    monk3-o & 122 & 6 & 2 & $0.001$ & 4\\ \hline
    tic-tac-toe   & 958 & 27 & 2 & $0.005$ & 6\\ \hline
    tic-tac-toe-f & 958 & 18 & 2 & $0.005$ & 6\\ \hline
    tic-tac-toe-o & 958 & 9 & 2 & $0.005$ & 5\\ \hline
    car-eval   & 1728 & 21 & 4 & $0.005$ & 8\\ \hline
    car-eval-f & 1728 & 15 & 4 & $0.005$ & 8\\ \hline
    car-eval-o & 1728 & 6 & 4 & $0.005$ & 4\\ \hline
    nursery   & 12960 & 27 & 5 & $0.01$ & 6\\ \hline
    nursery-f & 12960 & 19 & 5 & $0.01$ & 4\\ \hline
    nursery-o & 12960 & 8 & 4 & $0.01$ & 3\\ \hline
    mushroom   & 8124 & 117 & 2 & $0.01$ & 3\\ \hline
    mushroom-f & 8124 & 95 & 2 & $0.01$ & 3\\ \hline
    mushroom-o & 8124 & 22 & 2 & $0.01$ & 1\\ \hline
    kr-vs-kp & 3196 & 73 & 2 & $0.01$ & 4\\ \hline
    zoo   & 101 & 36 & 7 & $0.001$ & 6\\ \hline
    zoo-f & 101 & 20 & 7 & $0.001$ & 7\\ \hline
    zoo-o & 101 & 16 & 7 & $0.001$ & 5\\ \hline
    lymph   & 148 & 59 & 4 & $0.01$ & 4\\ \hline
    lymph-f & 148 & 41 & 4 & $0.01$ & 5\\ \hline
    lymph-o & 148 & 18 & 4 & $0.01$ & 4\\ \hline
    balance   & 576 & 20 & 3 & $0.01$ & 8\\ \hline
    balance-f & 576 & 16 & 3 & $0.01$ & 10\\ \hline
    balance-o & 576 & 4 & 3 & $0.01$ & 3\\
    \bottomrule
  \end{tabular}
\end{table}

All the experiments were run on a personal Machine (2,6 GHz 6-Core Intel Core i7), they are easily reproducible. Below we provide references to the implementations we used:
\begin{itemize}
    \item \branches~\url{https://github.com/Chaoukia/branches}.
    \item DL8.5~\url{https://github.com/aia-uclouvain/pydl8.5.git}.
    \item OSDT~\url{https://github.com/xiyanghu/OSDT.git}.
    \item GOSDT~\url{https://github.com/ubc-systopia/gosdt-guesses.git}.
    \item MurTree~\url{https://github.com/MurTree/pymurtree.git}.
    \item STreeD~\url{https://github.com/AlgTUDelft/pystreed.git}.
\end{itemize}
MurTree, STreeD and GOSDT use support set caching and the similarity bound. On the other hand, the current implementation of \branches~only supports branch caching and it does not include a similarity bound. Additionally, we set the maximum nodes for MurTree to $80$. We noticed that without this constraint, the kernel dies immediately. The optimal sparse DT for all the experiments we consider have less than $80$ nodes, thus this constraint should not cause MurTree to be suboptimal.

\cref{tab:datasets} summarises the properties of the datasets we employed in our experiments. The $\lambda$ values reported in \cref{tab:datasets} are those employed in the experiments of \cref{sec:experiments}, they were chosen from a pool of values $\lambda \in \{ 0.1, 0.05, 0.025, 0.01, 0.005, 0.001\}$ to yield meaningful solutions. \cref{appendix:pareto} provides the induced Pareto fronts by these values for a 10-fold crossvalidation.

\subsection{Comparing \branches~with Python implementations}
\label{appendix:branches-vs-python}

\begin{table*}[t]
  \caption{Comparing \branches~with the state of the art python implementations for a large maximum depth $20$.}
  \label{tab:branches-experiments-python}
  \centering
  \scalebox{0.75}{
  \begin{tabular}{?r?c|c|c|c|c?c|c|c|c|c?c|c|c|c|c?}\toprule
    \multicolumn{1}{?c?}{\multirow{2}{*}{Dataset}} & \multicolumn{5}{c?}{\textbf{OSDT}} & \multicolumn{5}{c?}{\textbf{PYGOSDT}} & \multicolumn{5}{c?}{\textbf{\branches}} \\
     & \rotatebox{90}{objective} & \rotatebox{90}{accuracy} & \rotatebox{90}{splits} & \rotatebox{90}{time (s)} & \rotatebox{90}{iterations} & \rotatebox{90}{objective} & \rotatebox{90}{accuracy} & \rotatebox{90}{splits} & \rotatebox{90}{time (s)} & \rotatebox{90}{iterations} & \rotatebox{90}{objective} & \rotatebox{90}{accuracy} & \rotatebox{90}{splits} & \rotatebox{90}{time (s)} & \rotatebox{90}{iterations} \\ \midrule \midrule 
    monk1 & $\bm{0.940}$ & $\bm{1}$ & $\bm{6}$ & $2.38$ & $94901$ & $\bm{0.940}$ & $\bm{1}$ & $\bm{6}$ & $6.03$ & $174523$ & $\bm{0.940}$ & $\bm{1}$ & $\bm{6}$ & $\bm{0.05}$ & $\bm{146}$ \\ \hline
    monk1-l & $\bm{0.930}$ & $\bm{1}$ & $\bm{7}$ & $71$ & $2028577$ & $\bm{0.930}$ & $\bm{1}$ & $\bm{7}$ & $181$ & $3731292$ & $\bm{0.930}$ & $\bm{1}$ & $\bm{7}$ & $\bm{0.02}$ & $\bm{117}$  \\ \hline
    monk1-f & $0.971$ & $\bm{1}$ & $29$ & $TO$ & $22308$ & $0.970$ & $\bm{1}$ & $30$ & $TO$ & $2018$ & $\bm{0.983}$ & $\bm{1}$ & $\bm{17}$ & $\bm{0.39}$ & $\bm{2125}$ \\ \hline
    monk1-o & \rule{0.4cm}{0.3mm} & \rule{0.4cm}{0.3mm} & \rule{0.4cm}{0.3mm} & \rule{0.4cm}{0.3mm} & \rule{0.4cm}{0.3mm} & \rule{0.4cm}{0.3mm} & \rule{0.4cm}{0.3mm} & \rule{0.4cm}{0.3mm} & \rule{0.4cm}{0.3mm} & \rule{0.4cm}{0.3mm} & $\bm{0.900}$ & $\bm{1}$ & $\bm{10}$ & $\bm{0.02}$ & $\bm{64}$ \\ \hline
    monk2 & $0.948$ & $\bm{1}$ & $52$ & $TO$ & $41$ & $0.948$ & $\bm{1}$ & $52$ & $TO$ & $44$ & $\bm{0.968}$ & $\bm{1}$ & $\bm{32}$ & $\bm{14.2}$ & $\bm{60611}$ \\ \hline
    monk2-f & $0.904$ & $0.982$ & $76$ & $TO$ & $44083$ & $0.903$ & $0.982$ & $77$ & $TO$ & $32475$ & $\bm{0.933}$ & $\bm{1}$ & $\bm{67}$ & $\bm{2.94}$ & $\bm{28968}$ \\ \hline
    monk2-o & \rule{0.4cm}{0.3mm} & \rule{0.4cm}{0.3mm} & \rule{0.4cm}{0.3mm} & \rule{0.4cm}{0.3mm} & \rule{0.4cm}{0.3mm} & \rule{0.4cm}{0.3mm} & \rule{0.4cm}{0.3mm} & \rule{0.4cm}{0.3mm} & \rule{0.4cm}{0.3mm} & \rule{0.4cm}{0.3mm} & $\bm{0.955}$ & $\bm{1}$ & $\bm{45}$ & $\bm{0.18}$ & $\bm{1213}$ \\ \hline
    monk3 & $0.976$ & $0.991$ & $15$ & $TO$ & $17728$ & $0.976$ & $0.991$ & $15$ & $TO$ & $5765$ & $\bm{0.985}$ & $\bm{1}$ & $\bm{15}$ & $\bm{4.05}$ & $\bm{14807}$ \\ \hline
    monk3-f & $0.975$ & $0.991$ & $15$ & $TO$ & $11875$ & $0.973$ & $0.991$ & $17$ & $TO$ & $897$ & $\bm{0.983}$ & $\bm{1}$ & $\bm{17}$ & $\bm{0.36}$ & $\bm{3026}$ \\ \hline
    monk3-o & \rule{0.4cm}{0.3mm} & \rule{0.4cm}{0.3mm} & \rule{0.4cm}{0.3mm} & \rule{0.4cm}{0.3mm} & \rule{0.4cm}{0.3mm} & \rule{0.4cm}{0.3mm} & \rule{0.4cm}{0.3mm} & \rule{0.4cm}{0.3mm} & \rule{0.4cm}{0.3mm} & \rule{0.4cm}{0.3mm} & $\bm{0.987}$ & $\bm{1}$ & $\bm{13}$ & $\bm{0.03}$ & $\bm{156}$ \\ \hline
    tic-tac-toe & $0.794$ & $0.869$ & $15$ & $TO$ & $76$ & $0.794$ & $0.869$ & $15$ & $TO$ & $69$ & $\bm{0.838}$ & $\bm{0.928}$ & $\bm{18}$ & $TO$ & $\bm{390000}$ \\ \hline
    tic-tac-toe-f & $0.764$ & $0.824$ & $11$ & $TO$ & $40$ & $0.808$ & $0.824$ & $11$ & $TO$ & $37$ & $\bm{0.850}$ & $\bm{0.945}$ & $\bm{19}$ & $\bm{16.3}$ & $\bm{74627}$ \\ \hline
    tic-tac-toe-o & \rule{0.4cm}{0.3mm} & \rule{0.4cm}{0.3mm} & \rule{0.4cm}{0.3mm} & \rule{0.4cm}{0.3mm} & \rule{0.4cm}{0.3mm} & \rule{0.4cm}{0.3mm} & \rule{0.4cm}{0.3mm} & \rule{0.4cm}{0.3mm} & \rule{0.4cm}{0.3mm} & \rule{0.4cm}{0.3mm} & $\bm{0.773}$ & $\bm{0.858}$ & $\bm{17}$ & $\bm{0.68}$ & $\bm{3339}$ \\ \hline
    mushroom & $\bm{0.955}$ & $\bm{0.985}$ & $\bm{3}$ & $\bm{76.2}$ & $\bm{1186819}$ & $\bm{0.955}$ & $\bm{0.985}$ & $\bm{3}$ & $211$ & $2681260$ & $\bm{0.955}$ & $\bm{0.985}$ & $\bm{3}$ & $TO$ & $21000$ \\ \hline
    mushroom-f & $\bm{0.945}$ & $\bm{0.985}$ & $\bm{4}$ & $TO$ & $4704419$ & $\bm{0.945}$ & $\bm{0.985}$ & $\bm{4}$ & $TO$ & $2487909$ & $\bm{0.945}$ & $\bm{0.985}$ & $\bm{4}$ & $TO$ & $24000$ \\ \hline
    mushroom-o & \rule{0.4cm}{0.3mm} & \rule{0.4cm}{0.3mm} & \rule{0.4cm}{0.3mm} & \rule{0.4cm}{0.3mm} & \rule{0.4cm}{0.3mm} & \rule{0.4cm}{0.3mm} & \rule{0.4cm}{0.3mm} & \rule{0.4cm}{0.3mm} & \rule{0.4cm}{0.3mm} & \rule{0.4cm}{0.3mm} & $\bm{0.975}$ & $\bm{0.985}$ & $\bm{1}$ & $\bm{0.15}$ & $\bm{6}$ \\ \hline
    kr-vs-kp & $\bm{0.900}$ & $\bm{0.940}$ & $\bm{4}$ & $TO$ & $67161$ & $\bm{0.900}$ & $\bm{0.940}$ & $\bm{4}$ & $TO$ & $25379$ & $\bm{0.900}$ & $\bm{0.940}$ & $\bm{4}$ & $TO$ & $46000$ \\
    \bottomrule
  \end{tabular}}
\end{table*}

We have shown in \cref{sec:experiments} that \branches~outperforms its C++ competitors in terms of runtime even though it is implemented in Python. To further illustrate the importance of this achievement, we compare \branches~with Python implementations and show the vast difference in performance in favour of \branches.

\cref{tab:branches-experiments-python} compares \branches~with OSDT and PYGOSDT, it contains less datasets than \cref{tab:branches-experiments-large-depths} because the implementations of OSDT and PYGOSDT are restrained to binary classification problems. \cref{tab:branches-experiments-python} shows that \branches~outperforms OSDT and GOSDT in terms of runtime, number of iterations and quality of the proposed solution on all the experiments except mushroom. On mushroom, OSDT and PYGOSDT terminate in $76.2s$ and $211s$ respectively while \branches~reaches timeout. Nevertheless, \branches' anytime behaviour still allowed the retrieval of the true optimal sparse DT in this case.

\cref{tab:branches-experiments-python} showcases the shortcomings of Python implementations for the sparsity problem as both OSDT and PYGOSDT reach timeout and are suboptimal on the vast majority of datasets. Due to the optimisation difficulty of the sparsity problem, a lot of care has to be dedicated to low-level optimisation of the proposed implementations, a property that C++ offers better than Python, hence the vast adoption of C++ in the state of the art algorithms solving the sparsity problem. \branches~being implemented in Python and yet outperforming its C++ competitors is a testament to its efficient AO*-based search strategy. Moreover, a future C++ implementation of \branches~is very promising especially when coupled with multi-threading.

\subsection{Pareto fronts}
\label{appendix:pareto}

The purpose of this section is to compare DT algorithms of different natures. \branches~solves the sparsity problem, CART is a greedy search algorithms for DTs and DL8.5 seeks the optimal DT subject to a hard constraint on maximum depth. \cref{fig:pareto-train}, \cref{fig:pareto-test}, \cref{fig:pareto-train-dl85} and \cref{fig:pareto-test-dl85} report the median and the quartiles of the accuracy and number of splits of the proposed solutions induced by a $10$ fold crossvalidation, branches-o in the legends represents \branches~applied to ordinal encoding. For DL8.5, we ran it with with maximum depths raging between $1$ and $10$. As for \branches~and CART, they were run with $\lambda \in \{ 0.1, 0.05, 0.025, 0.01, 0.005, 0.001\}$. Note that $\lambda$ in the context of CART refers to the cost-complexity parameter (ccp\_alpha in scikit-learn) that is used for the pruning phase. It plays a similar role to the penalty parameter $\lambda$, hence why we tested both with the same set of values. A similar set of experiments was conducted by \citet{hu2020learning} albeit with the maximum depth for CART instead of the cost-complexity parameter. We noticed that the cost-complexity parameter yields better frontiers for CART than the maximum depth, hence our choice. \cref{fig:pareto-train-dl85} and \cref{fig:pareto-test-dl85} report the same results as \cref{fig:pareto-train} and \cref{fig:pareto-test} respectively but with the inclusion of DL8.5. We made this distinction for clarity purposes because DL8.5's solutions become so complex (high number of splits) that the differences between the frontiers of CART and \branches~become less apparent.

\cref{fig:pareto-train} shows that \branches~displays better training frontiers than those of CART, and on many occasions branches-o yields even better training frontiers as we can see in monk1-f, monk2-f, careval-f, nursery, nursery-f, mushroom, mushroom-f, zoo, zoo-f, lymph-f and balance-f. This indicates that applying \branches~on an ordinal encoding of the data is not only good from a scalability standpoint (as we have seen with the extremely fast convergence in \cref{tab:branches-experiments-large-depths}) but can also yield better solutions in terms of accuracy and the number of splits. On the other hand, the differences in the test frontiers, depicted in \cref{fig:pareto-test} are less clear than for the training frontiers. Nevertheless, we can still observe an advantage for \branches' test frontiers over CART's. Moreover, \cref{fig:pareto-test} illustrates a tendency of branches-o to overfit as we observe in monk2, monk2-f, tictactoe, tictactoe-f, lymph, lymph-f, balance and balance-f. This phenomenon can be explained by the tendency of non-binary DTs on ordinal encodings to induce branches containing smaller subsets of data than those induced by their binary DTs counterparts, thus increasing the risk of overfitting. Notice that for large datasets (nursery with $12960$ data and mushroom with $8124$ data) branches-o does not run into this overfitting issue.

\cref{fig:pareto-train-dl85} and \cref{fig:pareto-test-dl85} illustrate how DL8.5 actively seeks more and more complex DTs as the maximum depth parameter increases, thus disregarding sparsity concerns and inducing poor frontiers compared to \branches~and CART. To alleviate this tendency, \citet{aglin2020learning} further include a constraint on the minimum support size of DL8.5, i.e. the minimum number of data that a branch has to contain in order to be considered for expansion. In the next section, we investigate this constraint and show that even with the best configurations DL8.5 is still suboptimal.

\begin{figure}[t]
    \centering
    \includegraphics[width=1\textwidth]{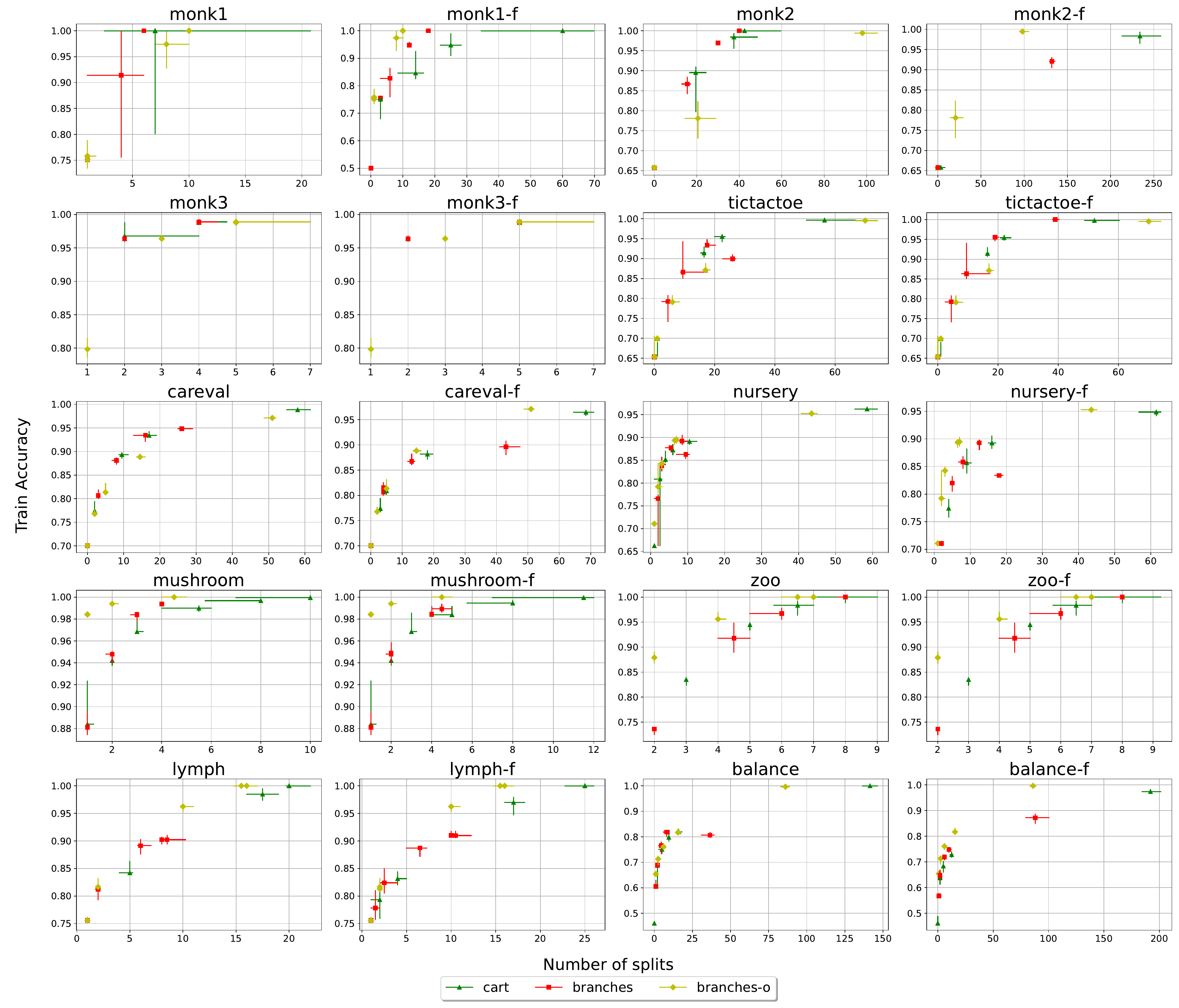}
    \caption{Pareto fronts of training accuracy against the number of splits of the proposed solutions. branches-o indicated that \branches~is applied to an ordinal encoding of the data.}
    \label{fig:pareto-train}
\end{figure}

\begin{figure}[t]
    \centering
    \includegraphics[width=1\textwidth]{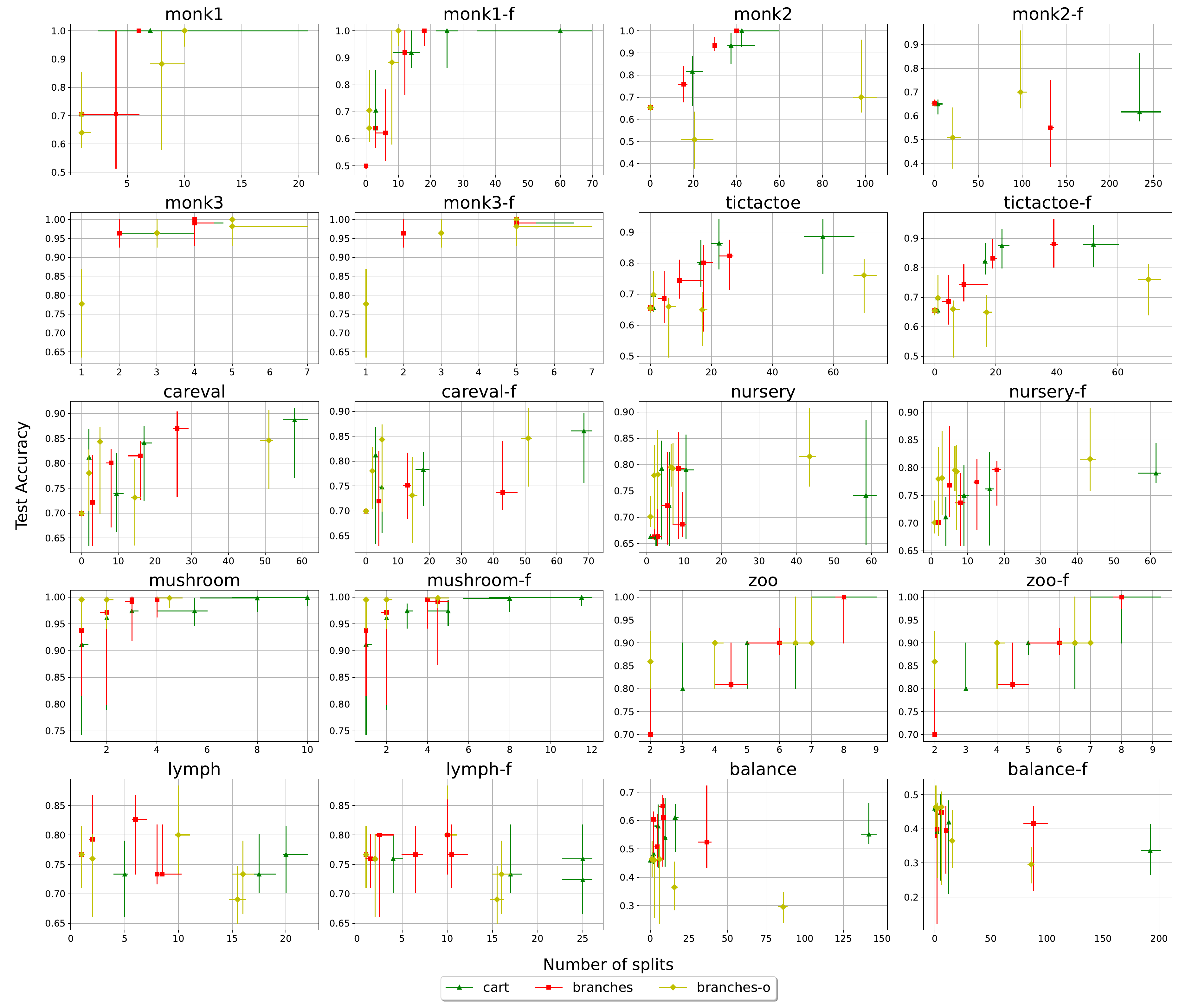}
    \caption{Pareto fronts of test accuracy against the number of splits of the proposed solutions. branches-o indicated that \branches~is applied to an ordinal encoding of the data.}
    \label{fig:pareto-test}
\end{figure}

\begin{figure}[t]
    \centering
    \includegraphics[width=1\textwidth]{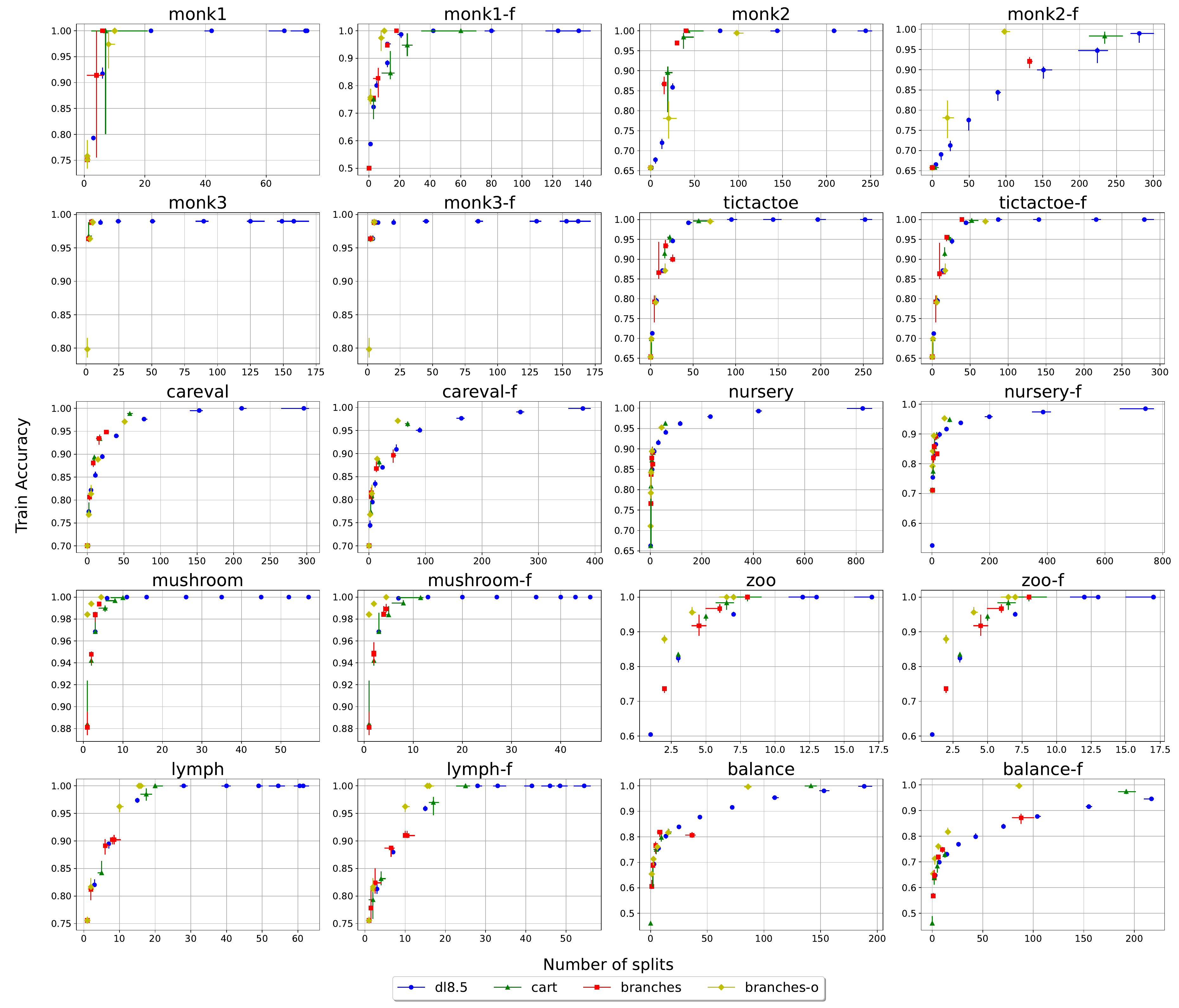}
    \caption{Pareto fronts of training accuracy against the number of splits of the proposed solutions. This figure is similar to \cref{fig:pareto-train} but further includes DL8.5.}
    \label{fig:pareto-train-dl85}
\end{figure}

\begin{figure}[t]
    \centering
    \includegraphics[width=1\textwidth]{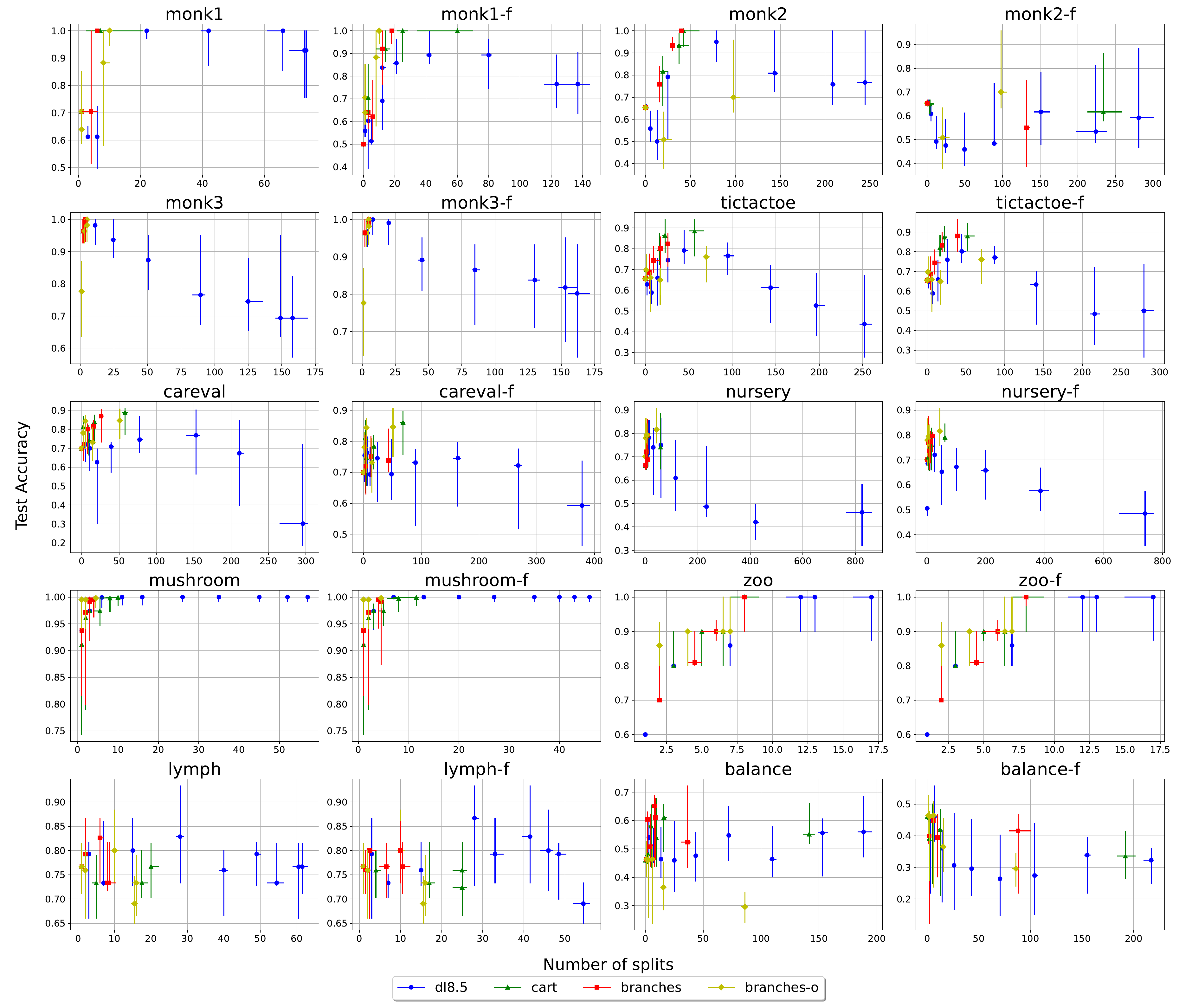}
    \caption{Pareto fronts of test accuracy against the number of splits of the proposed solutions. This figure is similar to \cref{fig:pareto-test} but further includes DL8.5.}
    \label{fig:pareto-test-dl85}
\end{figure}

\subsection{Suboptimality of DL8.5}
\label{appendix:suboptimality-dl85}

In all the experiments, we set the maximum depth of DL8.5 to the depth of the true optimal sparse DT that we derive using \branches, these depth values are reported in \cref{tab:datasets}. We analyse DL8.5's performance over $20$ values of minimum support size (minimum allowed number of instances per branch) taken between $1$ and $50$. \cref{fig:supports} shows that, even with the knowledge of the best possible maximum depth parameter and the best configuration of minimum support size, DL8.5 almost never approaches the baseline derived by \branches. \cref{fig:supports-2} further illustrates that \branches' solution is always outside the accuracy-splits frontier displayed by DL8.5, which means that \branches' solution always dominates DL8.5's solution from a sparsity perspective.

\begin{figure}[t]
    \centering
    \includegraphics[width=1\linewidth]{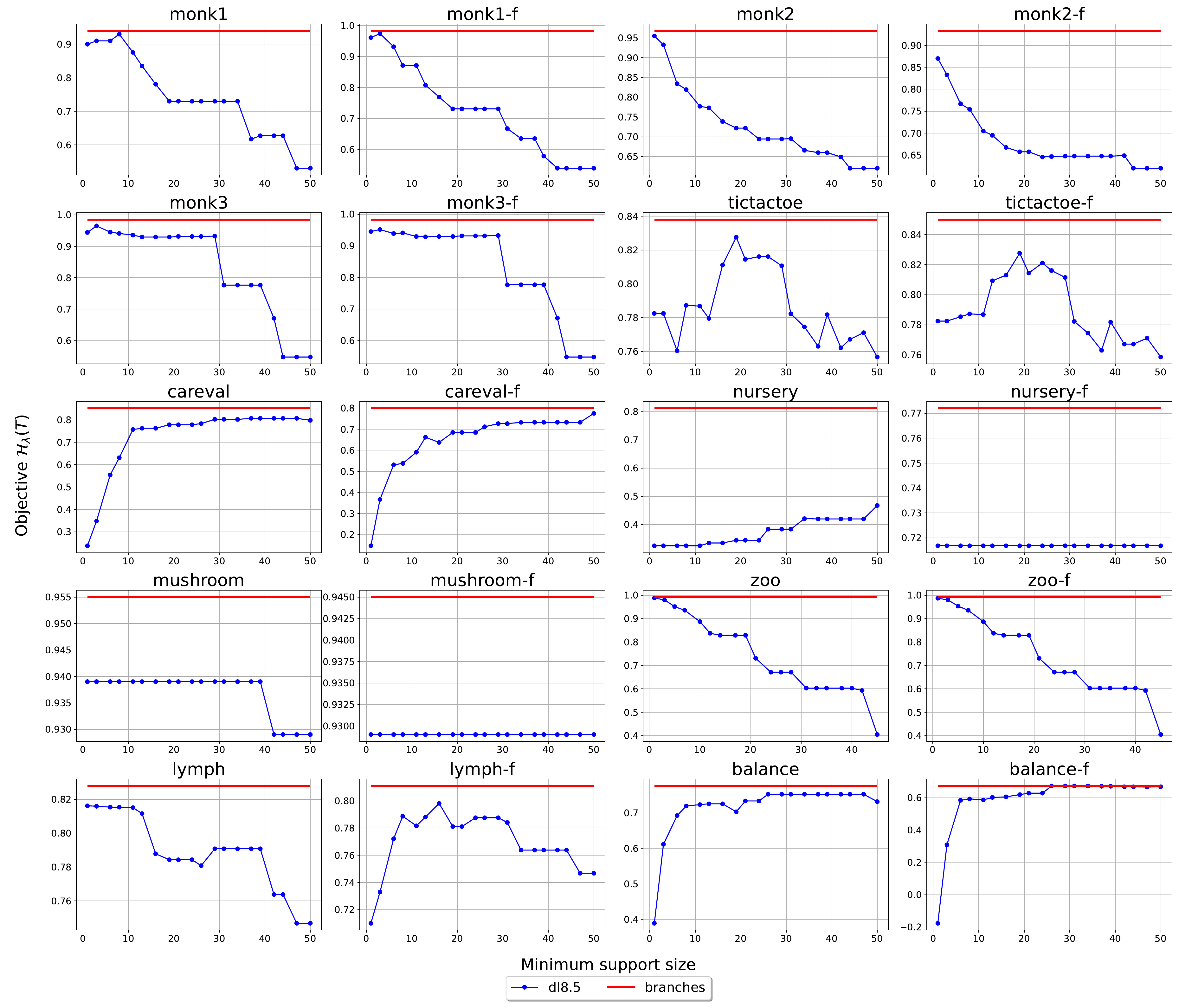}
    \caption{Comparing $\objectiver\left( T\right)$ of the proposed solutions by DL8.5 for different values of the minimum support size with the \branches~baseline. The maximum depth of DL8.5 is set to the depth of the true optimal solution, which we derive using \branches.}
    \label{fig:supports}
\end{figure}

\begin{figure}[t]
    \centering
    \includegraphics[width=1\linewidth]{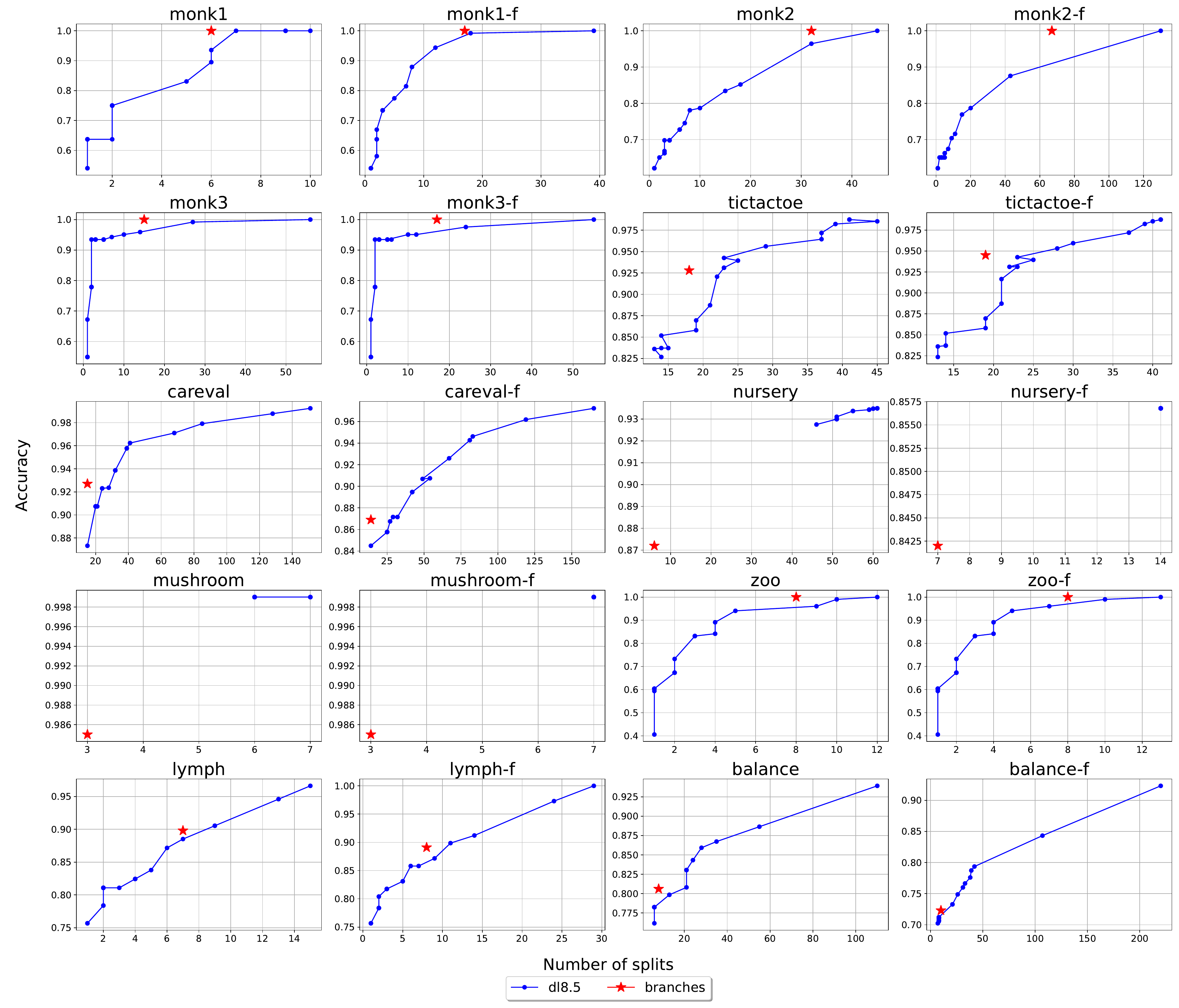}
    \caption{Comparing the Accuracy and number of splits of the induced solution by DL8.5 for different values of the minimum support size with the \branches~baseline.}
    \label{fig:supports-2}
\end{figure}

\subsection{Comparison across a wide range of maximum depths}
\label{appendix:branches-depth-analysis}

We compare the performance of \branches, GOSDT and STreeD on a set of maximum depth values ranging from $4$ to $20$. For all the algorithms, we compare the objective $\objectiver$, accuracy and number of splits of the proposed solution in addition to the execution time. Moreover, we compare GOSDT and \branches~in terms of the number of iterations. We also compare the depth of the proposed solutions between \branches~and STreeD only because, to our knowledge, the implementation of GOSDT does not provide this metric. The legends of the figures contain:
\begin{itemize}
    \item \textbf{branches:} \branches~applied to binary encoding.
    \item \textbf{branches-o:} \branches~applied to ordinal encoding.
    \item \textbf{gosdt:} GOSDT applied to binary encoding.
    \item \textbf{streed:} STreeD applied to binary encoding.
    \item \textbf{streed1:} STreeD with the depth $2$ solver, introduced by \citet{demirovic2022murtree}, disabled. The reason we introduce this baseline is to assess the contribution of the depth $2$ solver to STreeD's performance. We shall see that this option improves STreeD's performance significantly.
\end{itemize}
The results are reported in Figures 19 to 40. We start by discussing the performance of GOSDT. Interestingly, the objective $\objectiver$ reported for GOSDT does not match the one reported for \branches~and STreeD. This might indicate that the implementation of max depth for GOSDT forces the solution to have a depth strictly lower than max depth, unlike \branches~and STreeD where the solution is allowed to reach max depth. Nevertheless, this does not undermine the following discussion. Surprisingly, GOSDT performs often worse when limiting its maximum depth than otherwise. There are even cases where it runs out of memory even though this phenomenon has not been observed in \cref{tab:branches-experiments-large-depths}. This happens for mushroom, mushroom-f, lymph, lymph-f, tic-tac-toe, krvskp, nursery and nursery-f, hence why GOSDT's results are missing in those figures. We note that this phenomenon has been observed by \citet{mctavish2022fast}, the authors that incorporate the maximum depth parameter into GOSDT. They state:
\begin{center}
    \textit{Interestingly, using a large depth constraint is often less efficient than removing the depth constraint entirely for GOSDT, because when we use a branch-and-bound approach with recursion, the ability to re-use solutions of recurring sub-problems diminishes in the presence of a depth constraint.} 
\end{center}
On the other hand, despite being a BFS method as well and as such being more memory consuming than DFS methods, \branches~never ran into this issue in both sets of experiments. This is likely explained by \branches' ability to terminate significantly earlier than GOSDT in terms of iterations, and thus terminating before running into memory issues. This large discrepancy in the number of iterations was observed in \cref{tab:branches-experiments-large-depths} and is further observed in all the experiments in this section. Furthermore, except on few cases such as balance, \branches~dominates GOSDT in terms of execution speed.

On these experiments the comparison with STreeD is more insightful. A common pattern is that STreeD dominates \branches~in terms of speed for small depths up until a certain point where \branches~becomes the dominating method. This is best observed in tic-tac-toe-f, car-eval-f, nursery-f, zoo and balance-f. We note also that in some cases such as balance, lymph-f, lymph and tic-tac-toe, STreeD proposes better solutions altogether than \branches~because the latter reaches timeout, albeit for higher depths we cannot even compare the solutions of \branches~and STreeD because STreeD does not even propose solutions then. A very insightful experiment here is mushroom. We have seen in \cref{tab:branches-experiments-large-depths} that STreeD performs exceptionally well on mushroom and mushroom-f even for a large maximum depth. The experiments on this section further support this observation. Thus we naturally wonder: \textbf{Why does STreeD perform exceptionally well on the mushroom datasets? Is it the DFS strategy or something else?} To investigate this, we looked into a powerful tool in STreeD's arsenal, the depth $2$ solver that was introduced by \citet{demirovic2022murtree}. This solver allows for finer estimates to be computed early on, it has been proven to yield significant runtime improvements, and moreover neither GOSDT nor \branches~utilise it for now. Fortunately, STreeD's implementation allows the disabling of the depth $2$ solver. Indeed, the depth $2$ solver turns out to be paramount to STreeD's performance, without it, STreeD always runs out of time on the mushroom datasets. Moreover, now \branches~largely dominates STreeD across the full range of max depth values, except on very few cases such as depths $13, 14$ and $15$ in balance-f. With this, we conclude that STreeD's better performance on mushroom and smaller depths is not due to its DFS strategy, but rather to the depth $2$ solver. This observation motivates us to consider adapting the depth $2$ solver to the purification bound and incorporating it in a future version of \branches. This is a promising idea that could push \branches~to dominate STreeD for small depths as well.

Next we discuss the anytime behaviour. We have seen in \cref{tab:branches-experiments-large-depths} that STreeD lacks the anytime behaviour unlike GOSDT and \branches, which hinders its applicability. To cite a few examples, STreeD (with the depth $2$ solver) starts reaching timeout from depth $7$ for lymph and depth $8$ for lymph-f and tic-tac-toe. On the other hand, notice that the solutions proposed by \branches~even after reaching timeout always dominate those proposed by GOSDT in terms of the objective $\objectiver$. This is especially the case for car-eval, balance and balance-f and we recall that we could not report GOSDT's performance on many other datasets because it runs our of memory.

Lastly, we discuss the application of \branches~to ordinal encoding. On all experiments, the optimal sparse DT is found significantly faster in this setting, even for the largest dataset nursery with $12960$ examples. This further supports the scalability potential of this property. Moreover, while it is true that from the objective's perspective, the induced solutions with ordinal encodings are not always the best compared to those induced by binary encoding, however, they happen to be better often such as in monk2-f, monk3, monk3-f, car-eval-f, nursery, nursery-f zoo, zoo-f, lymph, lymph-f, balance-f, mushroom and mushroom-f.

\begin{figure}
    \centering
    \includegraphics[width=1\textwidth]{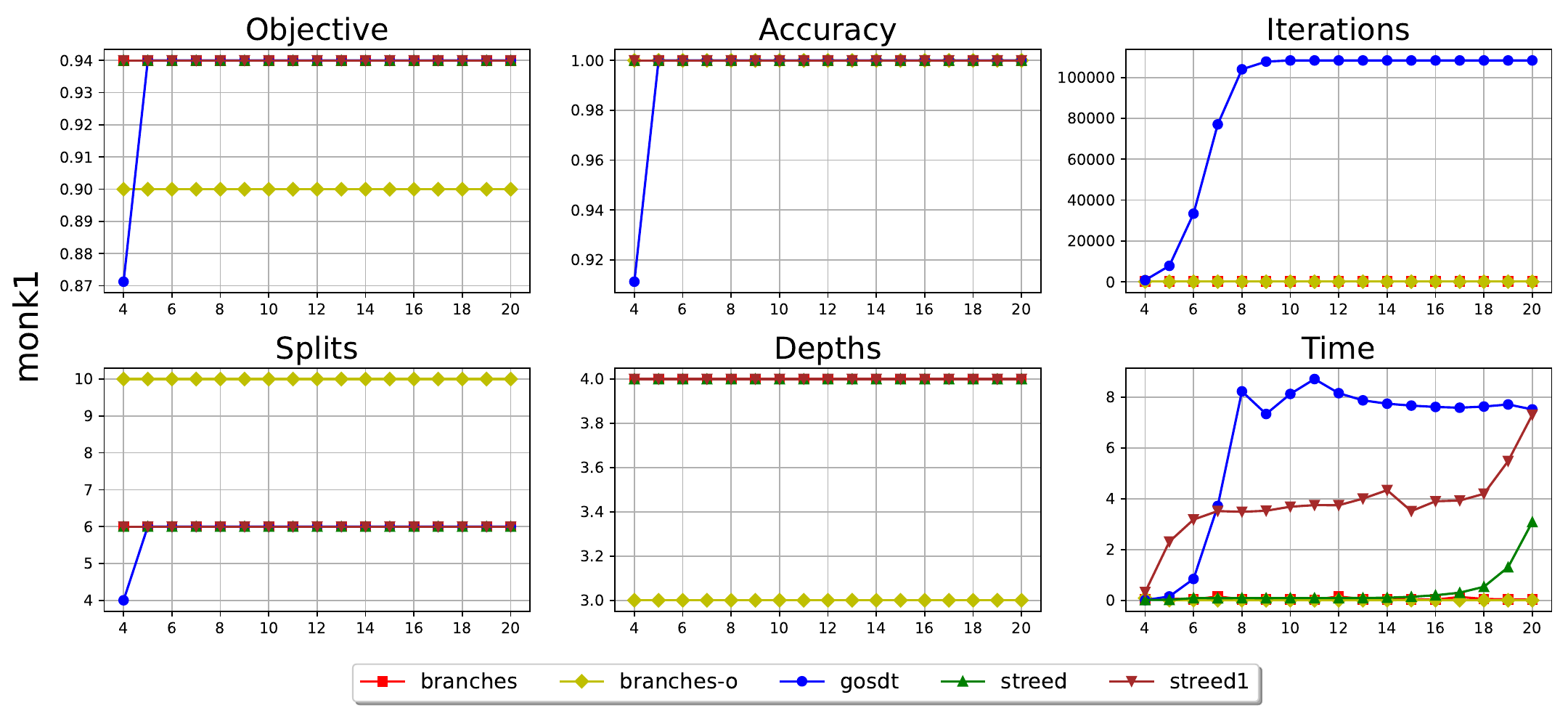}
    \caption{Depth analysis for monk1.}
    \label{fig:depths-monk1}
\end{figure}

\begin{figure}
    \centering
    \includegraphics[width=1\textwidth]{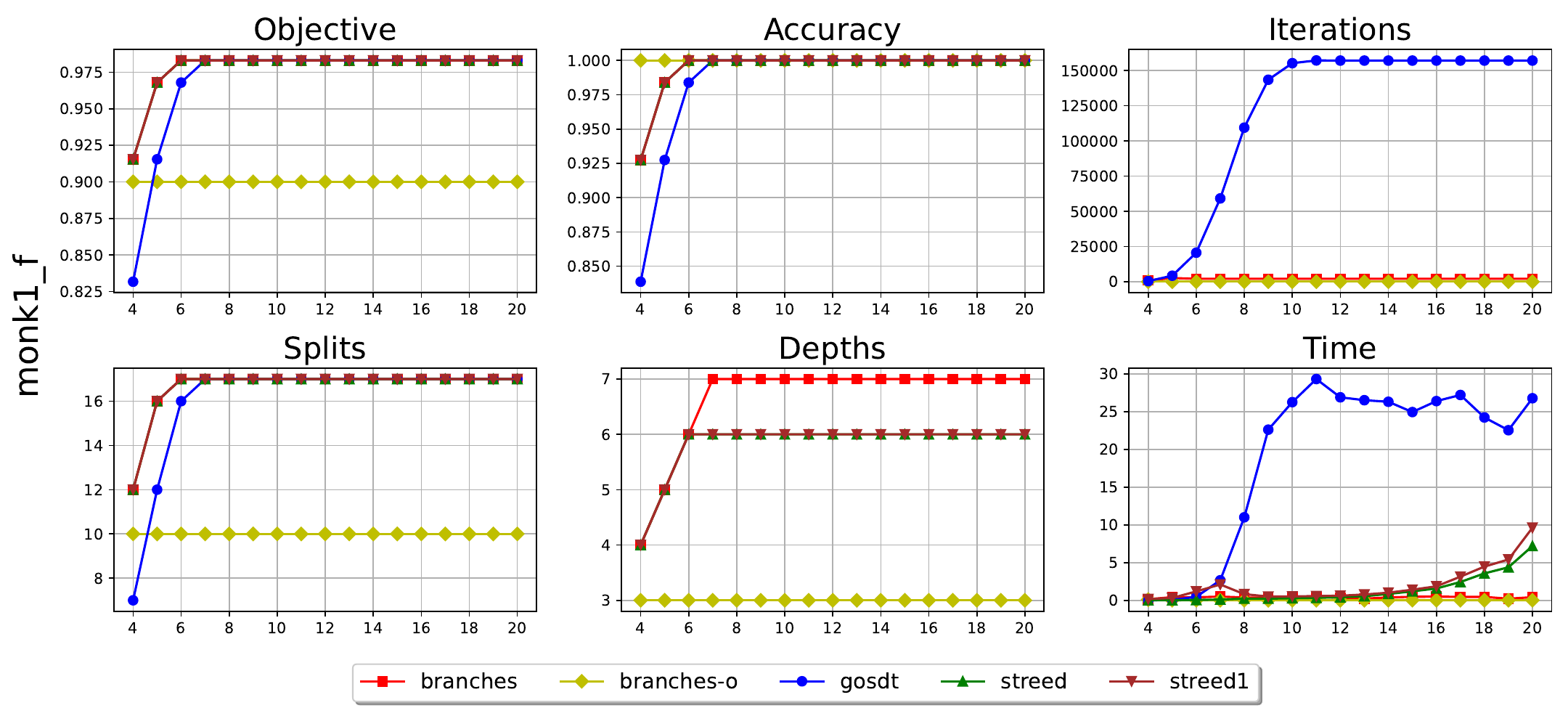}
    \caption{Depth analysis for monk1-f.}
    \label{fig:depths-monk1-f}
\end{figure}

\begin{figure}
    \centering
    \includegraphics[width=1\textwidth]{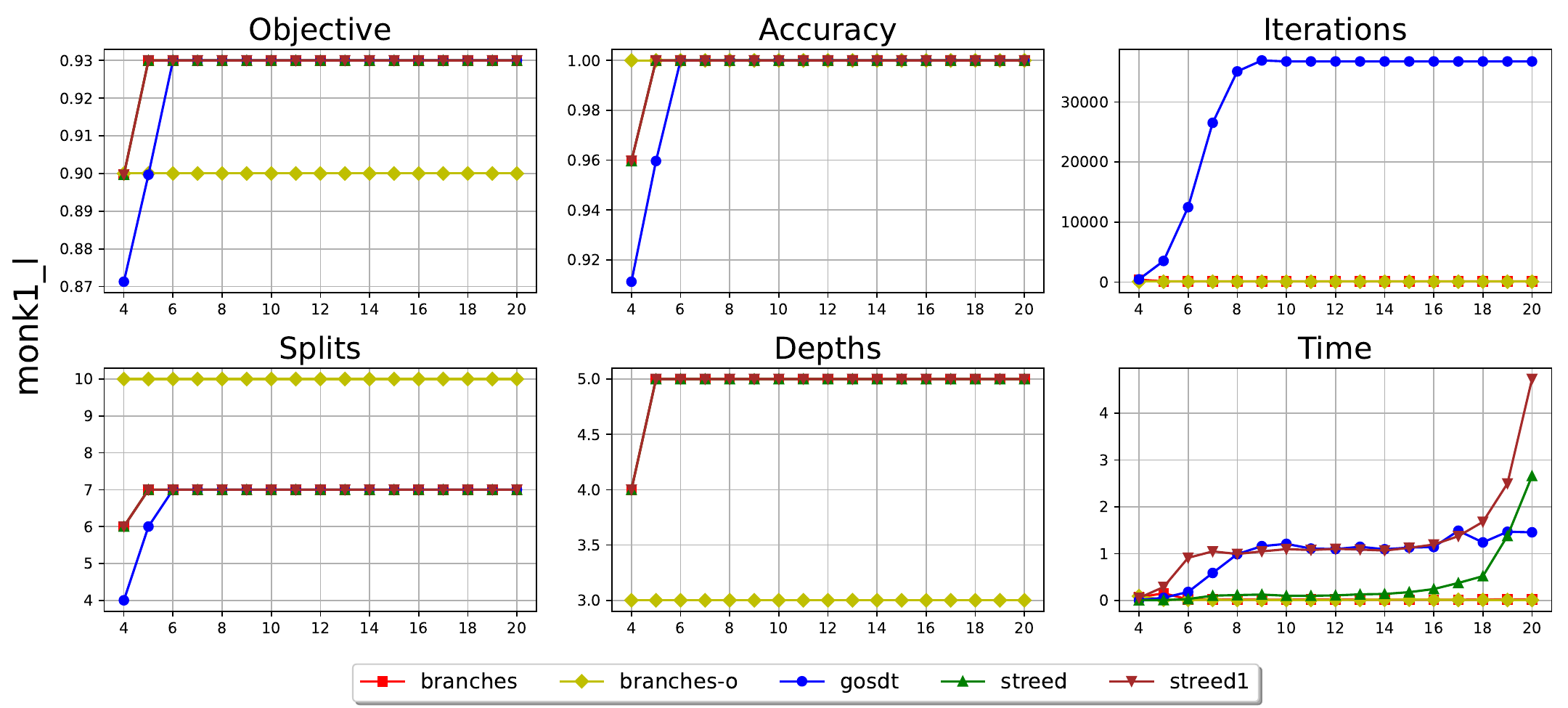}
    \caption{Depth analysis for monk1-l.}
    \label{fig:depths-monk1-l}
\end{figure}

\begin{figure}
    \centering
    \includegraphics[width=1\textwidth]{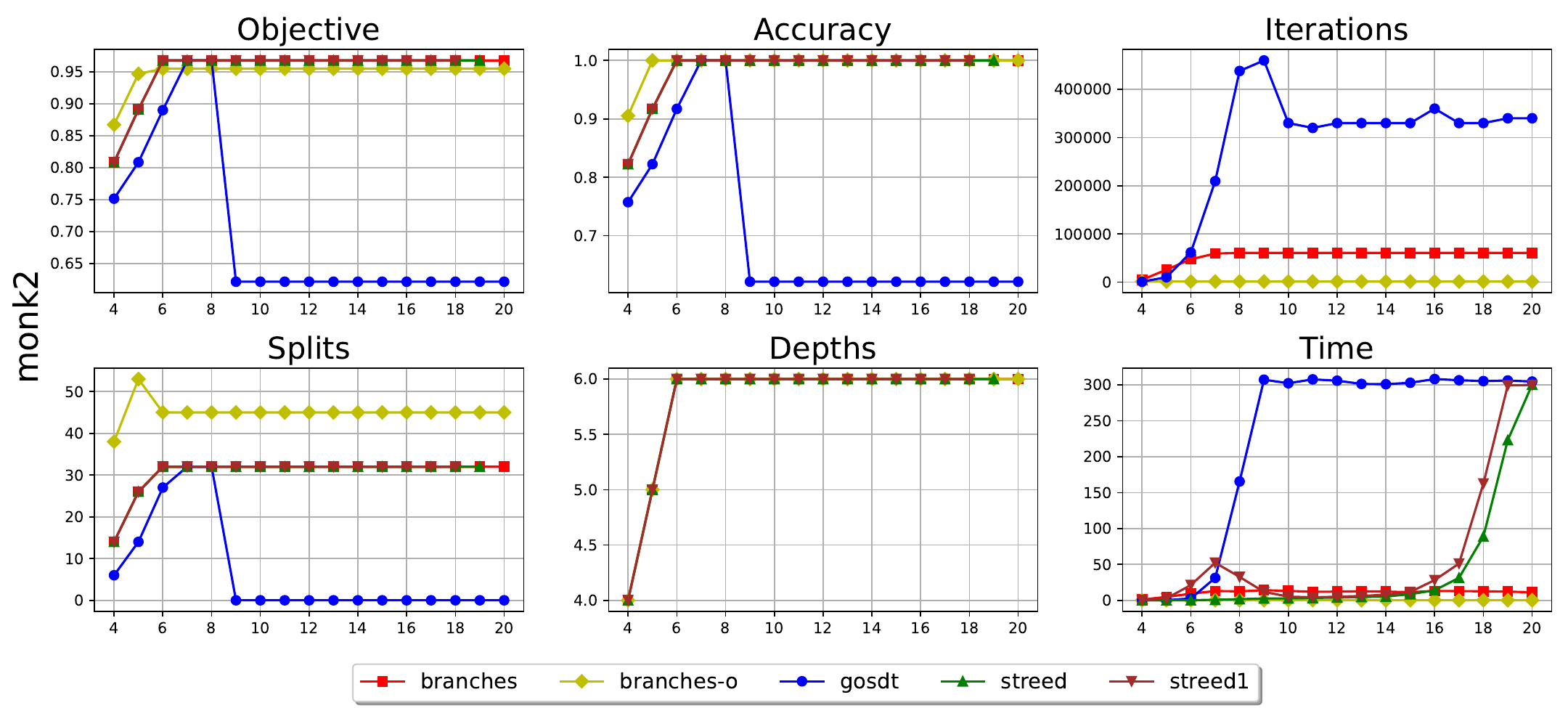}
    \caption{Depth analysis for monk2.}
    \label{fig:depths-monk2}
\end{figure}

\begin{figure}
    \centering
    \includegraphics[width=1\textwidth]{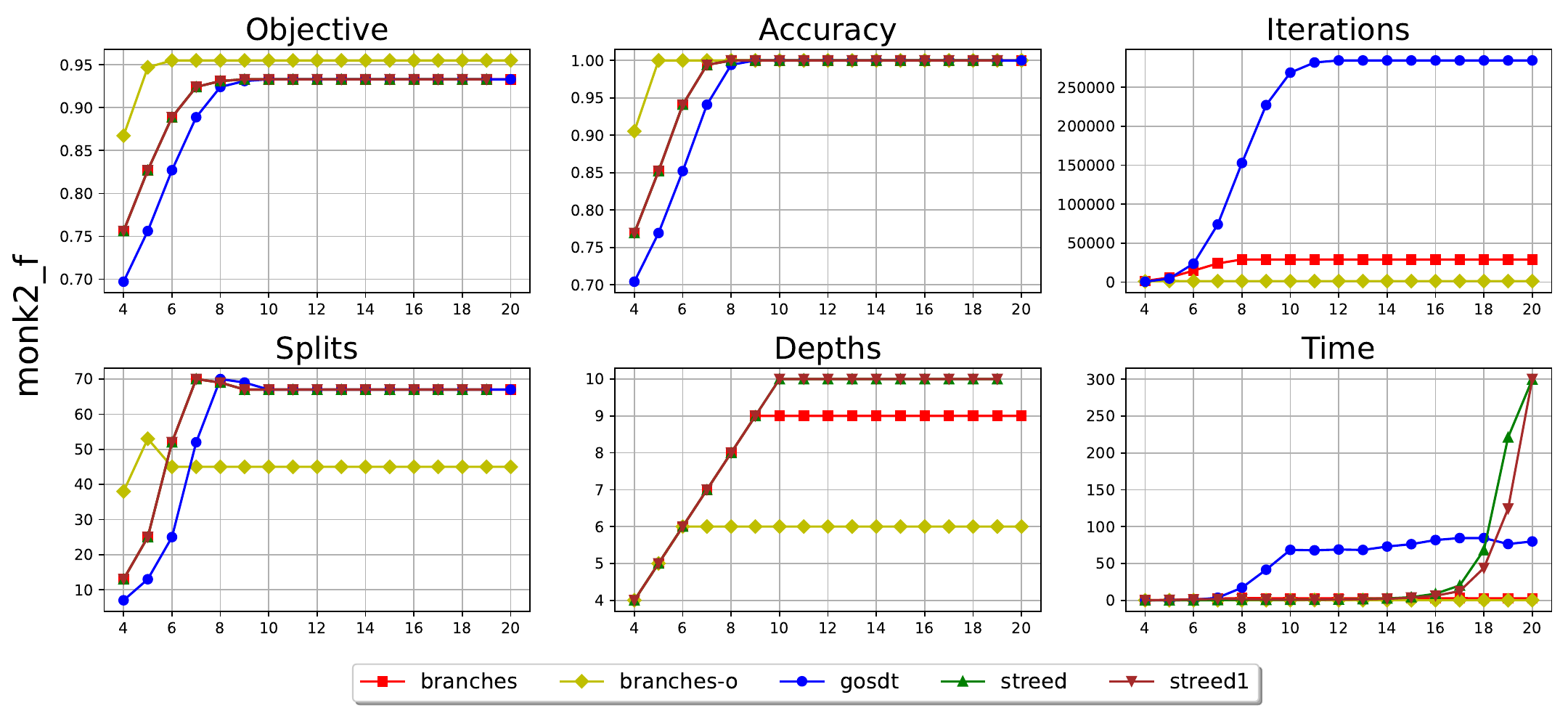}
    \caption{Depth analysis for monk2-f.}
    \label{fig:depths-monk2-f}
\end{figure}

\begin{figure}
    \centering
    \includegraphics[width=1\textwidth]{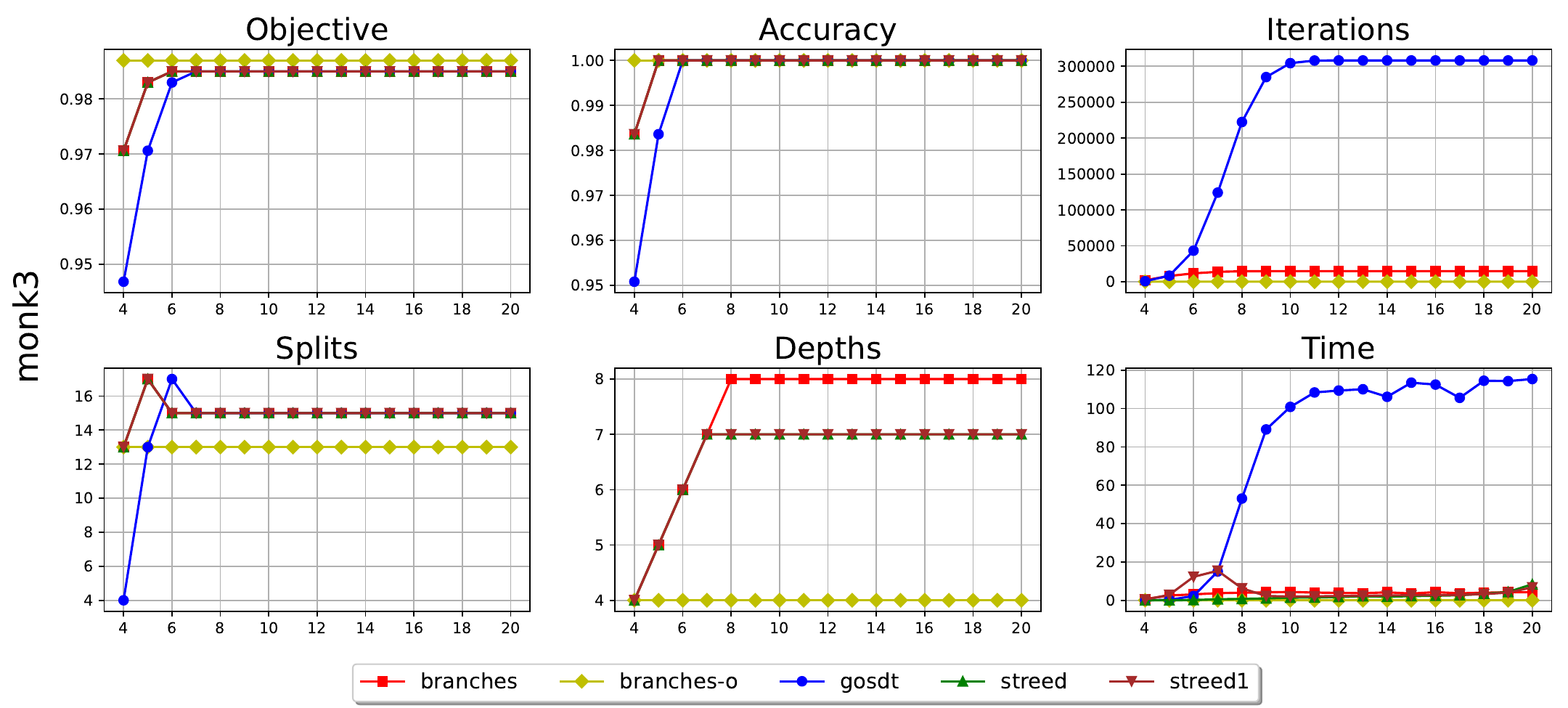}
    \caption{Depth analysis for monk3.}
    \label{fig:depths-monk3}
\end{figure}

\begin{figure}
    \centering
    \includegraphics[width=1\textwidth]{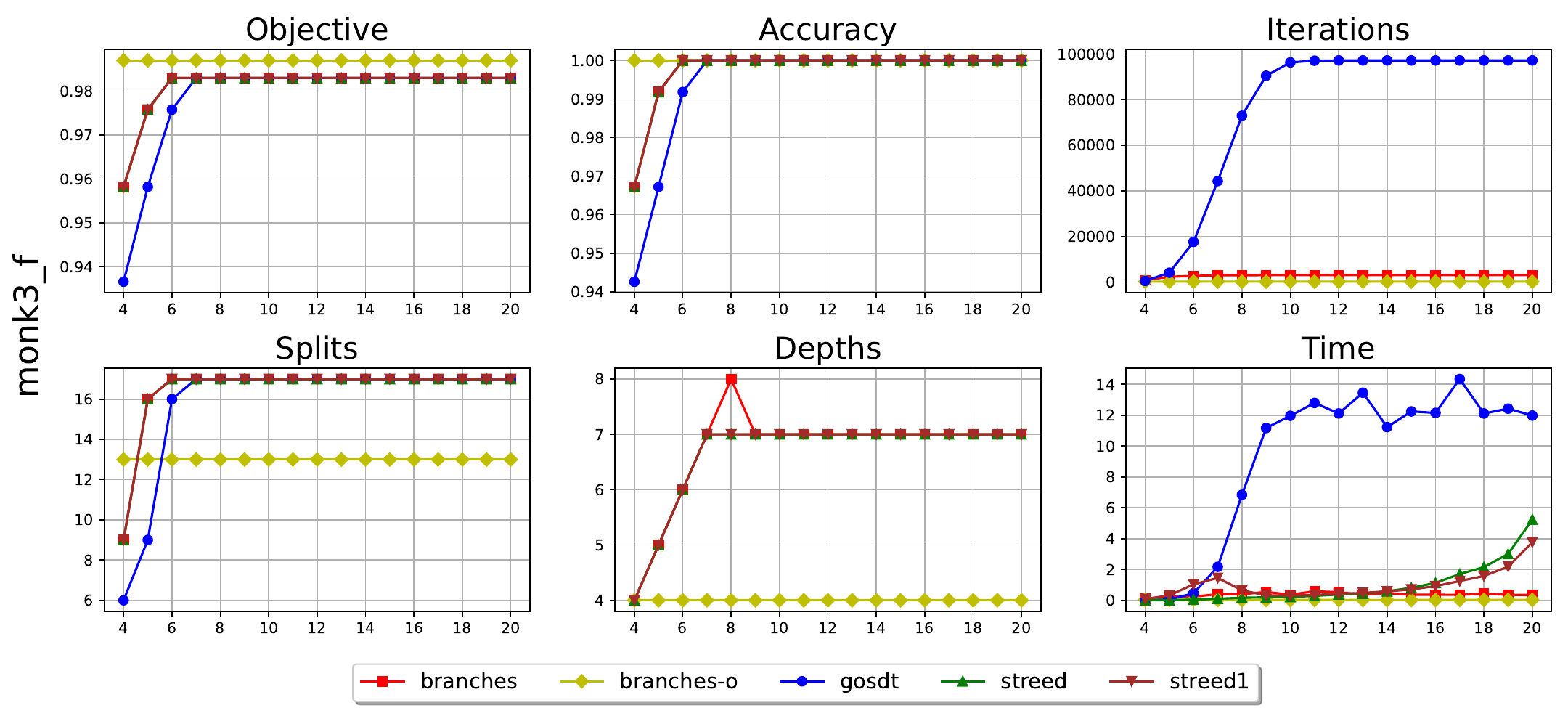}
    \caption{Depth analysis for monk3-f.}
    \label{fig:depths-monk3-f}
\end{figure}

\begin{figure}
    \centering
    \includegraphics[width=1\textwidth]{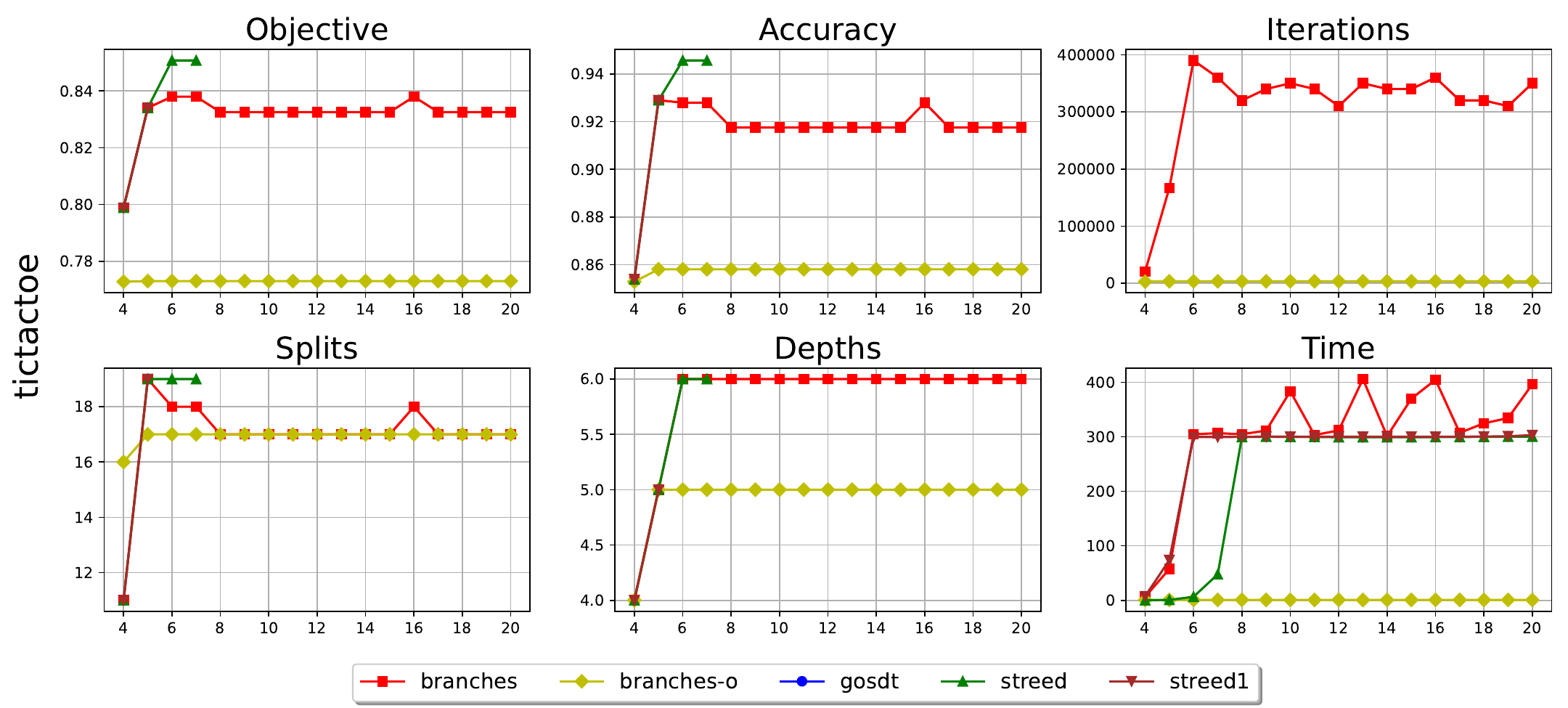}
    \caption{Depth analysis for tic-tac-toe.}
    \label{fig:depths-tictactoe}
\end{figure}

\begin{figure}
    \centering
    \includegraphics[width=1\textwidth]{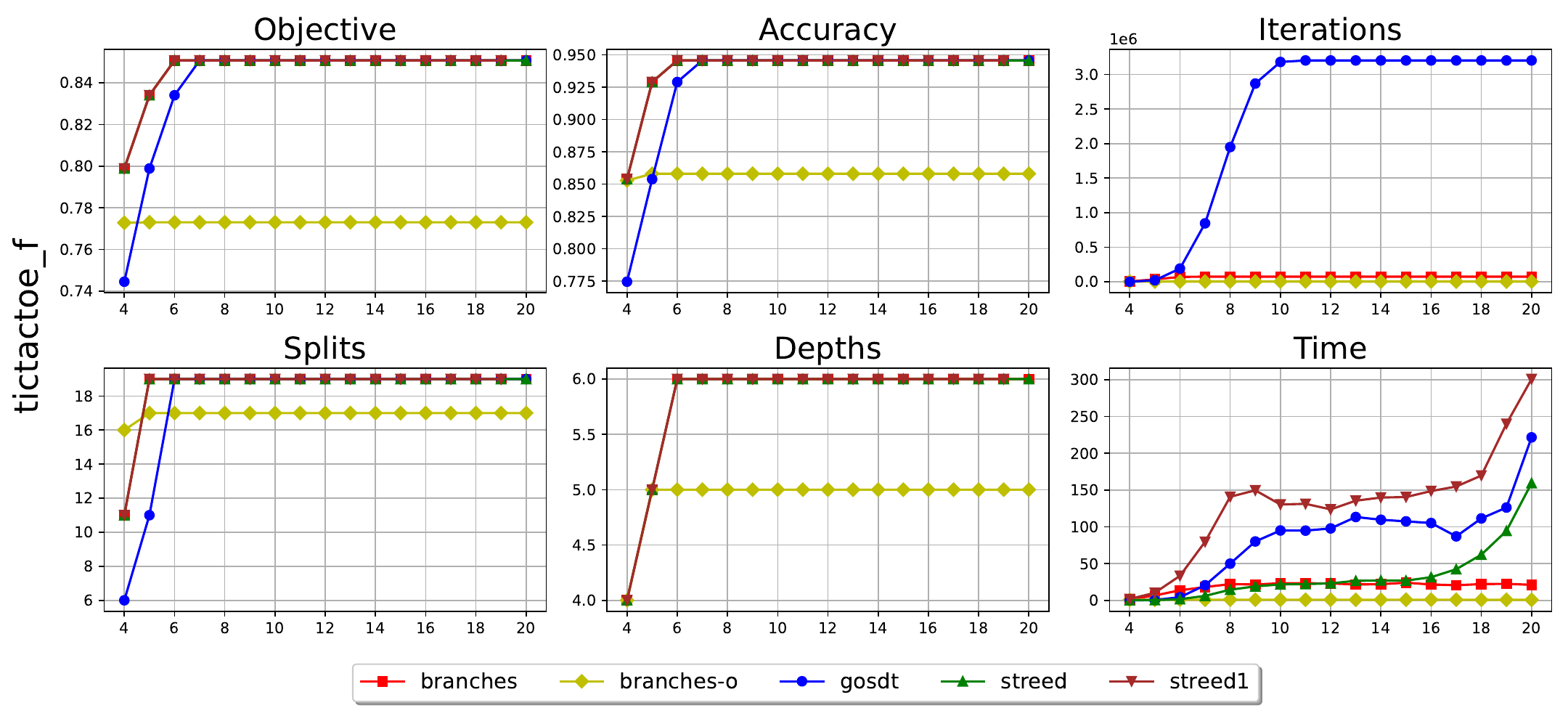}
    \caption{Depth analysis for tic-tac-toe-f.}
    \label{fig:depths-tictactoe-f}
\end{figure}

\begin{figure}
    \centering
    \includegraphics[width=1\textwidth]{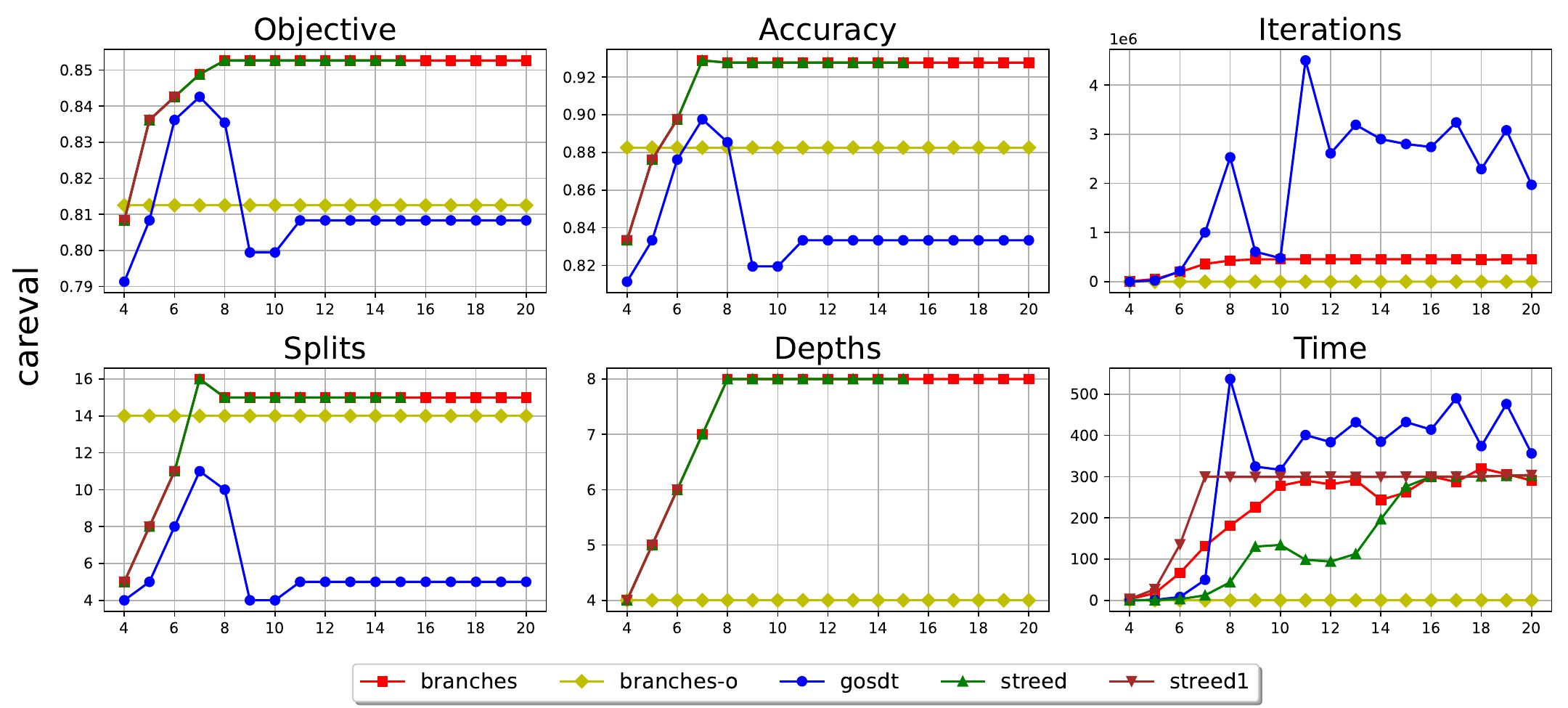}
    \caption{Depth analysis for car-eval.}
    \label{fig:depths-careval}
\end{figure}

\begin{figure}
    \centering
    \includegraphics[width=1\textwidth]{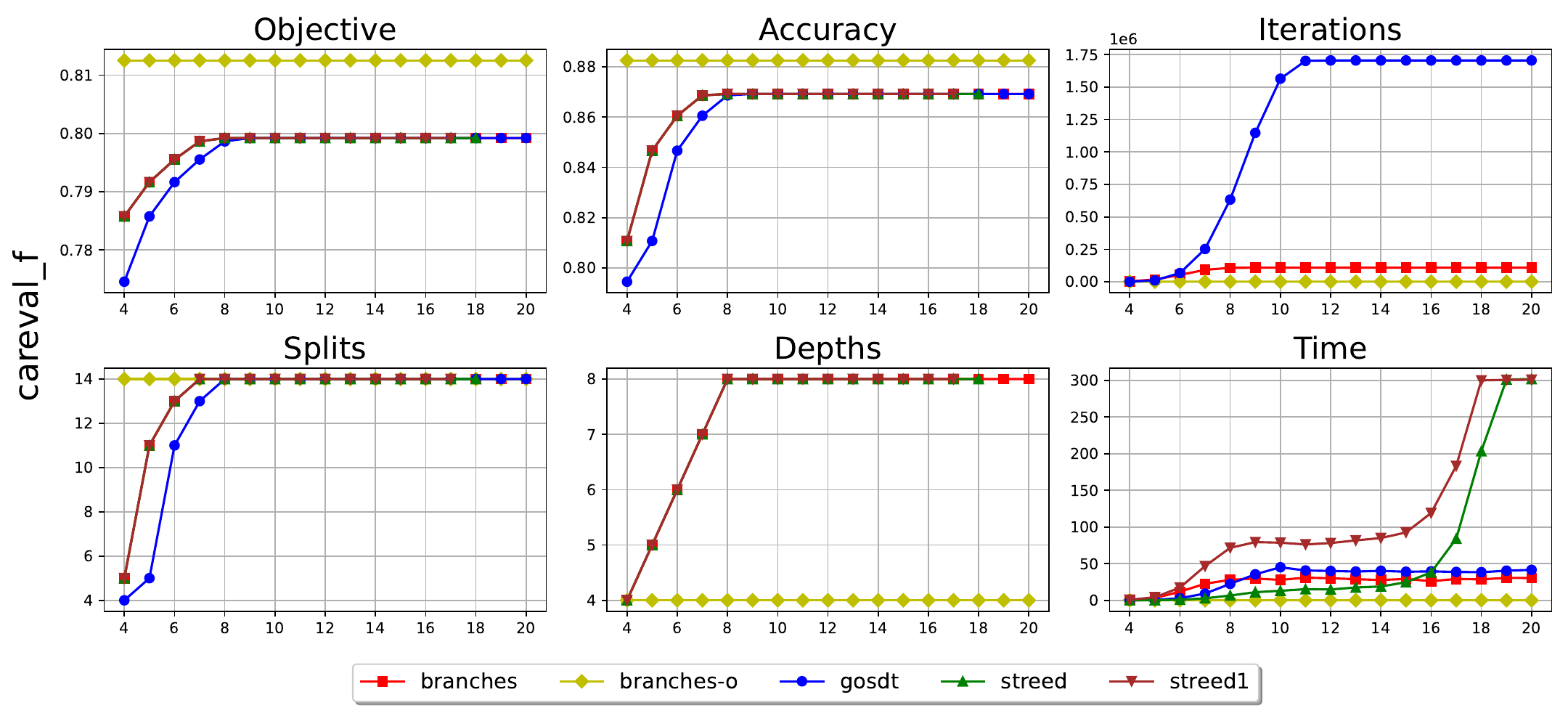}
    \caption{Depth analysis for car-eval-f.}
    \label{fig:depths-careval-f}
\end{figure}

\begin{figure}
    \centering
    \includegraphics[width=1\textwidth]{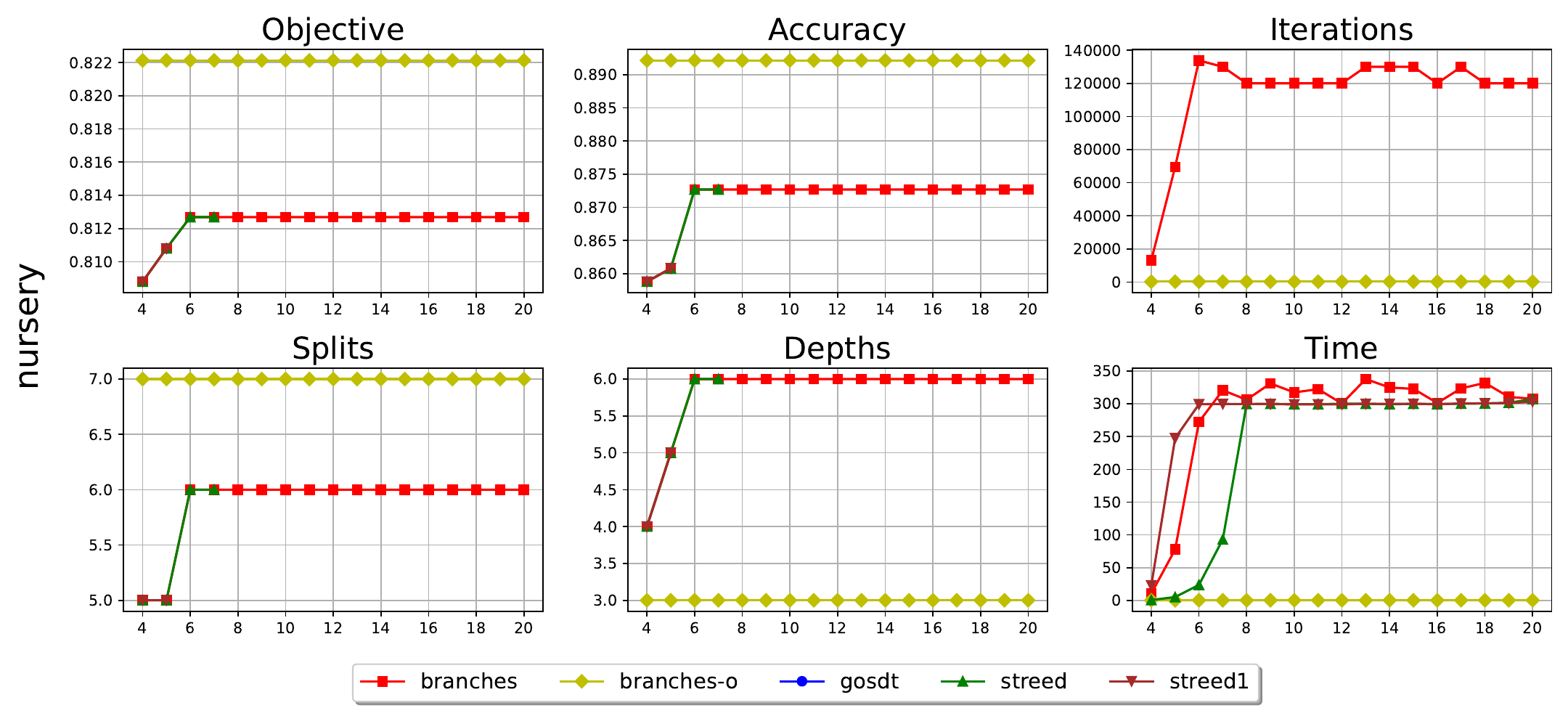}
    \caption{Depth analysis for nursery.}
    \label{fig:depths-nursery}
\end{figure}

\begin{figure}
    \centering
    \includegraphics[width=1\textwidth]{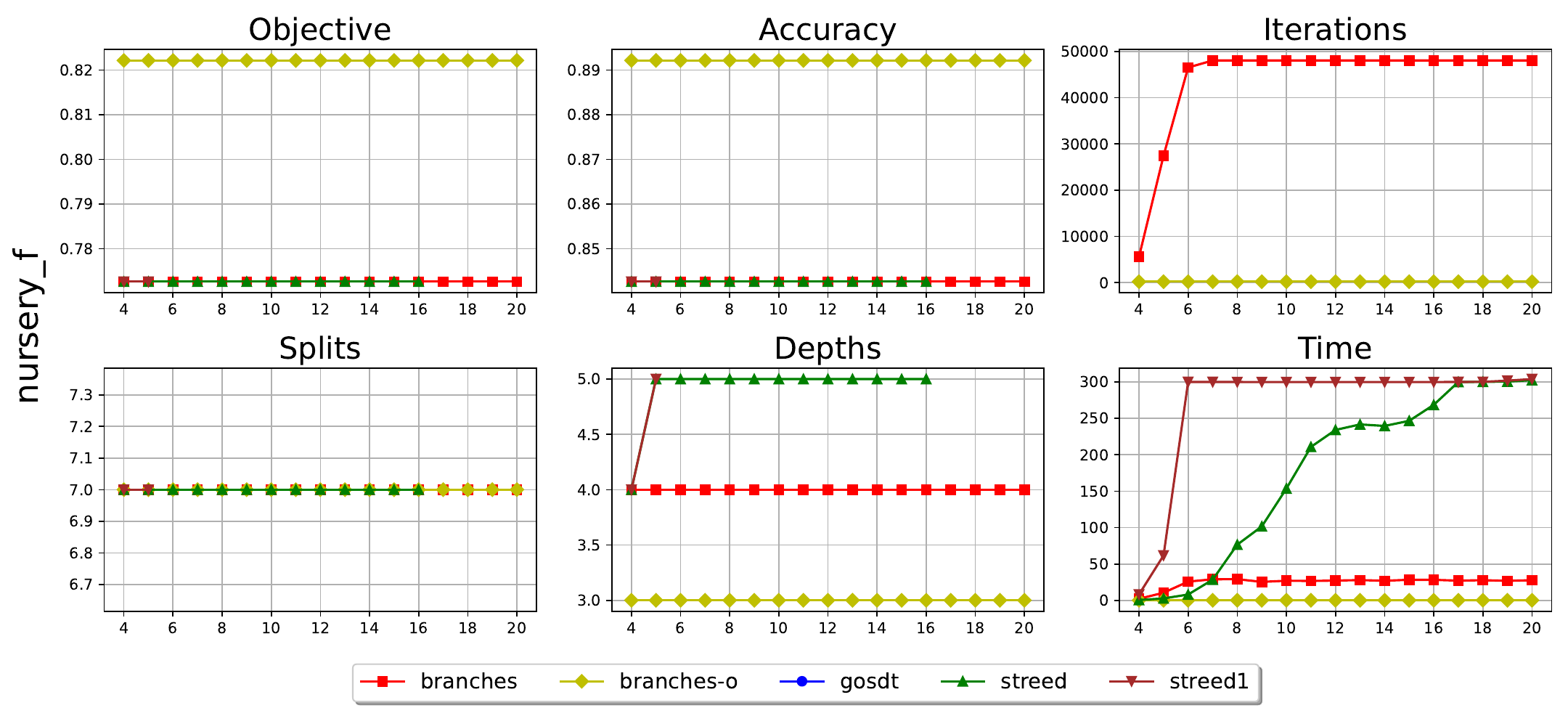}
    \caption{Depth analysis for nursery-f.}
    \label{fig:depths-nursery-f}
\end{figure}

\begin{figure}
    \centering
    \includegraphics[width=1\textwidth]{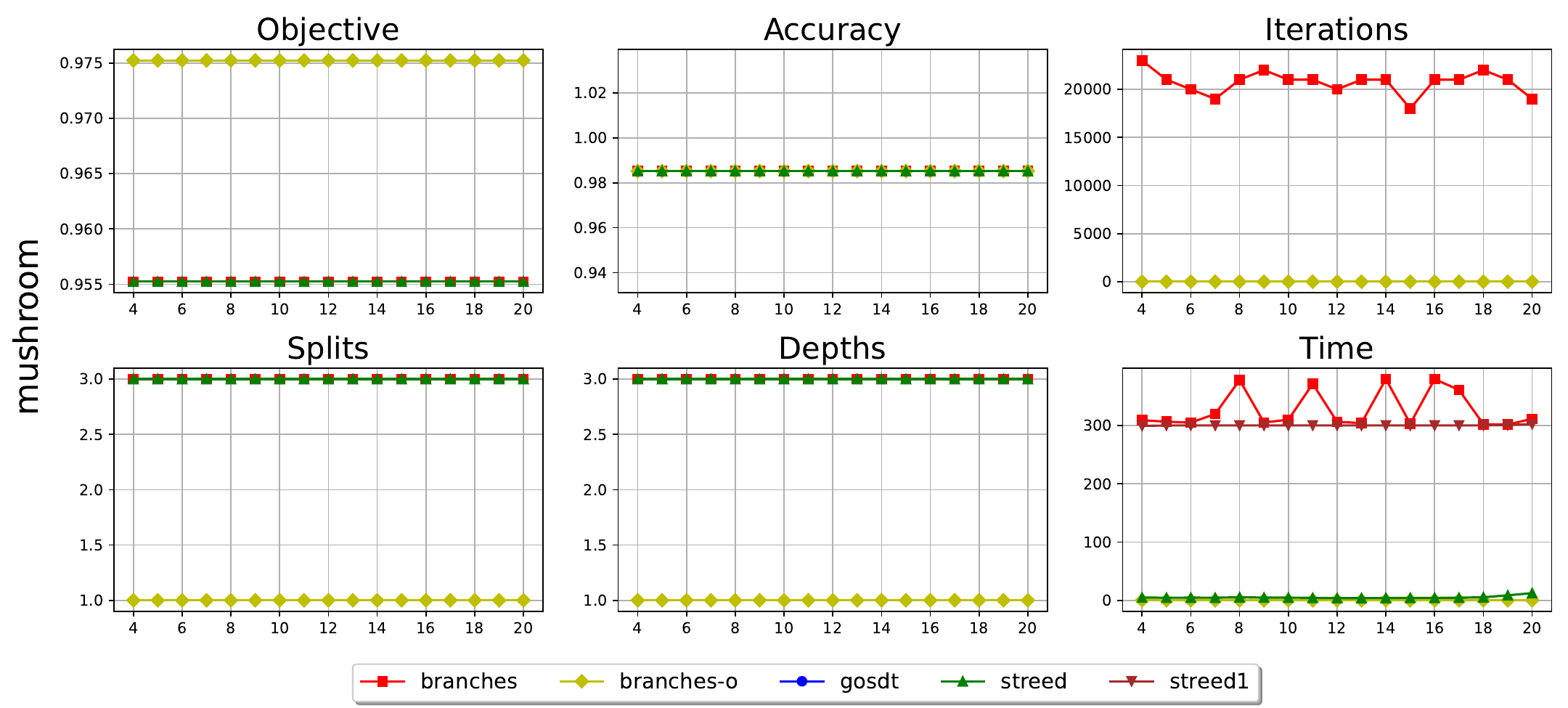}
    \caption{Depth analysis for mushroom.}
    \label{fig:depths-mushroom}
\end{figure}

\begin{figure}
    \centering
    \includegraphics[width=1\textwidth]{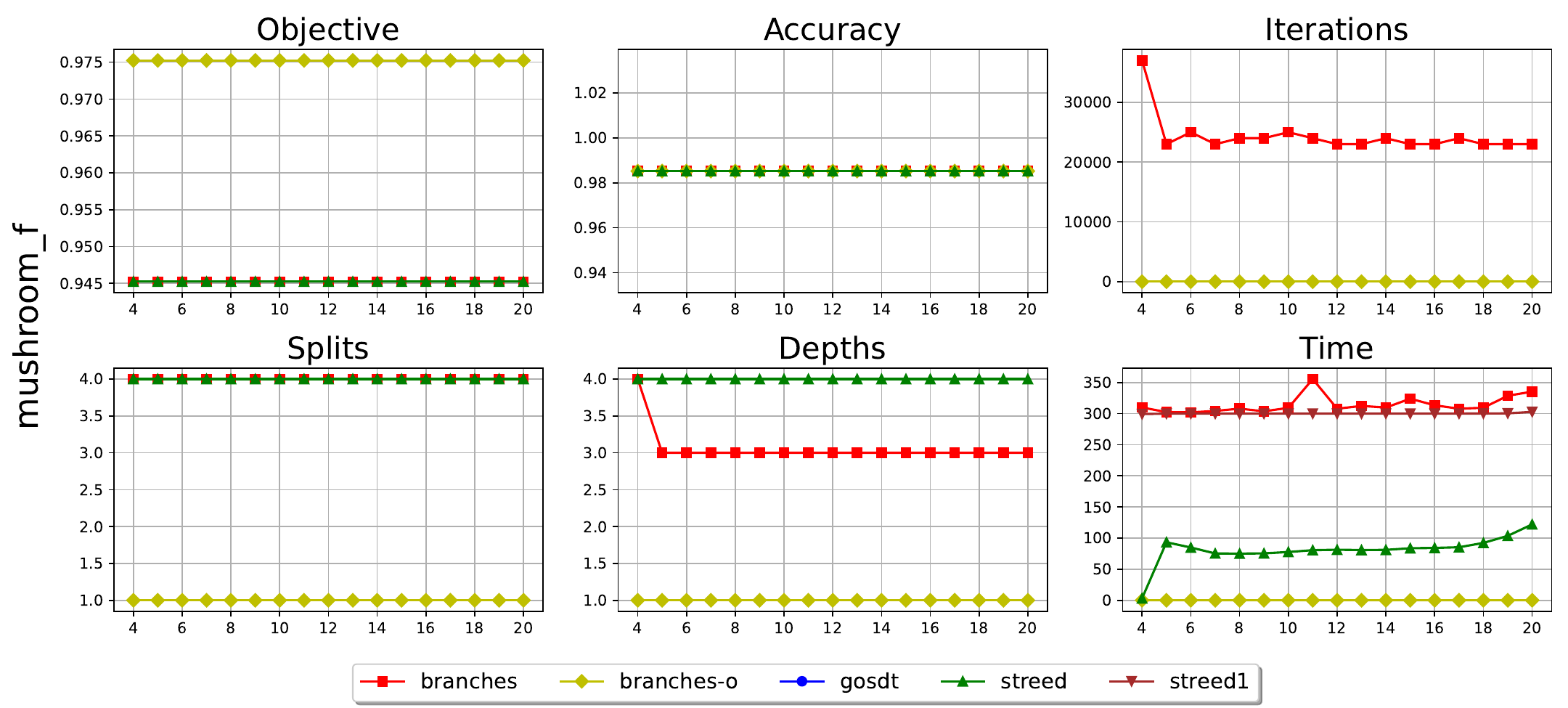}
    \caption{Depth analysis for mushroom-f.}
    \label{fig:depths-mushroom-f}
\end{figure}

\begin{figure}
    \centering
    \includegraphics[width=1\textwidth]{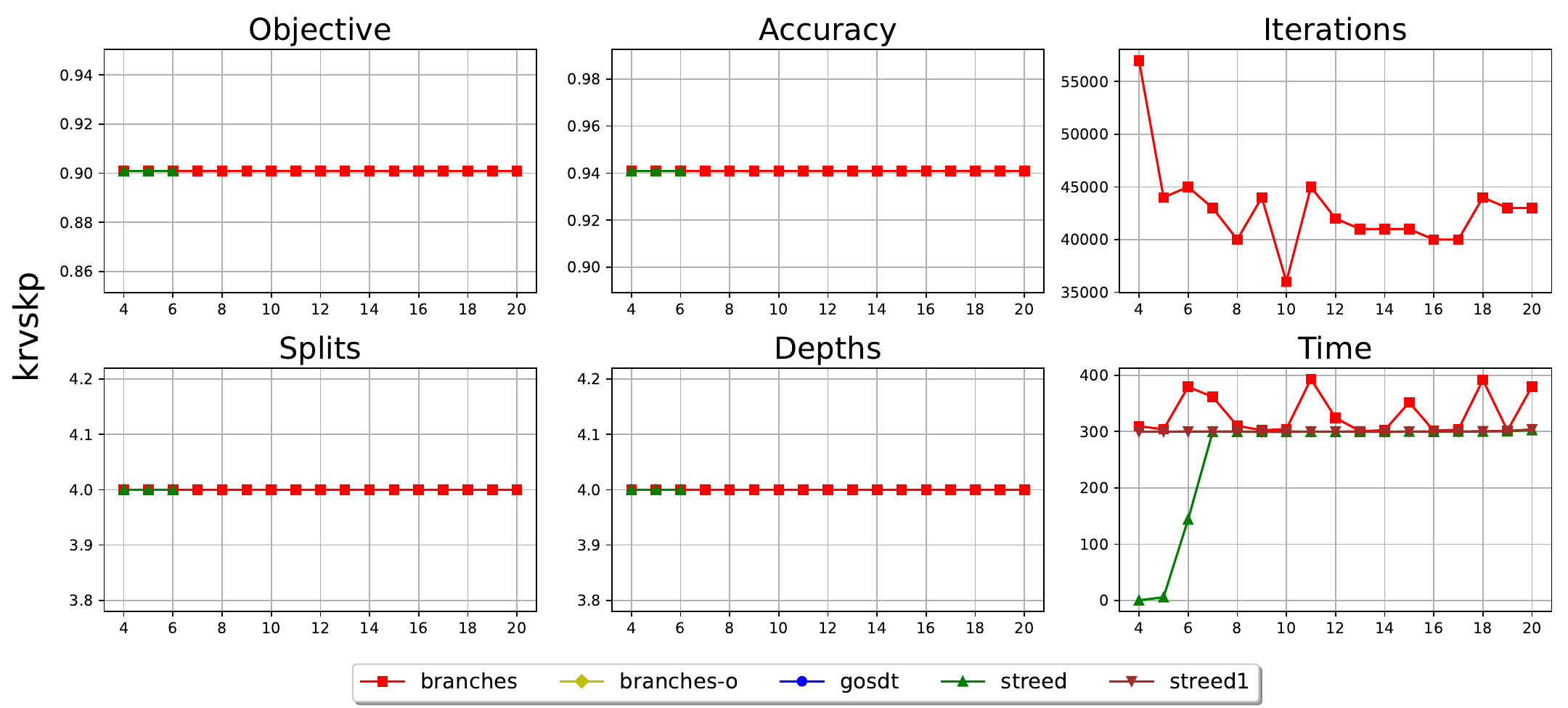}
    \caption{Depth analysis for kr-vs-kp.}
    \label{fig:depths-krvskp}
\end{figure}

\begin{figure}
    \centering
    \includegraphics[width=1\textwidth]{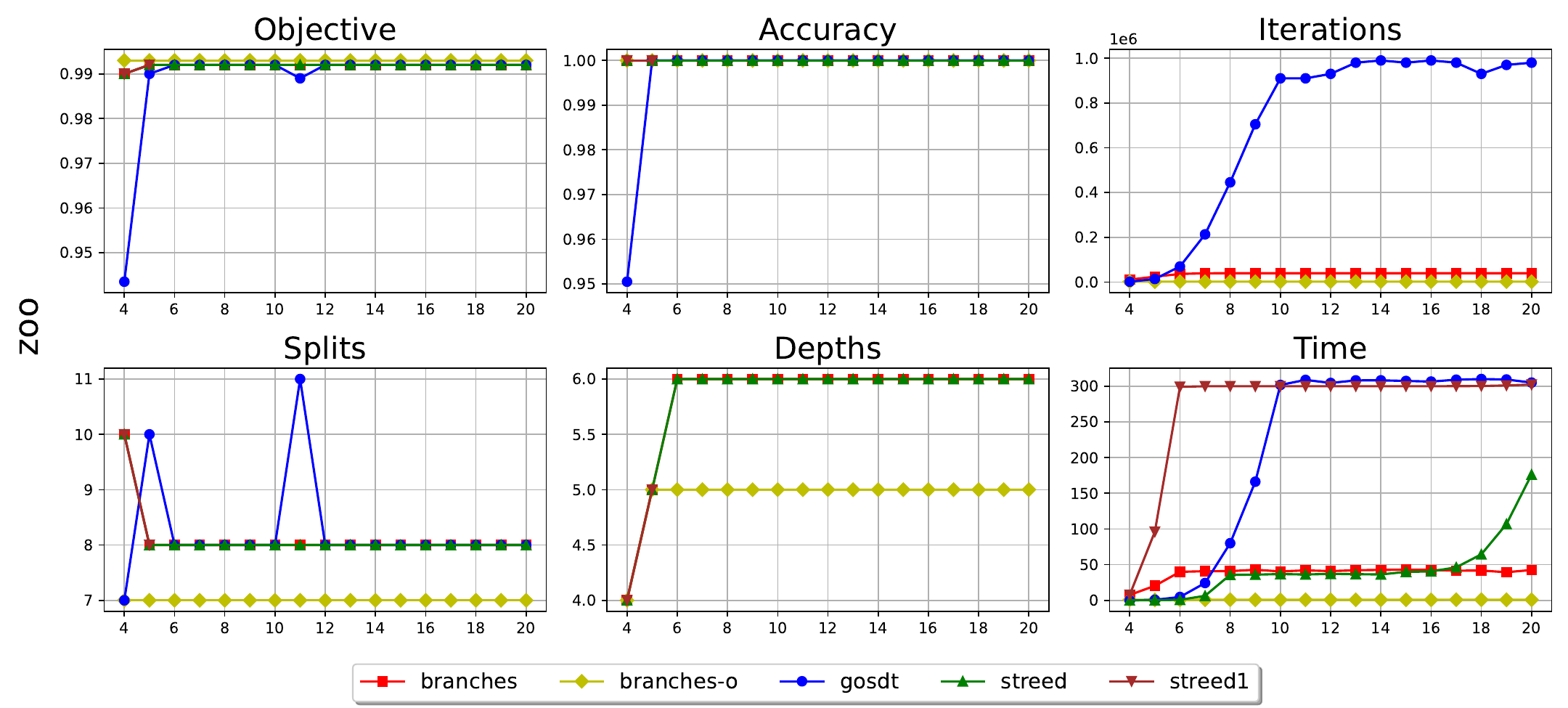}
    \caption{Depth analysis for zoo.}
    \label{fig:depths-zoo}
\end{figure}

\begin{figure}
    \centering
    \includegraphics[width=1\textwidth]{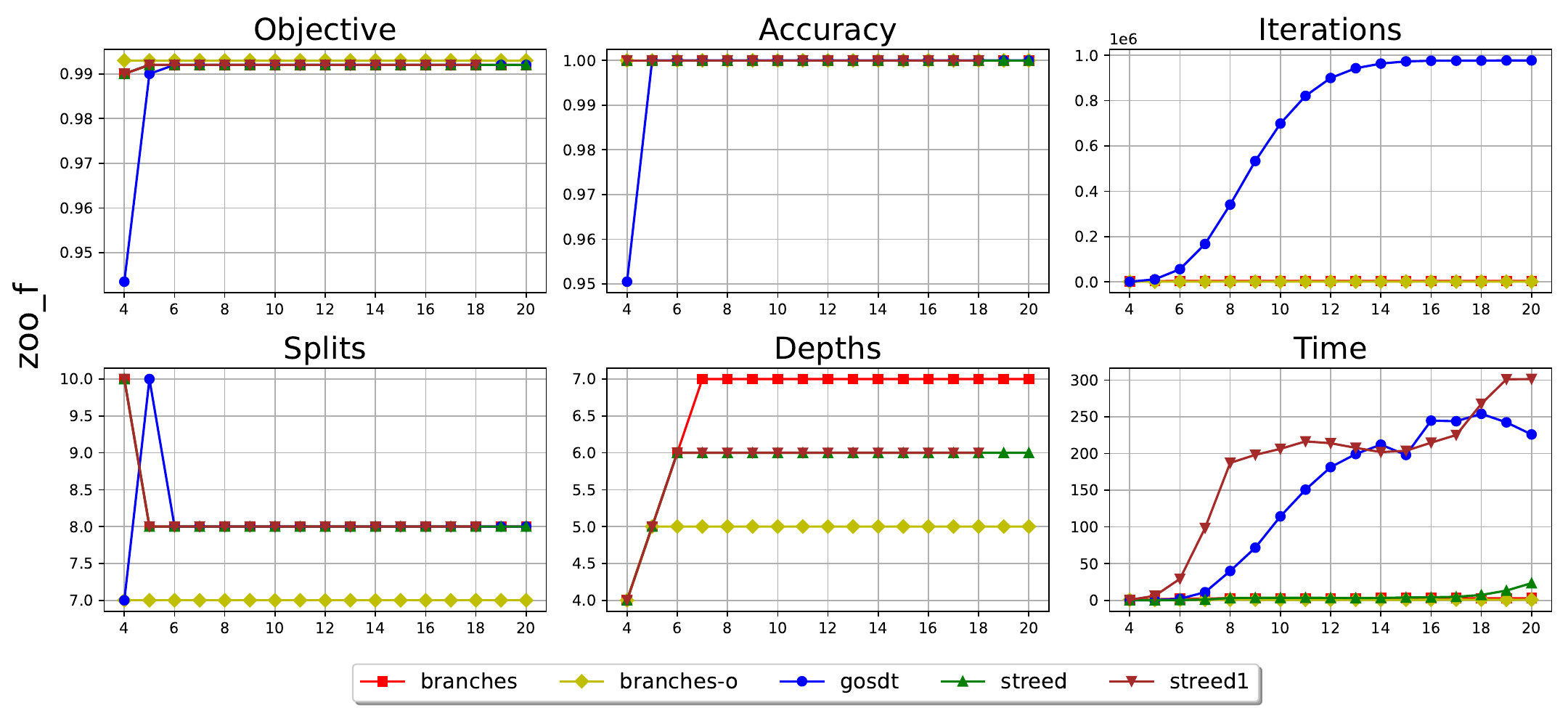}
    \caption{Depth analysis for zoo-f.}
    \label{fig:depths-zoo-f}
\end{figure}

\begin{figure}
    \centering
    \includegraphics[width=1\textwidth]{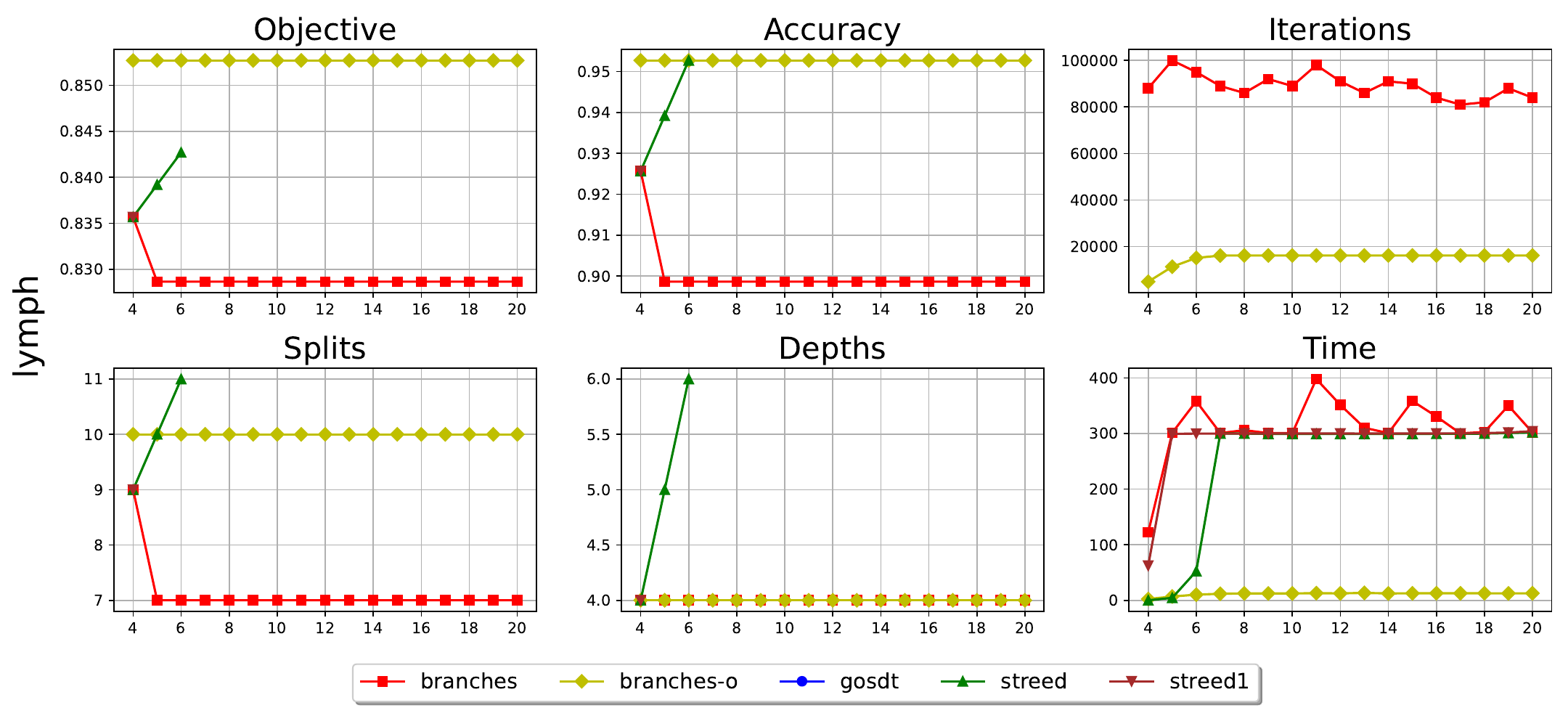}
    \caption{Depth analysis for lymph.}
    \label{fig:depths-lymph}
\end{figure}

\begin{figure}
    \centering
    \includegraphics[width=1\textwidth]{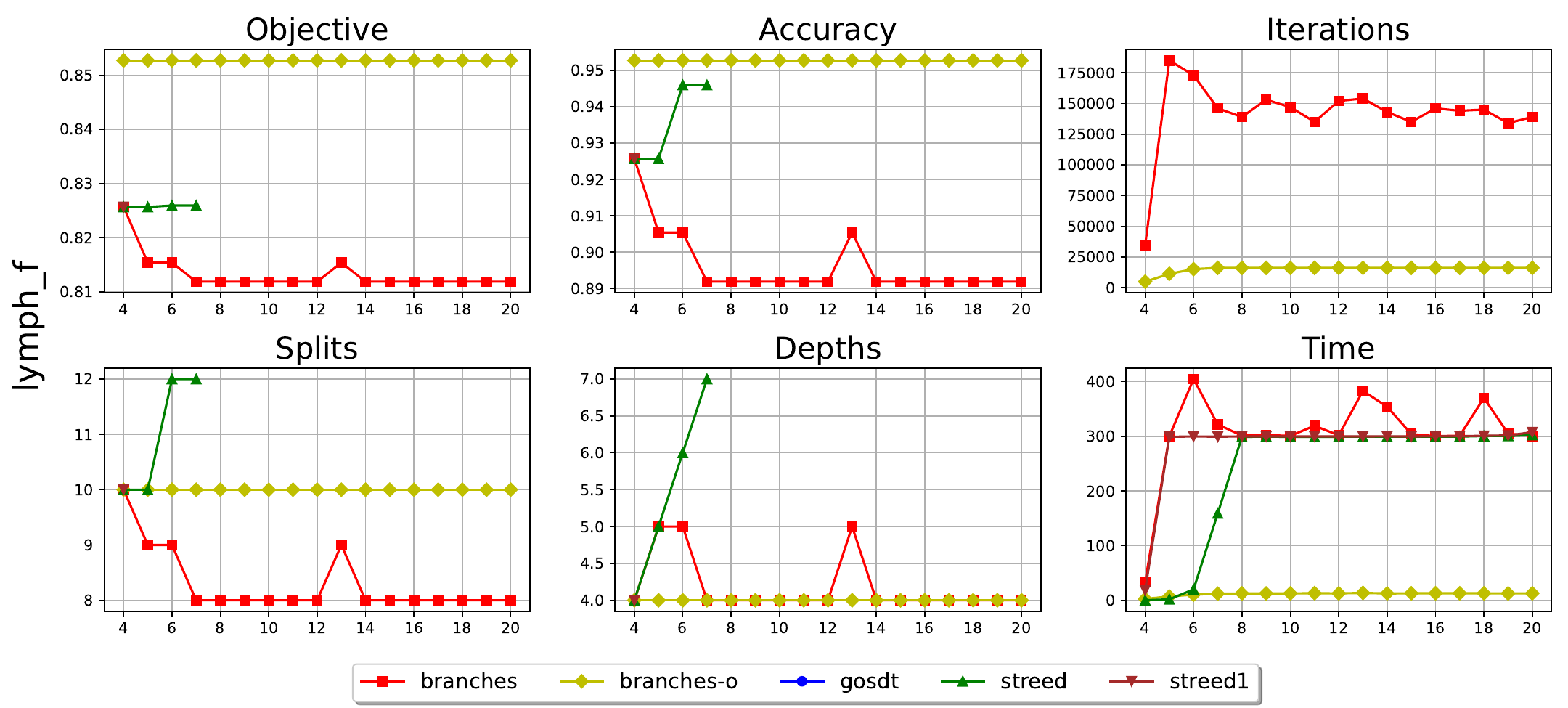}
    \caption{Depth analysis for lymph-f.}
    \label{fig:depths-lymph-f}
\end{figure}

\begin{figure}
    \centering
    \includegraphics[width=1\textwidth]{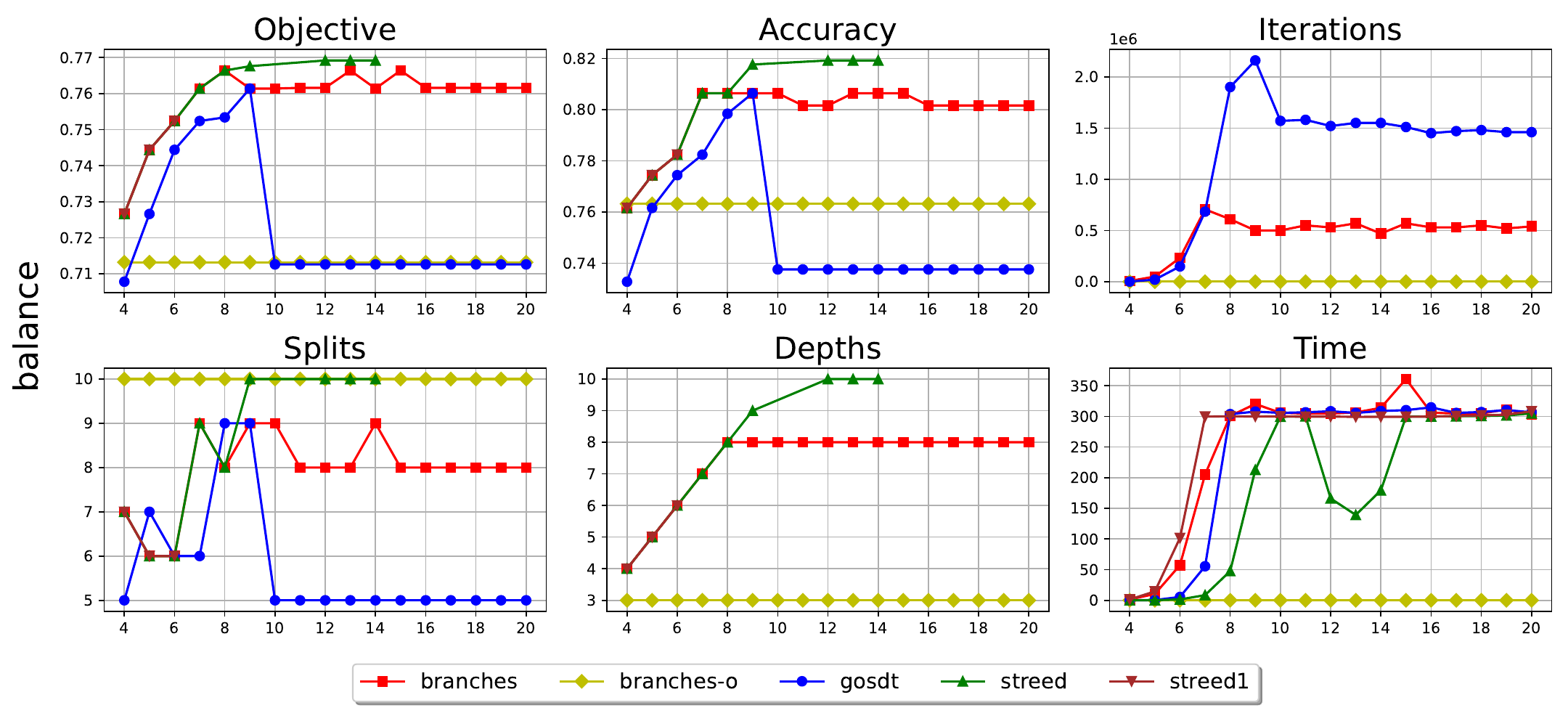}
    \caption{Depth analysis for balance.}
    \label{fig:depths-balance}
\end{figure}

\begin{figure}
    \centering
    \includegraphics[width=1\textwidth]{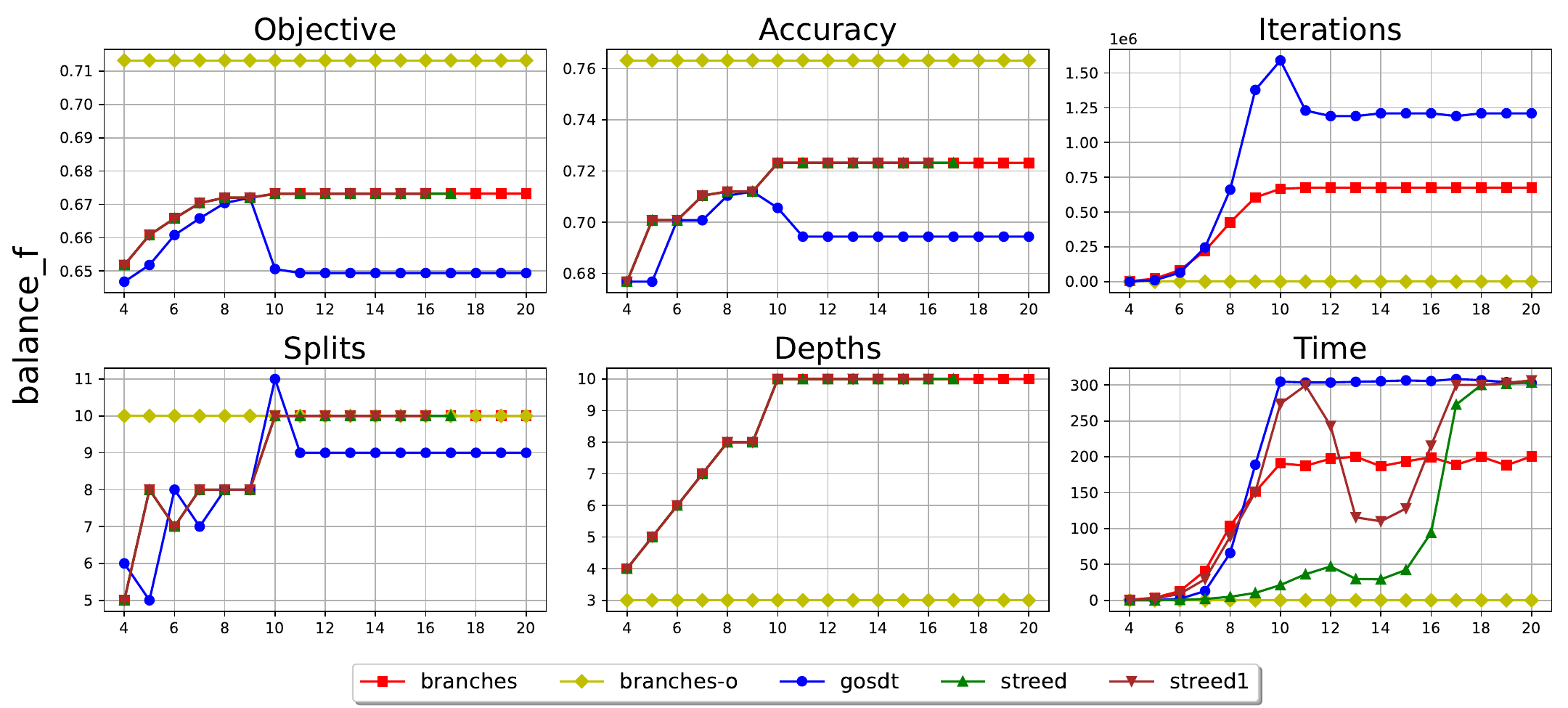}
    \caption{Depth analysis for balance-f.}
    \label{fig:depths-balance-f}
\end{figure}


\end{document}